\newcommand{\backgroundcolorlightgreen}{\cellcolor{green!10}}
\newcommand{\backgroundcolorlightblue}{\cellcolor{blue!10}}
\newtheorem{theorem}{Theorem}[section]
\newtheorem{corollary}{Corollary}[theorem]
\newtheorem{lemma}[theorem]{Lemma}
\theoremstyle{definition}
\newtheorem{definition}{Definition}[section]
\title{HAMLET: A Hierarchical Agent-based Machine Learning Platform}
\author{
    Ahmad Esmaeili\\
    Department of Computer and Information Technology\\Purdue University\\
    West Lafayette, IN 47907\\
    \texttt{aesmaei@purdue.edu}\\
\And
    John C.~Gallagher\\
    Department of Electrical Engineering and Computer Science\\
    University of Cincinnati\\
    Cincinnati, OH 45221\\
    \texttt{gallagj9@ucmail.uc.edu}\\
\And
    John A.~Springer\\
    Department of Computer and Information Technology\\Purdue University\\
    West Lafayette, IN 47907\\
    \texttt{jaspring@purdue.edu}\\
\And
    Eric T.~Matson\\
    Department of Computer and Information Technology\\Purdue University\\
    West Lafayette, IN 47907\\
    \texttt{ematson@purdue.edu}\\
}
\begin{document}
\maketitle

\begin{abstract}
Hierarchical Multi-Agent Systems provide convenient and relevant ways to analyze, model, and simulate complex systems composed of a large number of entities that interact at different levels of abstraction. In this paper, we introduce HAMLET (Hierarchical Agent-based Machine LEarning plaTform), a hybrid machine learning platform based on hierarchical multi-agent systems, to facilitate the research and democratization of geographically and/or locally distributed machine learning entities. The proposed system models a machine learning solutions as a hypergraph and autonomously sets up a multi-level structure of heterogeneous agents based on their innate capabilities and learned skills. HAMLET aids the design and management of machine learning systems and provides analytical capabilities for research communities to assess the existing and/or new algorithms/datasets through flexible and customizable queries. The proposed hybrid machine learning platform does not assume restrictions on the type of learning algorithms/datasets and is theoretically proven to be sound and complete with polynomial computational requirements. Additionally, it is examined empirically on 120 training and four generalized batch testing tasks performed on 24 machine learning algorithms and 9 standard datasets. The provided experimental results not only establish confidence in the platform's consistency and correctness but also demonstrate its testing and analytical capacity.  
\end{abstract}

\keywords{Hierarchical Multi-agent Systems \and Hybrid Machine Learning \and Distributed Machine Learning \and Holonic Structures \and Machine Learning Platform}

\section{Introduction}
Machine Learning (ML) is a particularly prevalent branch of Artificial Intelligence that is becoming increasingly relevant to real-life applications including, but not limited to economics \cite{athey2018impact}, education \cite{fedushko2019predicting}, agriculture \cite{liakos2018machine}, drug discovery \cite{vamathevan2019applications}, and medicine \cite{rajkomar2019machine}. Such a rapid growth, fueled by either the emergence of new data/applications or the challenges in improving the generality of the solutions, demands straightforward and easy access to the state of the art in the field such that both the researchers and the practitioners would be able to analyze, configure, and integrate machine learning solutions in their own tasks.

Thanks to ubiquitous computing, the last decade has witnessed an explosion in the volume and dimension of data together with the number of machine learning approaches working on them. Knowledge discovery and learning from large and geo-distributed datasets are the targets of the distributed data mining and machine learning approaches in which the concentration is on eschewing over-engineering of machine-learning models by humans and while improving efficacy and scalability by applying both algorithmic innovation and high-performance computing (HPC) techniques to distribute workloads across several machines \cite{verbraeken2020survey}. Additionally, due to the high diversity of the ML problems and solutions contributed by thousands of multi-disciplinary researcher communities around the world, it is becoming exhausting, if not impossible, for both experts and non-experts to keep track of the state of the art. To overcome this hardship, a growing number of research endeavors and commercial technologies are being devised, such as Auto-sklearn \cite{feurer2015efficient}, MLR \cite{bischl2016mlr},  Rapidminer \cite{hofmann2016rapidminer}, OpenML \cite{OpenML2013}, and Google BigQuery ML \cite{bisong2019google} to name a few.

This paper strives to propose a platform that provides an organizational scheme for storing, training, and testing machine learning algorithms and data in a decentralized format. The suggested platform is based on multi-agent systems and is meant to be open in the sense that it is not limited to be used with a pre-determined set of machine learning components. Multi-agent systems (MASs) offer a number of general advantages with respect to computer-supported cooperative working, distributed computation, and resource sharing. Some of these advantages are \cite{wooldridge2009introduction}: (1) decentralized control, (2) robustness, (3) simple extendibility, and (4) sharing of expertise and resources. The decentralized control is, arguably, the most signiﬁcant feature of MAS that serves to distinguish such systems from distributed or parallel computation approaches. Decentralized control implies that individual agents, within a MAS, operate in an autonomous manner and are, in some sense, self-deterministic. Robustness, in turn, is a feature of the decentralized control where the overall system continues to operate even though some of the agents crash. Decentralized control also supports extendibility in the sense that additional functionality can be added simply by including more agents. The advantages of sharing expertise and resources are self-evident and have been heavily used in our work. 

Similar to ML ecosystem, there has been an immense growth in the size and complexity of Multi-Agent Systems (MASs) during the last decades \cite{odell2004metamodel}. Although MASs are considered today to be well-suited to analyze, model, and simulate complex systems, in cases where there is a large number of entities interacting at different levels of abstraction, they fail to faithfully represent complex behaviors with multiple granularities. To deal with this problem, hierarchical systems, as an organizational structure, have attracted the attention of MAS researchers. Today many of its contributions to many application ranging from manufacturing systems \cite{maturana1999metamorph}, transports \cite{burckert1998transportation}, cooperative work \cite{adam2000homascow} or yet radio mobile mesh dimensioning \cite{rodriguez2003towards} are apparent.

The advantages offered by MAS, together with their intrinsic cooperation and coordination abilities, are particularly applicable to machine learning tasks and knowledge discovery in data (KDD) where a considerable collection of tools and techniques are prevalent \cite{czarnowski2013machine}. The integration of multi-agent technologies and machine learning techniques is often called agent mining \cite{cao2009agent} and can be conducted in two directions, i.e. either using machine learning methods to design and train intelligent agents (also called multi-agent learning) or using multi-agent systems to enhance machine learning or data mining processes (also called agent-driven machine learning). To put it briefly, machine learning and data mining tasks can benefit from the multi-agent systems by \cite{zhang2005agents, ryzko2020modern}:
\begin{itemize}
	\item maintaining the autonomy of data sources,
	\item facilitating interactive and distributed approaches,
	\item providing flexibility in the selection of sources,
	\item improving scalability and supporting distributed data sources,
	\item supporting the use of strategies in learning and mining processes, and
	\item inherently enabling collaborative learning
\end{itemize}

This paper focuses on addressing the ever-increasing challenges of organizing, maintaining, analyzing, and democratizing the use of machine learning resources. As mentioned before, this is particularly done by designing and developing a distributed and extensible machine learning platform, called HAMLET, that provides the following main contributions:
\begin{itemize}
	\item It facilitates the organization of machine learning algorithms and data sets in a multi-level similarity-based architecture.
	\item It enables storing and conducting machine learning tasks at both local and global scales.
	\item It democratizes the use of machine learning resources through a simple and intuitive query design.
	\item It facilitates sharing machine learning tasks and results.
	\item It simplifies the analysis of machine learning resources by providing automated training and testing algorithms.
	\item It not only gives the freedom of using a wide variety of machine learning algorithms, but also provides researchers and contributors with the flexibility of applying customized privacy, integrity, and access strategies.
\end{itemize}

HAMLET tries to address and facilitates the machine learning tasks initiated by both novice and advanced users. With the help of the proposed agent-based model and its corresponding algorithms, a beginner user can utilize the platform for finding answers to questions such as: ``what is the accuracy of testing SVC on iris dataset?'', ``which algorithm has the best (or worst) clustering homogeneity score on the digits dataset?'', etc. An expert user, on the other hand, can utilize the system to contribute new algorithms and datasets to the community, run various analyses on the available machine learning resources, analyzing the effect of different hyper-parameter values on the performance of algorithms, implement automated algorithm selection models that utilizes HAMLET, etc.   

The proposed approach employs the concepts of organizational multi-agent systems in which the agents represent the existing machine learning elements such as algorithms, data sets, and models together with management and reporting units. The hierarchical organization of the agents in the proposed system is dynamically and distributedly built  based on the represented capabilities of the agents and during machine learning training and testing tasks. The used capability and skill-based architecture of the agents also provides the above mentioned flexibility in adding more sophisticated decision making processes alongside the basic ML tasks. The suggested platform is evaluated comprehensively using both theoretical and empirical approaches. By the theoretical methods, we prove that a HAMLET-based system makes right decisions, reports correct results, and handles resource shortcomings properly. The theoretical approach also accompanies time and space complexity analyses of its algorithms to make sure that its overloads do not exceed its unique features. In a highly dynamic and communicative environment of multi-agents systems, such theoretical analyses provides an additional assurance that the system operates within its expected limits provided that its assumptions are satisfied. The presented empirical results, on the other hand, follows two primary objectives: demonstrating how the platform is expected to be used through example queries and their corresponding results; and showing its flexibility, coherence, and ease of use on practical real-world use cases. We have implemented the framework in Python programming language and utilized the machine learning libraries and resources commonly used in the community.  

The rest of this paper is organized as follows. First, we review some of the noteworthy work relevant to this research in section 2. Then, we present a detailed description of the proposed platform in section 3. This is followed by section 4 which assesses the correctness and the performance of the proposed model theoretically. Sections 5 highlights the development aspects of the platform and demonstrates its flexibility and capabilities through real-world machine learning tasks. And finally, section 6 concludes the paper and provides suggestion for future work.

\section{Related Work}

There are a considerable number of reports in the literature about the integration of multi-agents and machine learning systems. In this section, we primarily focus on the application of agent-based techniques in different aspects of a machine learning and data mining life cycle, and we review some of the works that are relatively more related to our research. 

One of the earliest works is the research by Kargupta et al. \cite{kergupta1997scalable}. In this work, the authors have proposed Parallel Data Mining Agents (PADMA), a data mining system that uses software agents for local data accessing and analysis, together with a web-based interface for interactive data visualization. PADMA was originally proposed for and tested on a text classification task though its was extended later and successfully used in medical applications \cite{kargupta1997web}. Our proposed platform differs from PADMA in multiple ways. For example, in PADMA the mining agents are local to the sites that data resides but in HAMLET, the agents representing data and algorithms are different entities that communicate over a network. On the other hand, unlike HAMLET, PADMA's structure is single level where all the agent are connected to and managed by a single facilitator. Such a simplistic structure is prone to scalability issues and limits the application of the platform for small sets of problems and specific tasks. Gorodetsky et al. claimed that the core problem in agent-based machine learning and data mining is not the algorithms themselves -- in many cases these are well understood -- but instead the most appropriate mechanisms to allow agents to collaborate \cite{gorodetsky2003multi}. They presented a distributed architecture for classification problems and a set of protocols for a multi-agent software tool. Their suggested architecture comprises two main components to handle source-based data and meta-data -- along with a set of specialized agents -- to handle queries and classification tasks. Dissimilar to HAMLET, their suggested toolkit is not meant to be general purpose covering various data mining and machine learning tasks at the same time. Peng et al. give an interesting comparison between single-agent and multi-agent text classiﬁcation in terms of a number of criteria including response time, quality of classiﬁcation, and economic/privacy considerations \cite{peng2001comparison}. Their collaboration model for the multi-agent setting is based on the work presented in \cite{raje1998designing} according to which, the agents do not initiate document-based cooperation until they are done with their individual classification tasks. As expected, their reported results are in favor of a multi-agent approach. In \cite{tozicka2007framework}, an abstract agent-based distributed machine learning and data mining framework is proposed. The work utilizes a meta-level description for each agent to keep track of its learning process and exchange it with the peers. Such meta-information helps the agents reason about their learning process and hence to improve the results. Unlike this work, HAMLET does not make any contributions on improving the quality of the learning algorithms but focuses more on the organization of ML elements and automating the machine learning tasks.

Agent technology has also been employed in meta-data mining, the combination of results of individual learning agents. Perhaps, the earliest most mature agent-based meta-learning systems are: Java Agent for Meta-learning (JAM) \cite{stolfo1997jam}, a colletive data mining-based experimental system called BODHI \cite{kargupta1999collective}, and Papyrus \cite{bailey1999papyrus}. Basically, all these systems try to combine local knowledge to optimize a global objective. In JAM, a number of learners are simultaneously trained on a number of data subsets, and the results are combined via a meta-learning technique. In BODHI, on the other hand, the primary goal is to provide an extensible system that hosts and facilitates the communications between mobile data mining agents through a three-level hierarchy. In contrast to JAM and BODHI, Papyrus can move not only models but also data from site to site when such a strategy is desired. Papyrus is a specialized system that is designed for clustering while JAM and BODHI are designed for data classiﬁcation. Apart from their specializations in limited sets of applications, these systems differ from HAMLET in structural aspects as well. For example, BODHI uses a hierarchical structure like HAMLET; however, its structure comprises limited and fixed three levels, whereas HAMLET's architecture does not have any predetermined arrangements. On the other hand, the multi-level architecture of BODHI is based on the role of the agents, i.e. whether they are mining agents or facilitators; however, the levels in HAMLET are formed during the life time of the systems and based on the learning capabilities of the agents. And last but not least, in most of the similar platforms that tend to perform agent-based machine learning tasks, the structure is configured and set at the design time based on the objectives of the system. In HAMLET, however, the system starts with basic requirements and sets of protocols and dynamically evolves during its life time based on the ML tasks that are conducted on it.

There is very limited work reported on generic approaches. One example is EMADS (the Extendable Multi-Agent Data Mining Framework) \cite{albashiri2008emads}. EMADS has been evaluated using two data mining scenarios: Association Rule Mining (ARM) and Classiﬁcation. This framework can ﬁnd the best classiﬁer providing the highest accuracy with respect to a particular data set. EMADS uses ﬁxed protocols and since the mining tasks are managed by a single site, called mediator, it is not easily scalable to cover large number of learning agents. Our proposed method, on the other hand, not only provides the analytical information about the best learners but also is capable of conducting batch learning tasks, is not limited to classification, and is easily scalable to hold thousands of elements. Liu et al. proposed a hierarchical and parallel model called DRHPDM that utilized a centralized and distributed mining layers for the data sources according to their relevance and a global processing unit to combine the results \cite{liu2011distributed}. In their work, the MAS has aided their approach to realize a flexible cross-platform mining task. The proposed architecture relies on a central GPU unit to determine the relevance of the data sources, forming local mining groups, and processing the final models. Unlike HAMLET which is capable of performing various machine learning tasks with different configuration parameters, DRHPDM merely focuses on a single mining tasks at a time. Furthermore, it does not clearly specify how the central unit would cope with the use of customized learning algorithms at large scales.  In \cite{chaimontree2012framework}, a multi-agent-based clustering framework called MABC is suggested. This framework utilizes four types of agents, namely user, data, validation, and clustering, to perform the clustering task in two phases: firstly generating the initial configuration and then improving them through the negotiation among the agents. This work differs from our proposed platform in the sense that it is used only for clustering with predetermined set of algorithms, and concentrates more on improving the clustering results than providing analytical information. ActoDatA (Actor Data Analysis) \cite{lombardo2019multi} is another framework that utilizes an agent-based architecture to provide a set of software tools for distributed data mining and analysis. Its architecture is based on five types of agents that are responsible for various tasks such as mining, analysis, pre-processing, data acquisition, and interacting with users. ActoDataA's objectives differ from our proposed platform as they are primarily focused on classification tasks and automating different phases of its workflow.

There are numerous other work in the literature that utilize agent technologies and multi-agent systems to improve mining and learning results in specific applications. The research report in \cite{bosse2017incremental} introduces a lightweight mobile agent platform called JavaScript Agent Machine (JAM) that couples machine learning with multi-agent systems. The primary goal of this platform is to provide a seamless integration of IoT and cloud-based mobile environments using agent technologies. Although the main concentration of the work is on learning for earthquake and disaster monitoring, and JAM is used only as a simulation platform, it's methodology and features, such as algorithm and data separation -- together with its various connectivity capabilities -- can be adopted for other distributed learning applications as well. In \cite{yong2019multi}, Yong et al. have proposed a multi-agent architecture that leverages probabilistic machine learning by identifying the sources of uncertainty. The main point of concentration in their work is the cyber-physical manufacturing systems; and they have experimented their approach in real-time monitoring of hydraulic systems using a Bayesian Neural Network. The work reported in \cite{tounsi2020csmas} has
employed multi-agent coordination to present a mining process for predicting credit risks in banking. In \cite{ganapathy2012intelligent}, on the other hand, the authors have designed an intrusion detection system that delegates various tasks such as data collection, data pre-processing, selection, outlier detection, and classification to different software agents. And finally, in \cite{geng2021agent}, the authors have suggested a clustering framework for satellite networks that employs fixed three layers of agents. As these researches are not directly related to objectives of our paper, we do not delve into them further.

\section{Proposed Approach}
This section presents the details of the proposed platform. We first model a machine learning system as a hypergraph and then, we use this graph to design a hierarchical multi-agent system capable of handling various machine learning queries.
\subsection{Problem Formalization}
\label{sec:formalization}
The principal components of any machine learning system are the implemented algorithms, the stored datasets, and the visualization and/or report generators. In the proposed platform, we concentrate on the first two parts though it can be easily extended to support as many additional sections as required. Let's assume that each algorithm or dataset entity in such a system is represented by a node. These nodes are connected by training/testing queries initiated by end-users. Since the models might involve more than two entities, we model the entire system as a hypergraph. The nodes of the hypergraph are the data/algorithm entities, and the hyperedges are the models that are built during training/testing procedures. Formally, we denote the system hypergraph by tuple $S\left<X,M\right>$ where $M$ is the set of built models and $X$ is the set of all available data and algorithms, that is, $X=\{a_1, a_2,\dots,a_N,d_1, d_2,\dots,d_L\}$ where $a_i$ and $d_j$ represent an specific algorithm and data, respectively. Figure~\ref{fig:hyp_org} depicts an example system composed of three algorithms: $\{svm, k\text{-}means(\text{KM}), c4.5\}$; and two datasets: $\{iris, wine\}$ that are used in three models $m_{svm\text{-}wine}, m_{c4.5\text{-}iris}$, and $m_{k\text{-}means\text{-}wine}$. In other words, $X=\{svm, k\text{-}means(\text{KM}), c4.5, iris, wine\}$ and $M=\{m_{svm\text{-}wine}, m_{c4.5\text{-}iris}\}$. In this research, we use the multi-modal representation of the hypergraph in which a hyperedge is translated into a new type of node that is connected to all the entities it contains. Figure~\ref{fig:hyp_alt} depicts the multi-modal representation of the aforementioned example of a machine learning system.  

\begin{figure}
	\centering
	\begin{subfigure}[b]{0.47\textwidth}
		\includegraphics[width=\textwidth]{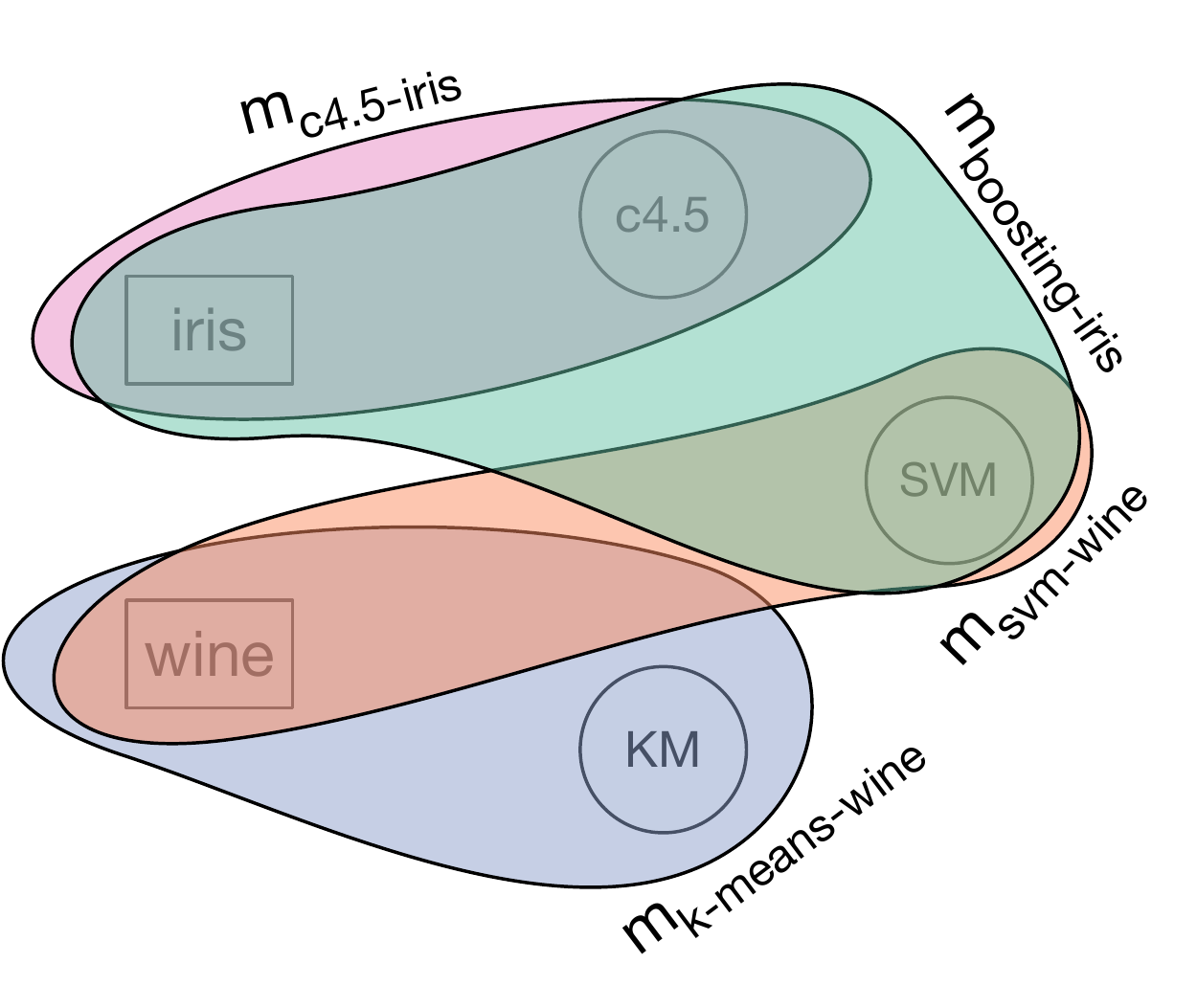}
		\caption{~}
		\label{fig:hyp_org}
	\end{subfigure}
	\begin{subfigure}[b]{0.35\textwidth}
		\includegraphics[width=\textwidth]{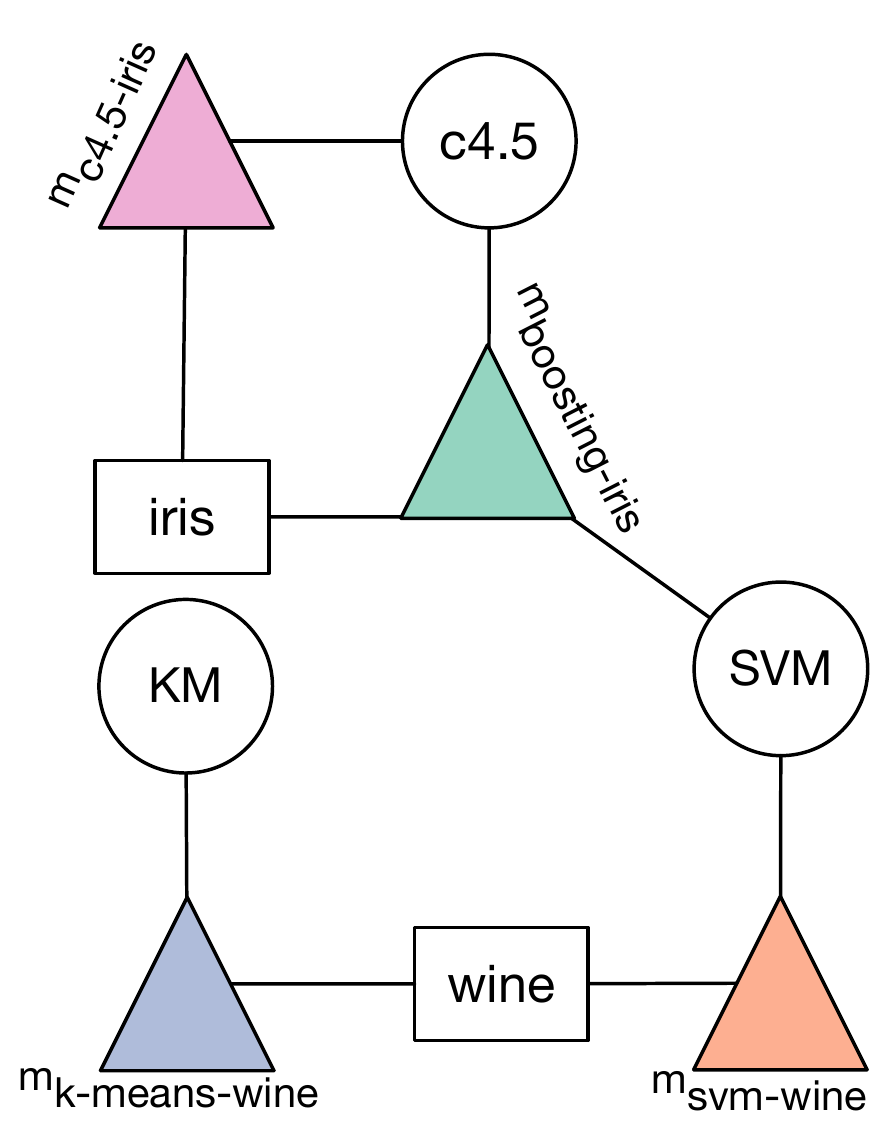}
		\caption{~}
		\label{fig:hyp_alt}
	\end{subfigure}
	\caption{An example machine learning system as a hypergraph}
	\label{fig:hyp_all}
\end{figure}

Before continuing to the detailed multi-agent modeling of the system, we introduce a set of definitions and assumptions together with the notations that are going to be used throughout this paper:
\begin{itemize}
	\item \textit{Algorithm} refers to any machine learning method, e.g. clustering or classification, in its abstract form without taking any specific data into account. An algorithm is denoted by tuple $a_i\left<name, type, P_{a_i}\right>$ where $P_{a_i}=\{(p_i,v_i)\}$ denotes the set of parameters that is used for the configuration of the algorithm, and $type\in \{$\textit{classification, clustering, regression, \dots}$\}$ specifies the category that this algorithm can belong to. The $type$ term has an organizational purpose and does not affect the functionality of the algorithm.
	\item \textit{Dataset} refers to all pre-stored sets of instances used for machine learning tasks. In this paper, we assume the data is already available and we do not contribute any method to data gathering or preparation. Similar to an algorithm, a dataset is represented by tuple $d_i\left<name,type,P_{d_i}\right>$ where $P_{d_i}$ specifies a set of access parameters in the dataset and $type$ denotes the chain categories that the data belongs to, as described above.
	\item \textit{Model} refers to the working implementation of a machine learning method based on the above-mentioned algorithms and datasets. For the sake of simplicity, we concentrate only on the models that implement a single algorithm on a single dataset in this paper. Such a model is denoted by tuple $m_i\left<name, a, d, P_{m_i}\right>$ where $a$ and $d$ are respectively the algorithm and dataset on which the model is based; and $P_{m_i}=\{(p_i,v_i)\}$ specifies the model-specific configuration parameters.   
	\item \textit{Query} refers to the machine learning requests that an end-user sends to the system. Throughout this paper, we assume that any query, training or testing, is specified by tuple $q_i\left<id, \Lambda_i, \Delta_i, O_i\right>$ where $id$ uniquely identifies the query; $\Lambda_i=\{(a_j,P_{a_j})\}$ and $\Delta_i=\{(d_k,P_{d_k})\}$ respectively denote the sets of algorithms and data that are used; and finally, $O$ specifies the configuration of the output that the end-user is interested in. For instance, the tuple 
	$$
	\left<00, \{(svm,\{(kernel,rbf)\}),(c4.5,\{\})\},\{(iris,\{\})\},\\\{\textit{type=test, format=visual,} measures=\{accuracy\}\} \right>
	$$
	specifies a query that tests algorithms $svm$, with \textit{rbf} kernel, and $c4.5$, with default parameters, on $iris$ dataset, with default parameters, and visually reports the accuracy as the result. It is worth noting that the number of different machine learning operations in a single query $q_i$, by this definition, is $|\Lambda_i|\times|\Delta_i|$ where $|...|$ denotes the set cardinality.
\end{itemize}

\subsection{Multi-agent Modeling}
The holistic view of the proposed model has a hierarchical structure of the agents that are specialized in managing data or processing a machine learning task. We use the aforementioned multi-modal representation of the machine learning hypergraph to express the structure of the multi-agent hierarchy, in terms of its components and the communication links. More specifically, each modal of the graph constructs a sub-tree that holds all the similar data/algorithm agents. For instance, figure~\ref{fig:hyp_tree} depicts the hierarchical modeling of the example in figure~\ref{fig:hyp_all}. Please note that we have used two different styles (solid and dashed lines) to draw the links between the structural components in this figure. This is for the sake of clarity in exhibiting the tree-like relationship among the components of the algorithm (ALG) and data (DATA) sub-structures. 

\begin{figure}
	\centering
	\includegraphics[width=.5\textwidth]{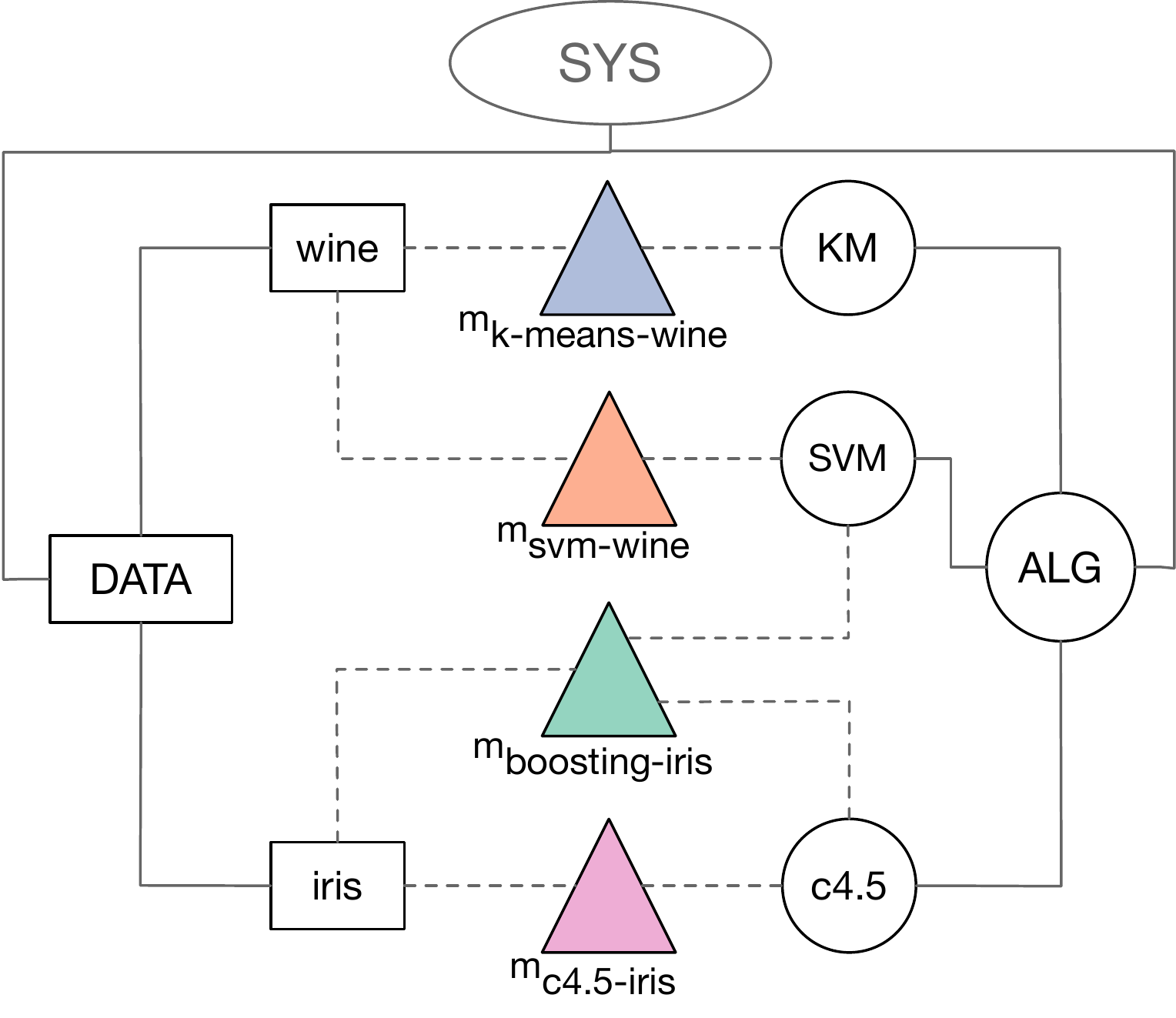}
	\caption{The hierarchical representation of the machine learning hypergraph example.}
	\label{fig:hyp_tree}
\end{figure}

It should also be emphasized that the depicted example is a very simplified system with only two levels of hierarchy and no specialization of the algorithms. In real-world scenarios, depending on the training/testing queries, the structure will get complicated and will allow more connection types. Figure~\ref{fig:hyp_tree_comp} tries to show an extended version of figure~\ref{fig:hyp_tree} system to reflect its flexibility. 

\begin{figure}
	\centering
	\includegraphics[width=.6\textwidth]{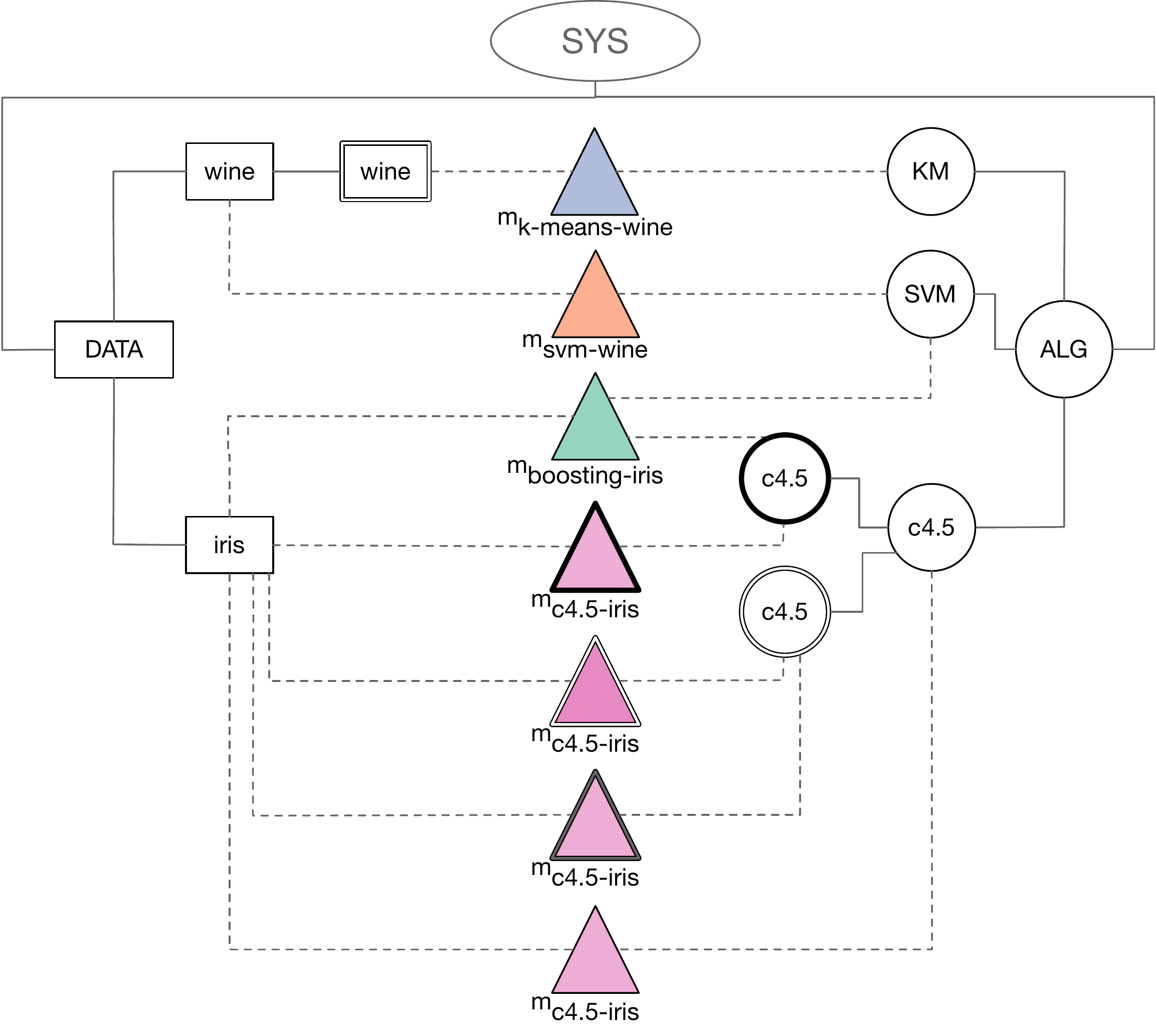}
	\caption{The hierarchical representation of the extended machine learning example.}
	\label{fig:hyp_tree_comp}
\end{figure}

The high flexibility and the multi-level structure of the proposed method requires an adaptable multi-agent organization model. In this paper, we use a self-similar and hierarchical structure called a holarchy to model the entire proposed machine learning platform. The next section provides a short introduction to holon and holarchies together with the details about how we have adopted them in our solution.

\subsubsection{The Holonic Machine Learning Model}
\paragraph{Holonic Multi-agent Systems}~\\
The concept of a holon is central to this paper and therefore a definition of it seems to be helpful before we continue. The term holon was introduced by Arthur Koestler \cite{koestler1968ghost} in order to explain the self-similar structure of biological and social systems. He made two key observations:
\begin{itemize}
	\item These systems evolve and grow to satisfy increasingly complex and changing needs by creating stable ``intermediate'' forms that are self-reliant and more capable than the initial systems.
	\item In living and organizational systems, it is generally difficult to distinguish between ``wholes'' and ``parts''. Put it another way, almost every distinguishable element is simultaneously a whole (an essentially autonomous body) and a part (an integrated section of a larger, more capable body).
\end{itemize}
The concepts of holonic systems have been successfully adopted and used in the design of organizational multi-agent systems. For instance, \cite{cossentino2010aspecs} introduces ASPECS, a step-by-step software process for modeling and engineering complex systems at different levels of details using a holonic organizational meta-model. GORMAS \cite{argente2009gormas}, on the other hand, provides analytical and design methodologies for open virtual organizations, including holonic agent-based systems. In multi-agent systems, the vision of holons is much closer to that of recursive or composed agents. A holon constitutes a way to gather local and global, individual, and collective points of view. Therefore, a holon is a self-similar structure composed of other holons as sub-structures. This hierarchical structure composed of holons is called a holarchy. Depending on the level of observation, a holon can be seen either as an autonomous atomic entity or as an organization of holons. In other words, a holon is a whole-part construct that is not only composed of other holons but it is, at the same time, a component of a higher-level holon.

In a holonic multi-agent system, we can distinguish between two main types of holons, namely head and body holons. All of the holons, which are a member of another holon (called super-holon), are considered to be body holons. These holons can be either atomic or composite and are performing the tasks that have been delegated to them. On the other hand, head holons act as the representatives of the holons they are members of. In other words, holons are observable to the outside world by means of these representatives. As the representatives, head holons manage the holons’ communication with the outside of the holon and coordinates the body holons in pursuit of the goals of the holon. Inside a holon, the force that keeps the heads and bodies is their commitments to the goal of the holon. It is worth noting that in this commitment, the relationships among the agents and holons are formed at runtime, in contrary to classical methods such as object-oriented programming in which they are expressed at the code level. More formally, according to \cite{fischer2003holonic}, for a MAS with the set $A$ of agents at time $t$, the set $H$ of all holons is defined recursively as follows:
\begin{itemize}
	\item Every instantiated agent can be considered as an atomic holon.
	\item $h = (\textit{Head}, \textit{Sub-holons}, C) \in H$, where $\textit{Sub-holons}\in H\setminus \emptyset$ is the set of holons that participate in $h$; $\textit{Head}\subseteq \textit{Sub-holons}$ is a non-empty set of holons that has the aforementioned managerial responsibilities; and $C\subseteq Commitments$ defines the relationship inside the holon and is agreed on by all holons $h'\in \textit{Sub-holons}$ at the time of joining the holon $h$. These commitments keep the members inside the holon.
\end{itemize}

According to the definitions above, a holon $h$ is observed by the outside world of $h$ like any other agent in $A$. Only at a closer inspection, it may turn out that $h$ is constructed from (or represents) a set of agents. Like traditional agents, any holon has a unique identification. This facilitates communication among the holons by just sending messages to their addresses. Handling these messages is one of the responsibilities of the head holon(s). Given the holon $h = (Head, \{h_1, ..., h_n\}, C)$, we call $h_1, ..., h_n$ the sub-holons of $h$, and $h$ the super-holon of $h_1, ..., h_n$. The set $Body= \{h_1, ..., h_n\}\setminus Head$ (the complement of Head) is the set of  sub-holons that are not allowed to represent holon $h$. Holons are allowed to engage in several different super-holons with noncontradictory goals.  

\paragraph{The Adoption of the Holonic Concepts}~\\
Based on the formalization of the machine learning problem that we presented in section~\ref{sec:formalization}, we define a new holon for each of the structural components of the machine learning hierarchy. Generally, these holons can be categorized as follows:
\begin{itemize}
	\item \textbf{SysH}, as the system holon, acts as the representative of the entire machine learning system to the outside world. This holon is the ultimate super-holon that includes all of the other parts of the system.
	\item \textbf{AbsH}, as the set of abstract holons, acts as the container for all holons of the same functionality. These types of composite holons do not directly perform any machine learning task or manage a dataset, and hence are not connected to any model holon (ModH). Instead, they play a managerial role in handling the queries and the results. There are at least two pre-defined abstract holons in the proposed platform: \textit{abstract data} and \textit{abstract algorithm} holons that while are the sub-holons of the \textit{SysH} act as the ultimate parent of all existing data and algorithm holons in the system, respectively. These holons are specified by ``DATA'' and ``ALG'' in figure~\ref{fig:hyp_tree_comp}. The \textit{type} parameter of an algorithm or data tuples, can be used to define a hierarchy of the abstract holons. That is, the system might have other abstract holons for categorization and managerial purposes, such as an abstract holon to hold all the decision tree algorithms or to manage all the categorical data. If this is the case, they should be properly placed in the hierarchy based on their relationship with the other abstract and concrete holons. As we explain later, we only use the default abstract holons, together with the ones that are dynamically created by the system, as we prefer to not enforce any restrictions to the definition of queries. 
	\item \textbf{DatH} refers to the set of non-abstract holons that hold and manage datasets. A data holon is atomic and is part of an AbsH. Moreover, when a dataset is not used in the training of any machine learning model, its corresponding DatH is not connected to any model holon, ModH.
	\item \textbf{AlgH} refers to the set of non-abstract holons that contain the implementation and configuration details of a data mining algorithm. Similar to a DatH, an AlgH is atomic and might or might not maintain links with model holons.
	\item \textbf{ModH}, as the set of model holons, corresponds to a realization of a machine learning algorithm used on a specific dataset. A ModH is an atomic holon constructed by an AlgH and is the sub-holon of a DataH in addition to its creator AlgH. It is also assumed that a ModH does not interact with the other model holons. 
\end{itemize}

In order to show the hierarchical relationship between these holons, let's assume that $\tensor*[^U_t]{H}{^l_i}$ denotes the $i$-th holon of type $t\in T=\{s,a,d,m\}$ at level $l$ of the holarchy where $s, a, d$, and $m$ stand for \textit{system, algorithm, data}, and \textit{model} respectively; and $U$ is the set containing the indexes of its super-holons. Numbering the levels from top of the holarchy starting from 0, we have: 
\begin{equation}
	SysH= {}_{s}^{\emptyset}H^0_0
\end{equation}
\begin{equation}
	AbsH= \{{}_{t}^{\{u\}}H^{l}_i:t\in \{a,d\}\; \wedge\; l>0\; \wedge\; {}_{t}^{\{u^\prime\}}H^{l-1}_u\in (AbsH\cup SysH)\}
\end{equation}
\begin{equation}
	DatH= \{{}_{d}^{\{u\}}H^{l}_i:l>1\; \wedge\; {}_{d}^{\{u^\prime\}}H^{l-1}_u\in (DatH\cup AbsH)\}
\end{equation}
\begin{equation}
	AlgH= \{{}_{a}^{\{u\}}H^{l}_i:l>1\; \wedge\; {}_{a}^{\{u^\prime\}}H^{l-1}_u\in (AlgH\cup AbsH)\}
\end{equation}
\begin{equation}
	ModH= \{{}_{m}^{\{u,u^\prime\}}H^{l}_i:l>2\; \wedge\; {}_{a}^{\{x\}}H^{l-1}_u \in AlgH\; \wedge\; {}_{d}^{\{z\}}H^{l-1}_{u^\prime} \in DatH\}
\end{equation}
and hierarchical relationship between the holons are defined as follows:
\begin{equation}
	\label{eq:1}
	{}_{s}^{\emptyset}H^0_0=\{{}_{a}^{\{0\}}H^1_1, {}_{d}^{\{0\}}H^1_1\}
\end{equation}
\begin{equation}
	\label{eq:2}
	{}_{t}^{U}H^l_i=\{{}_{t}^{\{i\}}H^{l+1}_j:  l\ge 1\; \wedge\; t\in \{a,d\}\}\; \cup\; \{{}_{m}^{V}H^k_j: i\in V\}
\end{equation}
Please note that, these statements merely define the holonic relationships and do not identify the complete set of members that a holon might have. In other words, a holon might potentially comprise additional agents/holons that help it carry out the internal processes. We will clearly specify such additional members whenever we assume and use any.

Apart from its members, a holon comprises several other parts that enable it to process and answer the queries. The general internal architecture that is commonly used by the above-defined holons is shown in figure~\ref{fig:holon_arc}. In this figure, the arrows show the direction of the information provided between the components; the \textit{Head Agent} (HA), administers the inter- and intra- holon interactions; the \textit{Knowledge Base} (KB) stores the data or the details of the algorithm; the \textit{Directory Facilitator} (DF) maintains the information of and accesses to all of the super-holons and members, including sub-holons and the other members; the \textit{Memory} (ME) component stores the history of the results and useful management logs; the \textit{Skills \& Capabilities} (S\&C) element refer to the abilities that distinguish a specific holon from the similar ones; and finally, the \textit{Other Members} (OM) part contains the utility agents that are employed by the head to handle the internal tasks, such as query agents, result agents, etc. It should be noted that the holarchical relationships are maintained using DF, and there is no designated component to hold the sub-holons. This is for the sake of keeping the holon as small as possible and make it to be easily distributed over multiple devices. Furthermore, we assume that the identity and category information of the holon is handled by its HA. Depending on the type of the holon, the suggested architecture might miss some parts. Table~\ref{tbl:parts} provides the included components of each holon type in the proposed platform.
\begin{figure}
	\centering
	\includegraphics[width=.3\textwidth]{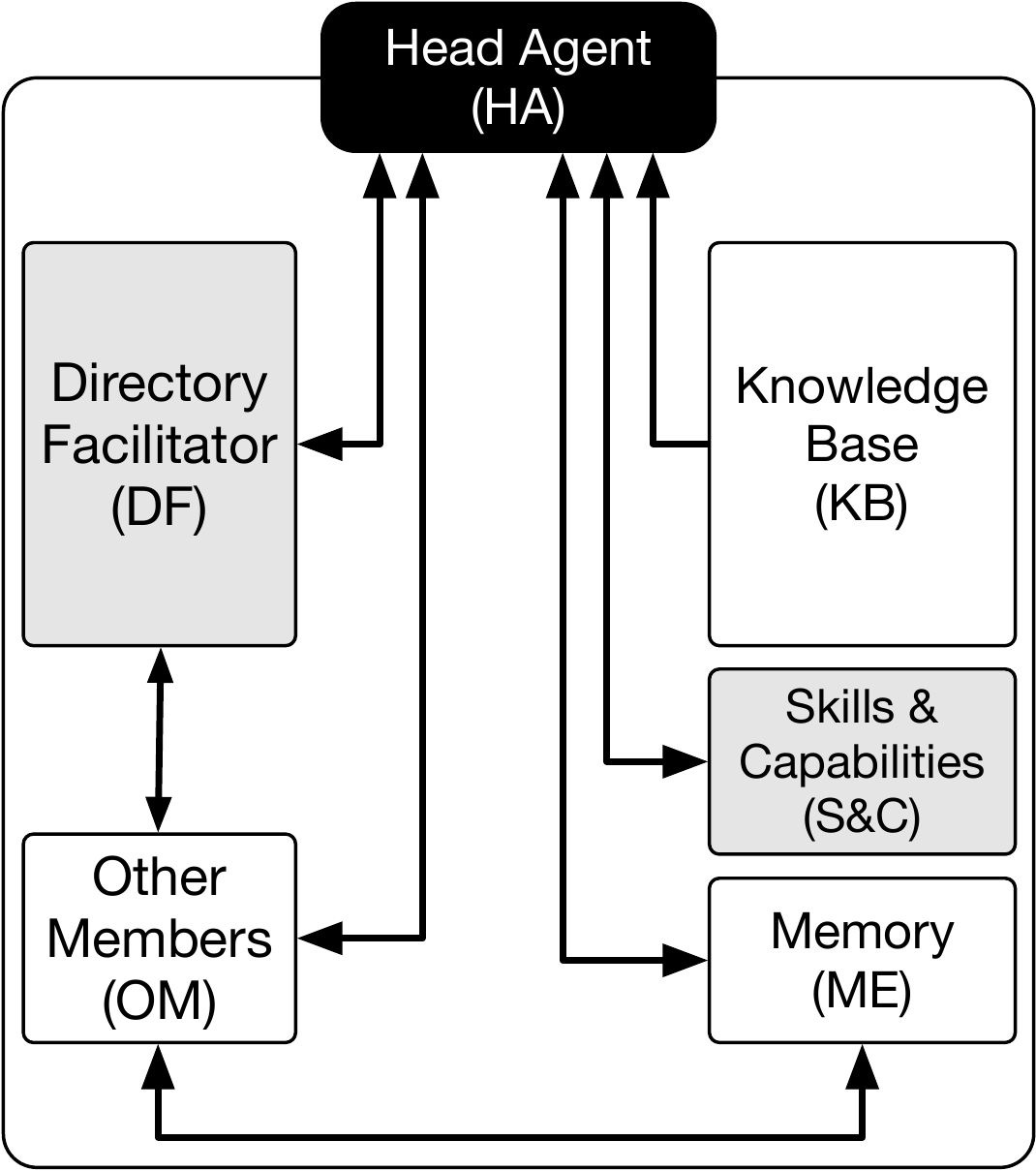}
	\caption{The general architecture of a holon.}
	\label{fig:holon_arc}
\end{figure}

\begin{table}\centering
	\caption{The parts that each holon type includes.}
	\label{tbl:parts}
		
		\begin{tabular}{@{}lcccccc@{}}\toprule
			&HA & DF & OM & KB & S\&C & ME\\ \midrule
			SysH & $\bullet$ & $\bullet$ & $\bullet$ & -- & -- & $\bullet$ \\
			AbsH & $\bullet$ & $\bullet$ & -- & -- & $\bullet$ & $\bullet$\\
			AlgH & $\bullet$ & $\bullet$ & $\bullet$ & $\bullet$ & $\bullet$ & $\bullet$ \\
			DatH & $\bullet$ & $\bullet$ & $\bullet$ & $\bullet$ & $\bullet$ & $\bullet$ \\
			ModH & $\bullet$ & $\bullet$ & -- & $\bullet$ & $\bullet$ & $\bullet$ \\
			\bottomrule
		\end{tabular}
	\end{table}
	
	As it was mentioned above, the skills and capabilities component distinguishes a holon from its counterparts. In fact, this entity plays a canonical part in the mining operations of the holon and specifying its position inside of the holarchy. In the proposed platform, these terms are defined as follows:
	\begin{itemize}
		\item \textbf{Capability}, denoted by $C$, refers to the innate machine learning ability of a holon, considering itself and all of its members. We use the configuration parameters of the data/algorithm that holon represents to define its set of capabilities. In case of atomic holons, $C_{{}_{t}^{U}H^l_i} = P_{t_i}$ where $P_{t_i}$ indicates the configuration parameters of the corresponding algorithm/data/model entity. For composite abstract holons, however, the capability is defined as the parametric sum (discussed later in definition~\ref{def:ps}) of the capabilities of all of its non-model sub-holons. 
		\item \textbf{Skill} refers to the set of the specific expertise of the holon. Unlike capabilities that exist intrinsically from the birth of the holon, skills are acquired as soon as the holon is involved in a practical machine learning operation, i.e. the time a child of \textit{ModH} type is spawned. That being said, we update the skill set of \textit{ModH} holons and the other type of holons differently. Formally, if the skill set of the atomic holon ${}_{t}^{U}H^l_i$ is denoted by $S_{{}_{t}^{U}H^l_i}$, we have:
		\begin{equation}
			\label{eq:skill-model}
			S_{{}_{m}^{U}H^l_i}=\bigcup_{u^\prime\in U} C_{{}_{t}^{Y}H^{l^\prime}_{u^\prime}};\quad t\in \{a,d\}
		\end{equation}
		\begin{equation}
			\label{eq:skill-other}
			S_{{}_{t}^{U}H^l_i}=\bigcup_{Y}\left\{\{C_{{}_{m}^{Y}H^{\lambda}_j}; i\in Y\}\setminus C_{{}_{t}^{U}H^l_i}\right\}
		\end{equation}
		Additionally, for the case of an abstract data/algorithm holon, the skill set is the union of the skills of its sub-holons. That is:
		\begin{equation}
			\label{eq:skill-abstract}
			S_{{}_{t}^{u}H^l_i}=\bigcup_{u^\prime} S_{{}_{t}^{\{i\}}H^{l+1}_{u^\prime}}
		\end{equation}
		By way of explanation, the skill set of the model holon is the union of the capabilities of its super-holons, and the skill set of the atomic data/algorithm holons is the union of the skills of its model sub-holons without its own capability set. Finally, the skill set of a composite abstract holon comprises the combination of subordinate skills. The toy example of figure~\ref{fig:skill-exm} demonstrates how the capabilities and skills are set in a partial holarchy.
		
		\begin{figure}
			\centering
			\includegraphics[width=\textwidth]{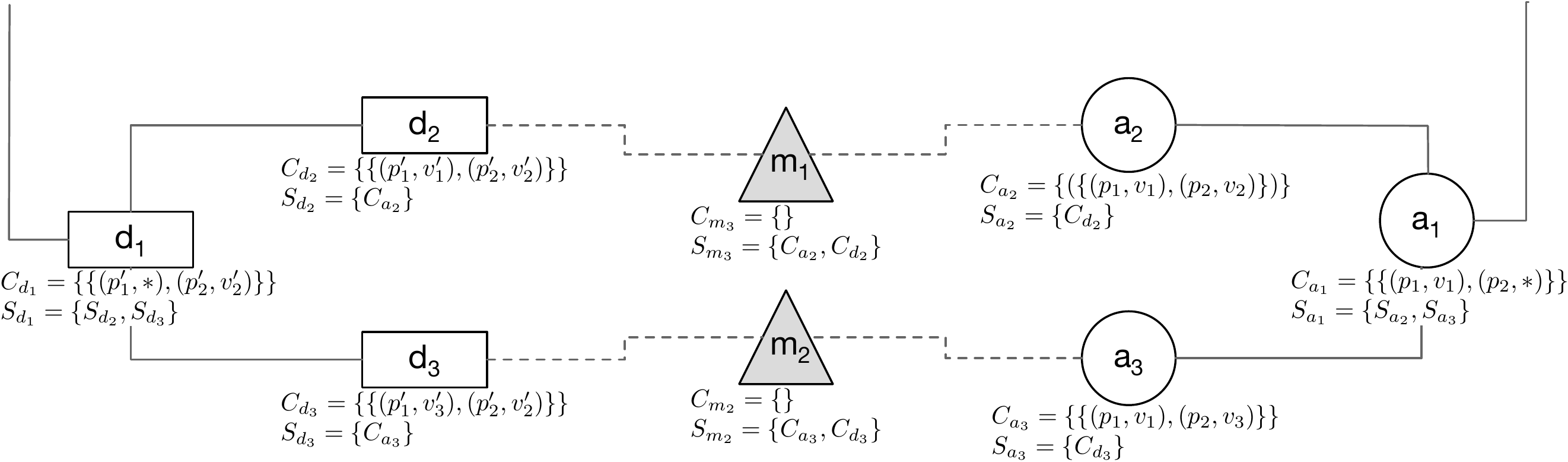}
			\caption{Capabilities and skills in a simple holarchy example.}
			\label{fig:skill-exm}
		\end{figure}
	\end{itemize}
	It should be emphasized that these terms are defined with respect to machine learning tasks and does not take the intrinsic abilities of the other members of the holon, such as processing the queries, generating reports, etc.
	
	\paragraph{The Construction of the Holarchy}~\\
	One of the key steps in utilizing a holonic multi-agent model is the initial construction of the hierarchy. The pattern of the holon arrangement in the holarchy plays a critical role in the performance of the system and its dynamic adaptation to the changes of the environment\cite{esmaeili2017socially,esmaeili2016impact, esmaeili2019towards}. Although a holonic structure can be designed and hand-arranged by an expert, there are numerous research endeavors in the literature that have concentrated on the automatic organization of holonic multi-agent systems -- socially based method proposed in \cite{esmaeili2017socially, esmaeili2019towards}, the RIO\cite{hilaire2000formal} based approach in \cite{hilaire2008adaptative}, and the Petri net-based model reported in \cite{ciufudean2011artificial} to name a few. In this paper, The holarchy is initially composed of a \textit{SysH} and two \textit{AbsH}s, namely ALG and DATA, to accommodate all algorithms and data of the system respectively. Then it dynamically continues growing as new machine learning queries are sent to the system. The detailed algorithm makes use of a few new operators and symbols defines as follows:
	
	\begin{definition}[Parametric Set]\label{def:pset}
		A parametric set $P$ is a set of parameter-value ordered pairs $(p_i,v_i)$ such that $\exists (p_j, v_j)\in P, p_i=p_j\Longrightarrow v_i=v_j$. In other words, there is only one pair for any parameter $p_i$ in the set.
	\end{definition}
	
	\begin{definition}[Parametric General Symbol]\label{def:pg}
		A parametric general symbol, denoted by $\star$, is a placeholder for all the available values of a parameter. For instance, the ordered pair $(learning\_rate,\star)$ implies the set of all available values for parameter $learning\_rate$. Throughout this paper, we call a pair general if it the symbol appears as its value; and similarly, we call a set general if it contains at least one general pair.
	\end{definition}
	
	\begin{definition}[Parametric Congruence]\label{def:pe}
		Two ordered pairs $(p,v)$ and $(p^\prime,v^\prime)$ are called parametric congruent, written as $(p, v)\accentset{\star}{\cong}(p^\prime,v^\prime)$, if and only if $p = p^\prime$. Similarly, congruence for two parameter sets $P$ and $P^\prime$ is defined as follows:
		\begin{equation}
			P\accentset{\star}{\cong}P^\prime \iff |P|=|P^\prime|\; \wedge\; \forall (p,v)\in P\; \exists (p^\prime,v^\prime)\in P^\prime: (p,v)\accentset{\star}{\cong}(p^\prime,v^\prime)
		\end{equation}
	\end{definition}
	
	\begin{definition}[Parametric Inequality]\label{def:pin}
		The ordered pair $(p, v)$ is parametrically less than or equal to $(p^\prime, v\prime)$, denoted by $(p,v) \accentset{\star}{\le}(p^\prime, v\prime)$, if and only if they are congruent and $v = v^\prime \vee v = \star \vee v^\prime = \star$. Similarly, for two sets $P$ and $P^\prime$ we have:
		\begin{equation}
			P\accentset{\star}{\le}P^\prime \iff \forall (p,v)\in P\; \exists (p^\prime,v^\prime)\in P^\prime: (p,v)\accentset{\star}{\le}(p^\prime,v^\prime)
		\end{equation}
		Please note that, $P$ and $P\prime$ sets do not necessarily need to be parametric congruent. For instance, $ \emptyset \accentset{\star}{\le} \{(p_1,v_1),(p_2, v_2)\};\allowbreak ~~ \{(p_1,v_1)\} \accentset{\star}{\le} \{(p_1,*),(p_2, v_2)\}; ~~ \{(p_1,v_1),(p_2, *)\} \accentset{\star}{\le} \{(p_1,*),(p_2, v_2)\}$  all yield true, whereas $\{(p_1,v_1),(p_2, v_2)\} \accentset{\star}{\le} \{(p_1,*),\allowbreak (p_2, v_3)\}$ results in false.
	\end{definition}

	\begin{definition}[Parametric Sum]\label{def:ps}
		Parametric sum, denoted by $\accentset{\star}{+}$, is a binary operator defined on parametric congruent pairs or sets as follows:
		\begin{equation}
			(p,v) \accentset{\star}{+} (p,v^\prime) = \begin{cases}
				(p,v) & \text{if $v = v^\prime$}\\
				(p,\star) & \text{if $v \ne v^\prime$}
			\end{cases}
		\end{equation}
		\begin{equation}
			P \accentset{\star}{+} P^\prime = \bigcup_{\substack{(p,v)\in P\\(p,v^\prime)\in P^\prime}}\{(p,v) \accentset{\star}{+} (p,v^\prime)\}
		\end{equation}
		Furthermore, parametric sum has the additive identity property over sets, i.e $\emptyset \accentset{\star}{+} P^\prime =P^\prime$.
	\end{definition}
	
	\begin{definition}[Parametric Similarity Ratio]\label{def:pd}
		Parametric similarity ratio, denoted by $\accentset{\star}{\sim}$, is a bivariate function that quantifies the similarity between two parametric congruent pairs or sets, in terms of their common values. The range of $\accentset{\star}{\sim}$ lies in $(0,1]$ and is predefined for all parameter values. In this paper, we use the following definition:
		\begin{equation}
			\accentset{\star}{\sim}((p,v), (p,v^\prime))=\begin{cases}
				1 & \text{if $v=v^\prime$}\\
				\alpha & \text{if $v\ne v^\prime \wedge (v=\ast \veebar\: v^\prime=\ast)$}\\
				\beta & \text{if $v\ne v^\prime \wedge (v\ne\ast \wedge\: v^\prime\ne\ast)$}\\
			\end{cases}
		\end{equation}
		such that, $0<\beta<\alpha<1$. The parametric difference of two parameter sets $P$ and $P^\prime$ is defined as follows:
		\begin{equation}
			\accentset{\star}{\sim}(P, P^\prime)=\frac{\displaystyle \sum_{v=v^\prime}\accentset{\star}{\sim}((p,v), (p,v^\prime)) + \prod_{v\ne v^\prime}\accentset{\star}{\sim}((p,v), (p,v^\prime))}{\vert P\vert} 
		\end{equation}
	\end{definition}
	where $(p,v)\in P; (p,v^\prime)\in P^\prime$; and $\vert\dots\vert$ denotes the set cardinality. That being said, the possible values of the parametric similarity score of two sets lie in range $(\frac{\beta^{|P|}}{|P|}, 1]$. In this paper, we have set $\alpha=0.5$ and $\beta=0.1$.
	
	It can be easily shown that the operators explained in definitions \ref{def:pe}, \ref{def:ps}, and \ref{def:pd} all commutative property.
	
	As it was stated before, the construction of the holarchy begins with the initial SYS = \{DATA, ALG\} holons. For now, let's assume that the user query is properly processed by a utility member of SYS holon and is sent to its both sub-holons. Furthermore, at the moment, we assume that all the queries are for building the holarchy composed of the data and algorithm holons, and we postpone the creation of model holons until we present the details of training and testing processes. Finally, since the construction of the structures under the algorithm and data super-holons are the same, we explain the holonification method for the algorithms only. 
	
	The very first thing that the abstract ALG holon checks, upon receiving the query from the SYS, is the name of the algorithm to direct the request to a proper path down the holarchy. For this purpose, it initiates a bidding process based on Contract Net Protocol (CNP)\cite{cnp1980} and asks its sub-holons for their proposals. The proposal of its immediate sub-holon $h={}_{a}^{\{1\}}H_{i}^2$ is basically the result of $\accentset{\star}{\sim}((name,{a^q_{name}}), (name,h_{name}))$ where $a^q_{name}$ and $h_{name}$ are the names of the new algorithm requested to be added by the query and the name of the holon $h$ respectively. Having received the proposals, the ALG holon chooses the sub-holon with proposal value 1 -- there will be only one such proposal -- to forward the query to. If there is no such proposal, ALG spawns a new holon to represent the new algorithm's specifications. Algorithm~\ref{alg:start} presents the details of the process. The names of the variables and functions are chosen to be self-explanatory, and we have left comments wherever further explanations are needed. 
	
	\begin{algorithm}[H]
		\label{alg:start}
		\SetAlgoLined
		\KwIn{$a_i\left<name_{a_i}, type_{a_i}, P_{a_i}\right>$~~~~\tcp*[h]{an algorithm}}
		$\textsf{best} \gets \textsf{myself}$\tcp*{myself refers to the holon that calls this function}
		\ForEach{${}_{a}^{\{1\}}H_{i}^2 \in {}_{a}^{\{0\}}H_{i}^1$}{
			$\textsf{proposal} \gets \textsf{CFP}({}_{a}^{\{1\}}H_{i}^2,(name,name_{a_i}))$\tcp*{Call For Proposal}
			\If{$\textsf{proposal} = 1$}{
				$\textsf{best} \gets {}_{a}^{\{1\}}H_{i}^2$\;
				break\;
			}
		}
		\tcp{ask the ``best'' holon to add algorithm $a_i$ as new holon}
		\textsf{Ask(best, ``ADD'', $a_i$)}\;\label{alg:1:ask} 
		\caption{The new algorithm initiation function.}
	\end{algorithm}
	
	When an algorithm holon is asked to add a new holon (line~\ref{alg:1:ask} of algorithm~\ref{alg:start}), it runs the \textsf{AddAlgorithm} function, in which, a slightly different set of steps are followed. The first difference pertains to the way proposals are generated and chosen. Unlike before, the holons use their capability sets to calculate their similarity scores, and the one with the greater value is chosen. Furthermore, if the chosen holon is atomic, a new super-holon is created to contain both the chosen and the new holon. Otherwise, a new sub-holon is created for the algorithm under the current holon, or the new algorithm info will be passed down recursively until it is handled properly. The details of the process is presented in algorithm~\ref{alg:add}. In this algorithm, the arguments of function \textsf{CreateNewHolon} (lines~\ref{alg:2:createholon1}, \ref{alg:2:createholon2} and \ref{alg:2:createholon3}) are the name of the holon, the reference to its super-holon, the set of its capabilities, and finally the set of its skills, respectively. Additionally, there are a few utility functions that are called regularly in this algorithm. Function \textsf{Ask} in lines~\ref{alg:2:ask1}, \ref{alg:2:ask2}, and \ref{alg:2:ask3}, sends a request to the holon, specified in the first argument, to run an operation, specified in the second argument, based on the information that are provided in the remaining arguments. Upon receiving such a request, the holons call appropriate functions and answer the request accordingly. Functions \textsf{JoinHolon} and \textsf{UpdateCapability}, on the other hand, are used by the holon to properly move from one super-holon to another, and update its capability lists respectively. The details of these two functions are presented in algorithm~\ref{alg:join-update}. In this algorithm, function \textsf{Inform} is used to let a holon know about some facts. In the case of the presented functions, it is used to inform the super-holon about the updates in capabilities so that the super-holon makes the necessary updates accordingly. In all of these algorithms, it is assumed that holon ${}_{a}^{\{u\}}H_{i}^{l>0}$ is the one that calls and uses the functions.
	
	\begin{algorithm}
		\caption{The new algorithm addition function.}
		\label{alg:add}
		\SetAlgoLined
		\KwIn{ 
			$a_i\left<name_{a_i}, type_{a_i}, P_{a_i}\right>$~~~~\tcp*[h]{an algorithm}
		}
		\SetKwFunction{FMain}{AddAlgorithm}
		\SetKwProg{Fn}{Function}{}{}
		\Fn{\FMain{\textsf{$a_i$}}}{
			\tcp{myself refers to the holon that calls this function}
			\uIf(\tcp*[h]{I am ALG holon}){\textsf{myself.LEVEL $=1$}}{
				$\textsf{CreateNewHolon(}name_{a_i}, \textsf{myself}, P_{a_i}, \{\})$\;\label{alg:2:createholon1}
			}
			\uElseIf{\textsf{AmIAtomic()}}{
				\eIf(\tcp*[h]{The algorithm already exists}){$\boxed{\accentset{\star}{\sim}}(C_{myself}, P_{a_i}) = 1$}{
					\Return
				}{
					$\textsf{new\_super} \gets \textsf{Ask(myself.SUPER, ``CREATE-HOLON'', myself.C, myself.S)}$\;\label{alg:2:ask1}
					$\textsf{new\_holon} \gets \textsf{CreateNewHolon(}name_{a_i}, \emptyset, P_{a_i}, \{\}\textsf{)}$\;\label{alg:2:createholon2}
					\textsf{JoinHolon(new\_super)}\;
					\textsf{Ask(new\_holon, ``JOIN'', new\_super)}\;\label{alg:2:ask2}
				}
			}
			\Else{
				$\textsf{best} \gets \textsf{myself}$\;
				$\varsigma_{\max} \gets \boxed{\accentset{\star}{\sim}}(C_{myself}, P_{a_i})$\;
				\ForEach{${}_{a}^{\{i\}}H_{j}^{l+1} \in {}_{a}^{\{u\}}H_{i}^{l}$}{
					$\textsf{proposal} \gets \textsf{CFP}({}_{a}^{\{i\}}H_{j}^{l+1},P_{a_i})$\tcp*{Call For Proposal}
					\If{$\textsf{proposal} > \varsigma_{\max}$}{
						$\varsigma_{\max} \gets \textsf{proposal}$\;
						$\textsf{best} \gets {}_{a}^{\{i\}}H_{j}^{l+1}$\;
					}
				}
				\eIf{\textsf{best = myself}}{
					$\textsf{CreateNewHolon(myself.NAME, myself, } P_{a_i}, \{\})$\;\label{alg:2:createholon3}
					$\textsf{UpdateCapability(}P_{a_i}\textsf{)}$\;
				}{
					\textsf{Ask(best, ``ADD'', $a_i$)}\;\label{alg:2:ask3}
				}
			}
		}
		\textbf{End Function}
	\end{algorithm}
	
	\begin{algorithm}
		\caption{The algorithm for joining a holon and updating the capability.}
		\label{alg:join-update}
		\SetAlgoLined
		\SetKwFunction{FJoin}{JoinHolon}
		\SetKwProg{Fn}{Function}{}{}
		\Fn{\FJoin{\textsf{$new\text{-}super\text{-}holon$}}}{
			\tcp{myself refers to the holon that calls this function}
			\textsf{Update(``SUPER'', $new\text{-}super\text{-}holon$)}\;
			\textsf{Inform($new\text{-}super\text{-}holon$, ``CAPABILITY'', myself.C)}\;
		}
		\textbf{End Function}
		\BlankLine
		\BlankLine
		\SetKwFunction{FUpCap}{UpdateCapability}
		\SetKwProg{Fn}{Function}{}{}
		\Fn{\FUpCap{\textsf{$new\text{-}capability$}}}{
			\tcp{myself refers to the holon that calls this function}
			\If(\tcp*[h]{I am not ALG holon}){myself.LEVEL > 1}{
				$\textsf{myself.C} \gets \textsf{myself.C}\; \boxed{\accentset{\star}{+}}\; new\text{-}capability$\;
				\textsf{Inform(myself.SUPER, ``CAPABILITY'', myself.C)}\;
			}
		}
		\textbf{End Function}
	\end{algorithm}
	
	Figure~\ref{fig:addition-exm} demonstrates a step by step process of algorithms~\ref{alg:start} and \ref{alg:add} in a simple example case. For the sake of clarity, we have shown only the names and the values of the configuration parameters for each input algorithm (in red color). Furthermore, the identity and the name of the created holons are given in \textit{id:name} format inside of the nodes to help readers understand the order of the created holons. In part \ref{fig:addition-exm-a} of this figure, the ALG holon is asked to add algorithm $X$ with parameter values $\{a,b,c,d\}$ to its holarchy. Since ALG does not have any subs, a new holon is created as its sub-holon to represent algorithm $X$ (part \ref{fig:addition-exm-b}). When the system is asked to add algorithm $Y$ with parameter values $\{o,p,q\}$, ALG calls for proposal from its sub-holons, \textit{1:X} in this example, and since the proposal value is not 0 (due to the dissimilarity of its name), the new sub-holon \textit{2:Y} is created. In part \ref{fig:addition-exm-c}, the resulted holarchy is requested to add a new algorithm with name $X$ and parameter values $\{a,e,c,d\}$. First of all, ALG locates the sub-holon that pertains to the algorithms of name $X$, and then forwards the algorithm specifications to that holon. Upon receiving the request, holon \textit{1:X} calculates the parametric similarity ($\accentset{\star}{\sim}$ between its capabilities and the parameter set of the algorithm. The value \textcolor{blue}{\fbox{0.77}} printed in a box above the corresponding node in part \ref{fig:addition-exm-c} represents the value of similarity. Since there are not sub-holons to ask for their proposals, super-holon \textit{3:X} is created and both \textit{1:X} and the newly created holons for that algorithm \textit{4:X} join that super-holon (part \ref{fig:addition-exm-d}), and the capabilities of the holons are updated as explained in algorithm~\ref{alg:add}. The remaining parts of the figure show the same set of steps to handle three more new incoming algorithm info queries, and hence, for the sake of space, we do not explain them further.
	
	\begin{figure}[!htbp]
		\centering
		\begin{subfigure}[b]{0.3\textwidth}
			\centering
			\includegraphics[height=.15\textheight]{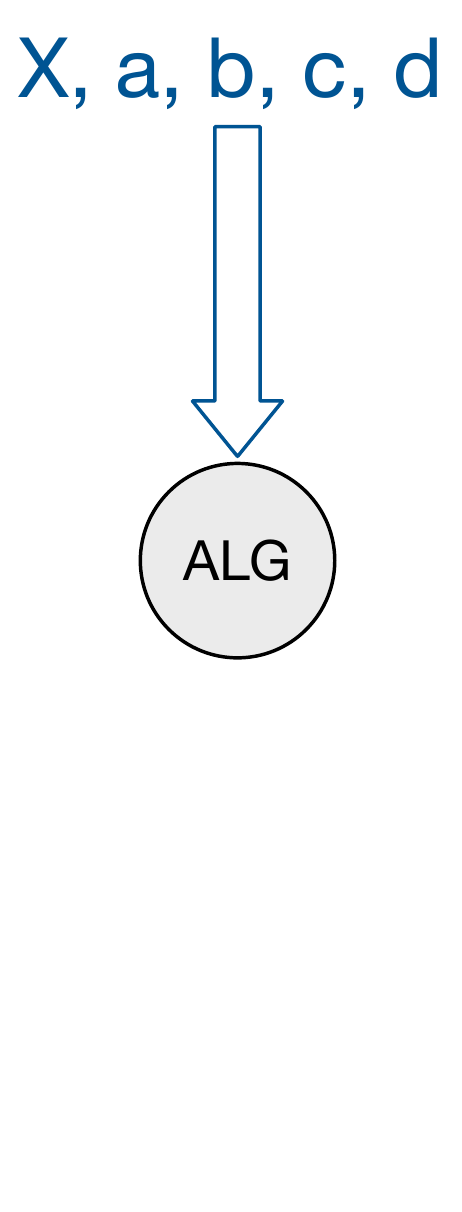}
			\caption{~}
			\label{fig:addition-exm-a}
		\end{subfigure}
		\begin{subfigure}[b]{0.3\textwidth}
			\centering
			\includegraphics[height=.15\textheight]{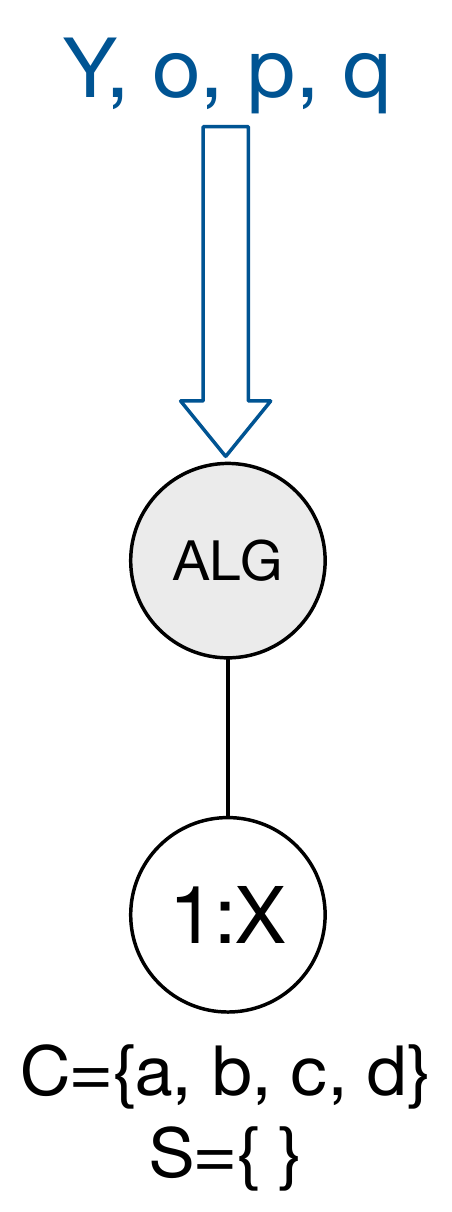}
			\caption{~}
			\label{fig:addition-exm-b}
		\end{subfigure}
		\begin{subfigure}[b]{0.3\textwidth}
			\centering
			\includegraphics[height=.15\textheight]{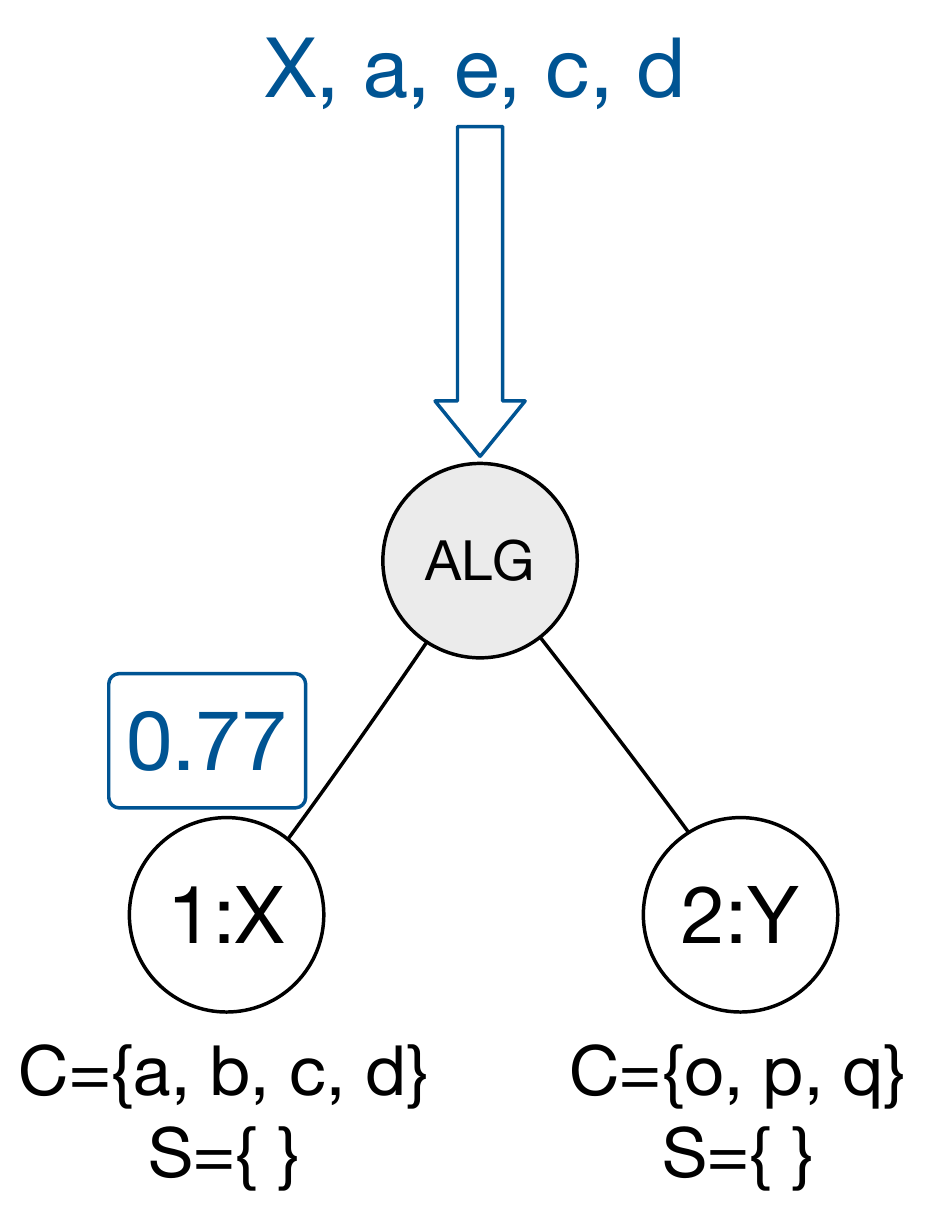}
			\caption{~}
			\label{fig:addition-exm-c}
		\end{subfigure}\\
		\begin{subfigure}[b]{0.4\textwidth}
			\centering
			\includegraphics[height=.2\textheight]{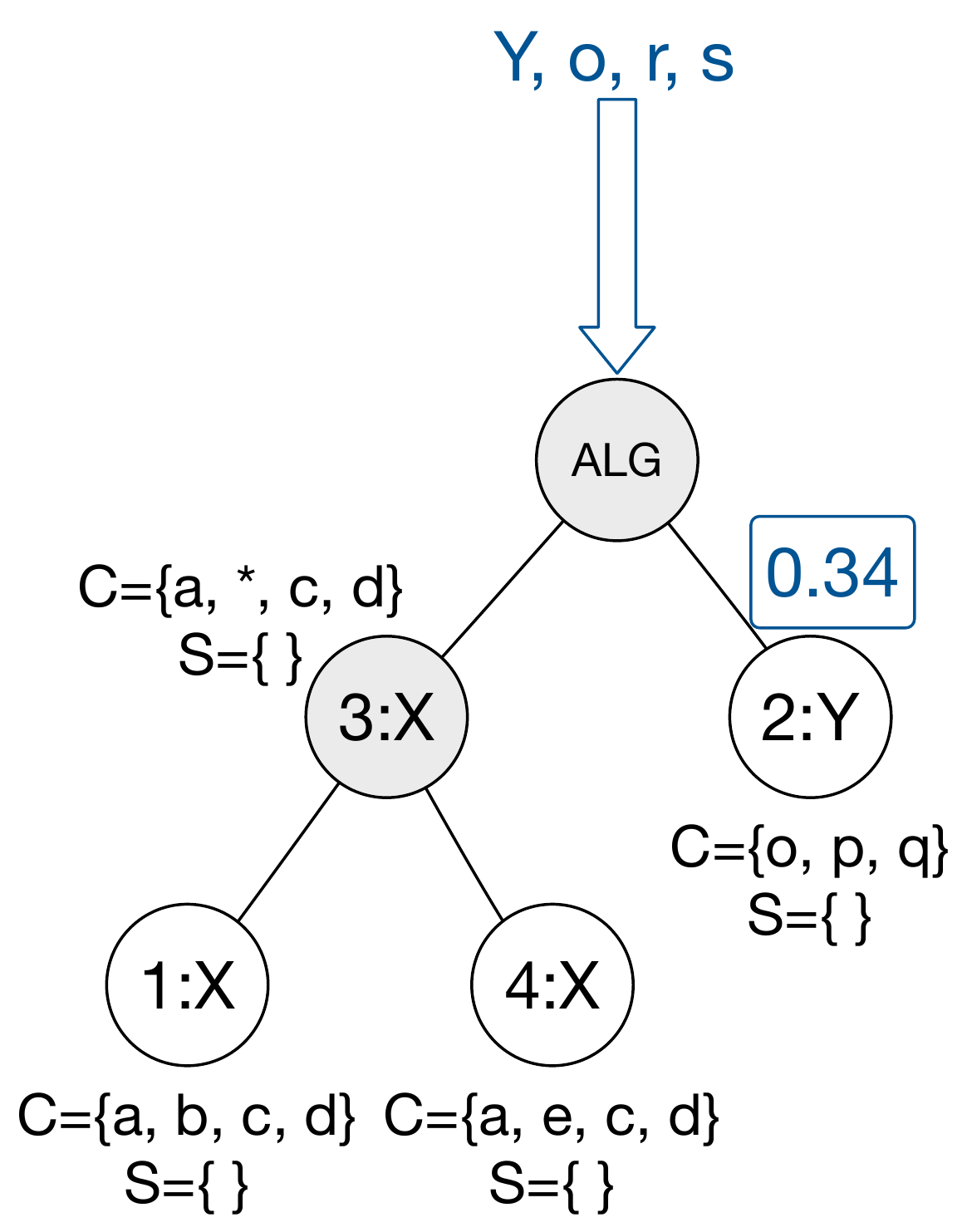}
			\caption{~}
			\label{fig:addition-exm-d}
		\end{subfigure}
		\begin{subfigure}[b]{0.4\textwidth}
			\centering
			\includegraphics[height=.2\textheight]{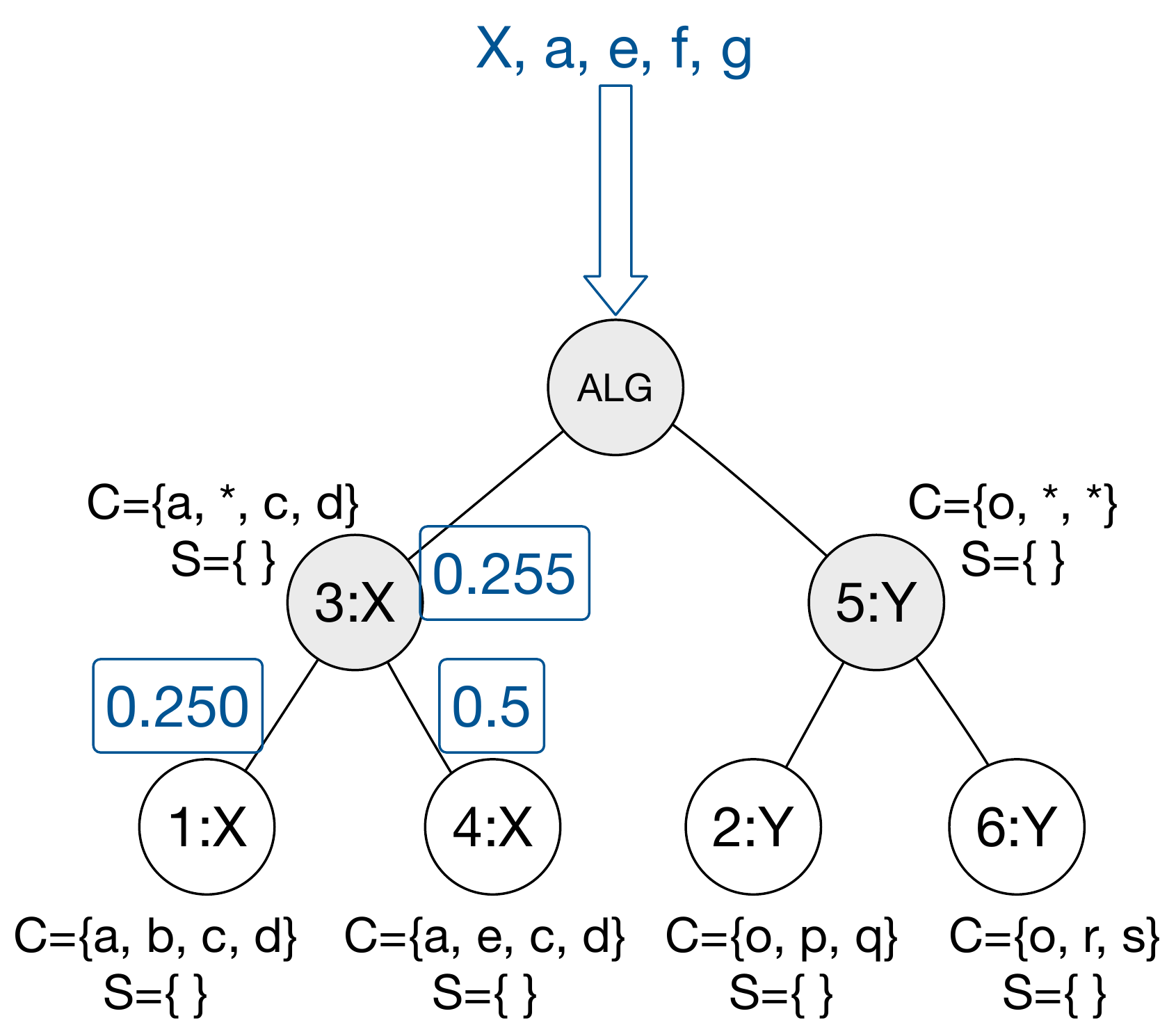}
			\caption{~}
			\label{fig:addition-exm-e}
		\end{subfigure}\\
		\begin{subfigure}[b]{0.4\textwidth}
			\centering
			\includegraphics[height=.25\textheight]{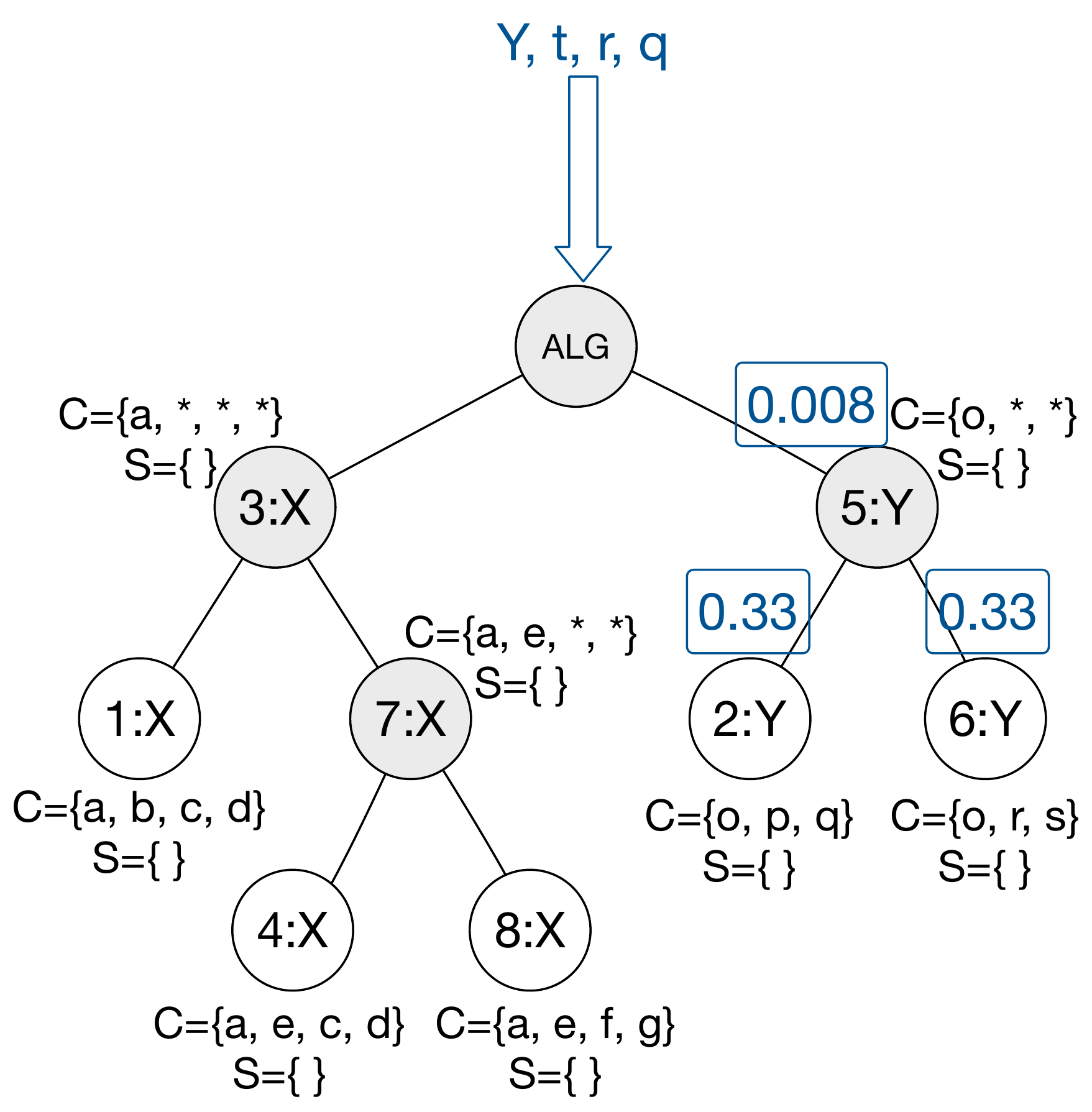}
			\caption{~}
			\label{fig:addition-exm-f}
		\end{subfigure}
		\begin{subfigure}[b]{0.4\textwidth}
			\centering
			\includegraphics[height=.25\textheight]{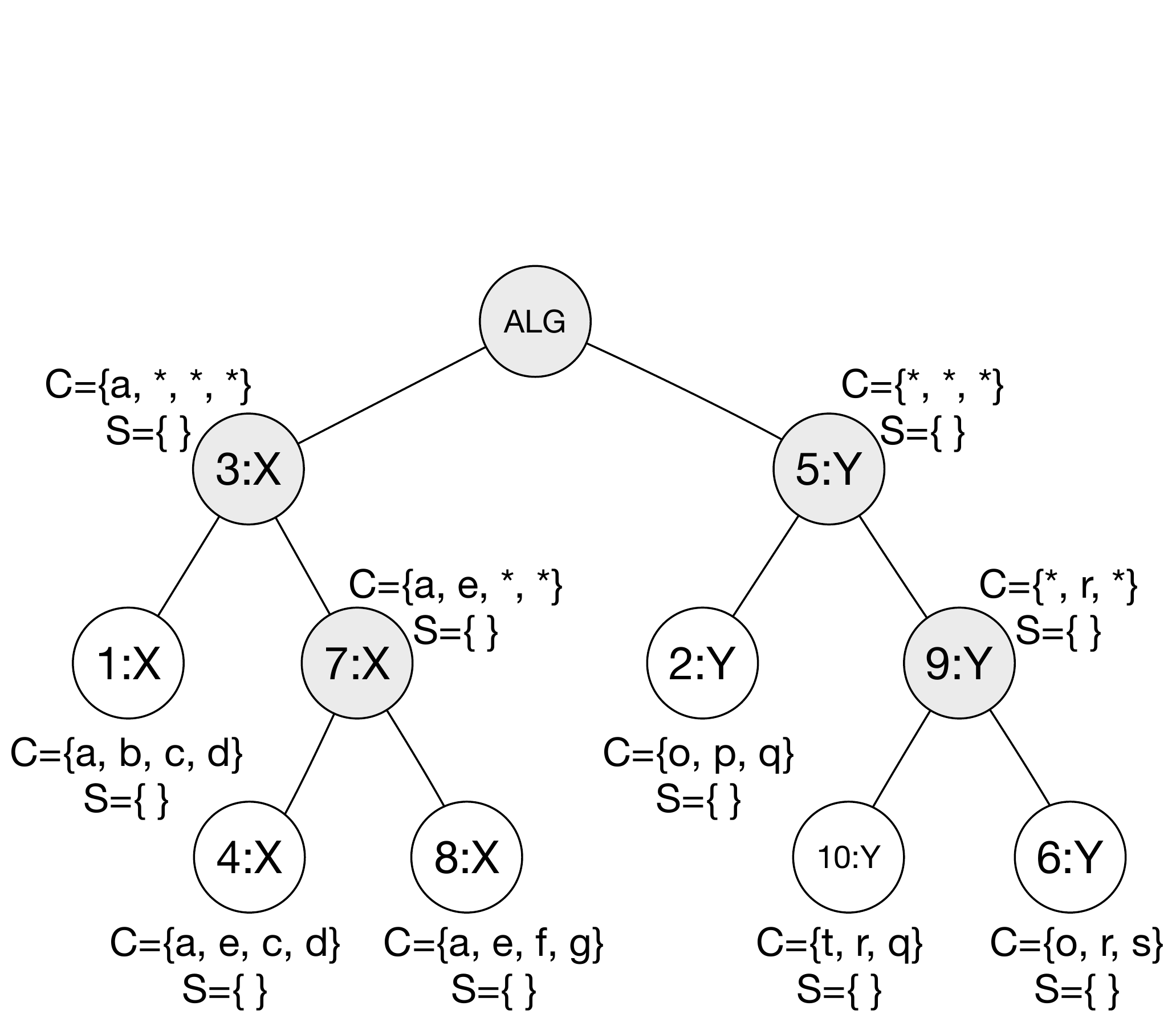}
			\caption{~}
			\label{fig:addition-exm-g}
		\end{subfigure}
		\caption{The step by step demonstration of algorithms~\ref{alg:start} and \ref{alg:add} in an example.}
		\label{fig:addition-exm}
	\end{figure}
	
	\subsubsection{Training and Testing}
	
	In the previous section, we discussed how the holarchy is built as new algorithms or data are added to the system. The important system component that was not taken into consideration, in the aforementioned process, is the model holon. As we have mentioned earlier, the model holons are to represent a practical realization of applying an algorithm on a dataset, therefore, there was no need for their creation as we merely added the definitions of algorithms to the holarchy. In this section, we present the details of the holarchical growth and alternations that happen when we try to train or test an algorithm on specific datasets.
	
	\paragraph{Training}
	By training, we mean creating a machine learning model that is tuned to answer queries about a specific dataset. In the proposed platform, when the system is asked to train a particular algorithm on a specified dataset, one of the following cases happens:
	\begin{itemize}
		\item The holarchy contains no holons that represent the data/algorithm or both of them. In this case, if enough information is provided about the missing component(s), they are added to the holarchy together with the requested model.
		\item The holarchy contains both the data and the algorithm but not the model. In this case, a model holon is created and properly linked to the data and algorithm holons.
		\item The holarchy contains the model, i.e. it has been trained before, with exactly the same configurations. Here, the system might inform the user about the duplication and provide some already available information.
	\end{itemize}
	
	In the remainder of this section, we assume that each training query contains only one algorithm-data pair information, and whenever there is a need to grow the holarchy, all the information about the data and the algorithm is available. Moreover, we skip the details of the interactions with the user in duplication cases.
	
	To deal with the first two cases of the aforementioned list at once, we follow a procedure very similar to the algorithms of inserting a new component, presented in the previous section. Strictly speaking, two passes will be carried out in the holarchy to train the model. In the first pass, which is presented in detail in algorithm~\ref{alg:train1st}, the holarchy is searched to locate the holons representing the data/algorithm, and in case either or none of the data or algorithm is available, the missing component(s) are added to the holarchy.  When the holon representing the algorithm is found/created, it is asked to spawn an empty model holon (lines~\ref{alg:4:spawn0}, \ref{alg:4:spawn1}, \ref{alg:4:spawn2} and \ref{alg:4:spawn3} of the algorithm). Function \textsf{SpawnModel()} creates the model holon properly, stores its address for the current query in the memory, and initiates a recursive address updating procedure in all of the super-holons in the path towards SYS. These addresses are used by all holons in the second phase of the training algorithm to send the commands directly to the appropriate destination without searching for it again. Please remember from before that the model holons are only created by the algorithm holons, i.e. AlgH, and in the case of DatH holons, the destination will be the atomic data holon that will provide the required data for the query. The details of spawn and address update functions are presented in algorithm~\ref{alg:spawn-address}. It is worth noting that algorithm~\ref{alg:train1st} is the same as algorithm~\ref{alg:add} with a few slight changes that are printed in red color to be distinguished easily. Therefore, these two functions can be easily merged into one in practice.    
	
	\begin{algorithm}
		\caption{The first pass of training process.}
		\label{alg:train1st}
		\SetAlgoLined
		\KwIn{ 
			$a_i\left<name_{a_i}, type_{a_i}, P_{a_i}\right>$~~~~\tcp*[h]{an algorithm}
		}
		\SetKwFunction{FMain}{TrainFirstPass}
		\SetKwProg{Fn}{Function}{}{}
		\Fn{\FMain{\textsf{$a_i$}}}{
			\tcp{myself refers to the holon that calls this function}
			\uIf(\tcp*[h]{I am ALG holon}){\textsf{myself.LEVEL $=1$}}{
				$\textsf{new\_holon} \gets \textsf{CreateNewHolon(}name_{a_i}, \textsf{myself}, P_{a_i}, \{\})$\;
				\textcolor{red}{\textsf{Ask(new\_holon, ``SPAWN'', ``MODEL'')}\tcp*{only in AlgHs}}\label{alg:4:spawn0}
			}
			\uElseIf{\textsf{AmIAtomic()}}{
				\eIf(\tcp*[h]{The algorithm already exists}){$\boxed{\accentset{\star}{\sim}}(C_{myself}, P_{a_i}) = 1$}{
					\textcolor{red}{\textsf{SpawnModel()}\tcp*{only in AlgHs}}\label{alg:4:spawn1}
				}{
					$\textsf{new\_super} \gets \textsf{Ask(myself.SUPER, ``CREATE-HOLON'', myself.C, myself.S)}$\;
					$\textsf{new\_holon} \gets \textsf{CreateNewHolon(}name_{a_i}, \emptyset, P_{a_i}, \{\}\textsf{)}$\;
					\textsf{JoinHolon(new\_super)}\;
					\textsf{Ask(new\_holon, ``JOIN'', new\_super)}\;
					\textcolor{red}{\textsf{Ask(new\_holon, ``SPAWN'', ``MODEL'')}\tcp*{only in AlgHs}}\label{alg:4:spawn2}
				}
			}
			\Else{
				$\textsf{best} \gets \textsf{myself}$\;
				$\varsigma_{\max} \gets \boxed{\accentset{\star}{\sim}}(C_{myself}, P_{a_i})$\;
				\ForEach{${}_{a}^{\{i\}}H_{j}^{l+1} \in {}_{a}^{\{u\}}H_{i}^{l}$}{\label{alg:4:cfp}
					$\textsf{proposal} \gets \textsf{CFP}({}_{a}^{\{i\}}H_{j}^{l+1},P_{a_i})$\tcp*{Call For Proposal}
					\If{$\textsf{proposal} > \varsigma_{\max}$}{
						$\varsigma_{\max} \gets \textsf{proposal}$\;
						$\textsf{best} \gets {}_{a}^{\{i\}}H_{j}^{l+1}$\;
						\textbf{break}\;
					}
				}
				\eIf{\textsf{best = myself}}{
					$\textsf{new\_holon} \gets \textsf{CreateNewHolon(myself.NAME, myself, } P_{a_i}, \{\})$\;
					$\textsf{UpdateCapability(}P_{a_i}\textsf{)}$\;
					\textcolor{red}{\textsf{Ask(new\_holon, ``SPAWN'', ``MODEL'')}\tcp*{only in AlgHs}}\label{alg:4:spawn3}
				}{
					\textcolor{red}{\textsf{Ask(best, ``TRAIN FIRST PASS'', $a_i$)}\;}\label{alg:4:ask1}
				}
			}
		}
		\textbf{End Function}
	\end{algorithm}
	
	\begin{algorithm}
		\caption{The algorithm for spawning a model holon and propagating its address to the top of holarchy.}
		\label{alg:spawn-address}
		\SetAlgoLined
		\SetKwFunction{FSpawn}{SpawnModel}
		\SetKwProg{Fn}{Function}{}{}
		\Fn{\FSpawn{}}{
			\tcp{myself refers to the holon that calls this function}
			$\textsf{new\_model} \gets \textsf{CreateModel(myself.NAME, myself, myself.C,\{\})}$\;
			$\textsf{InformAddress(qid}_i\textsf{,new\_model.ADDRESS)}$\;
		}
		\textbf{End Function}
		\BlankLine
		\BlankLine
		\SetKwFunction{FInfAdd}{InformAddress}
		\SetKwProg{Fn}{Function}{}{}
		\Fn{\FInfAdd{\textsf{\textit{query\text{-}id, address}}}}{
			\tcp{myself refers to the holon that calls this function}
			$\textsf{StoreAddress(query\text{-}id, address)}$\;
			$\textsf{Ask(myself.SUPER, "INFORM-ADDRESS",}$\\$\hfill\textsf{query\text{-}id, myslef.ADDRESS)}$\;
		}
		\textbf{End Function}
	\end{algorithm}
	
	The second phase of the training procedure begins as soon as the SysH holon is informed about the addresses of the training query from both ALG and DATA sub-holons. In the second pass, the SysH asks the ALG and DATA holons to initiate training, provided the address of the corresponding model/data holon (companion). Receiving this request, each holon forwards the request to the address that it has stored in its memory in the previous pass. This will continue until the request reaches the proper destination, i.e. the newly created model holon or the atomic data holon. Upon receiving the request, the data holon is configured to provide access to the specified model holon when needed. On the other hand, the model holon, as soon as it receives the training command, communicates with the data holon through the address that has been provided and then starts to train its inherited algorithm on the provided data. Two points should be noted. Firstly, the model holon joins the data holon and updates its capabilities, only after it successfully communicated the data holon and granted access to the data. Secondly, the skills of the model holon and all of its super-holons are updated after the training procedure finishes successfully, i.e. the algorithm is successfully trained on the data. This update can happen on the way the training results are sent back to the SysH holon. The details of the second pass are presented in algorithm~\ref{alg:train2nd}. Although it is out of the scope of this paper, it is worth remarking that the two-pass training process facilitates control mechanisms and integrity checks, especially when the data/algorithms are provided by third parties. Furthermore, the model holons, being the shared member of an algorithm and a data holon, help the system properly keep track of and handle the changes should any is made in the definition or access of algorithm/data holons in the entire system.
	
	\begin{algorithm}
		\caption{The second pass of the training process.}
		\label{alg:train2nd}
		\SetAlgoLined
		\SetKwFunction{FMain}{TrainSecondPass}
		\SetKwProg{Fn}{Function}{}{}
		\Fn{\FMain{\textsf{$query\text{-}id,target\text{-}address, companion\text{-}address$}}}{
			\tcp{myself refers to the holon that calls this function}
			\eIf{\textsf{target-address = myself.ADDRESS}}{
				\textsf{StartTraining(companion-address)}\;
			}{
				$\textsf{address} \gets \textsf{GetAddress(query-id)}$\;
				\textsf{Ask(address, ``TRAIN SECOND PASS'', \\\hfill query-id, companion-address)}\;
			}
		}
		\textbf{End Function}
	\end{algorithm}
	
	As it has been mentioned before, the proposed platform is designed in such a way that no operation blocks the flows of the past or future ones. In other words, while a specific agent/holon is busy processing a query, the other parts are open to accepting new requests without waiting for the previous results. This behavior is largely managed by the communication of the holons as described before, together with the local updates that each holon might make as needed. For instance, the first pass of a new training query that is being processed just before the second pass of another query begins can easily invalidate the previous address updates due to the potential structural changes it may cause. To overcome this problem, whenever a new data/algorithm holon is created, the holons in the vicinity of the change update their addresses accordingly. Figure~\ref{fig:address-update-exm} demonstrates the way addresses are updated. In part~\ref{fig:address-update-exm-a}, the target address entries of holon \textbf{0}'s memory for queries $q_1$ and $q_2$ are pointing to atomic holon \textbf{1}. In part~\ref{fig:address-update-exm-b}, the new holon \textbf{3} is created and inserted because of query $q_3$. As a result, the memory entries of holon \textbf{0} is updated to point to the newly created super-holon \textbf{2}, and holon \textbf{2} per se, points to the holon \textbf{1} now. As soon as the first pass of $q_3$ finishes, the corresponding addresses (shown in green color) are created for this query as explained before.
	
	\begin{figure}
		\centering
		\begin{subfigure}[b]{0.3\textwidth}
			\centering
			\includegraphics[height=.15\textheight]{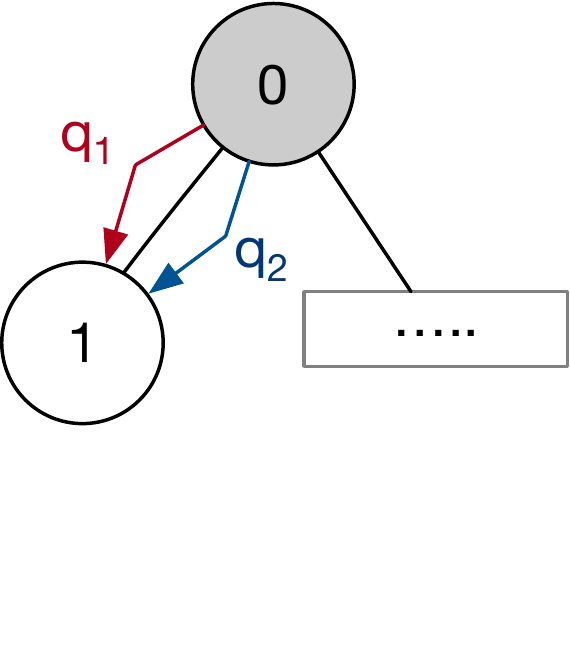}
			\caption{~}
			\label{fig:address-update-exm-a}
		\end{subfigure}
		\begin{subfigure}[b]{0.3\textwidth}
			\centering
			\includegraphics[height=.15\textheight]{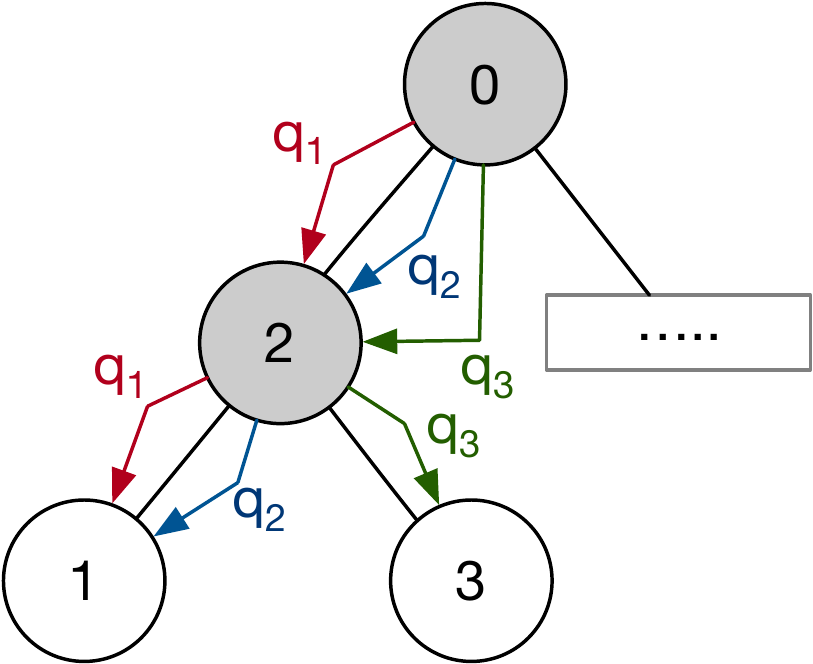}
			\caption{~}
			\label{fig:address-update-exm-b}
		\end{subfigure}
		\caption{An example demonstrating the way the access addresses are updated when a new holon is created.}
		\label{fig:address-update-exm}
	\end{figure}
	
	\paragraph{Testing}\label{sec:testing}
	We define testing identically with how it is contemporarily used in the machine learning/data science communities, which is a metric-based assessment of the efficacy of a trained system against particular pre-defined and ground-truthed datasets. The prerequisite of exerting a testing operation is defined by each atomic holon. Without loss of generality, the following testing method assumes that the test data is already in the holarchy and its information is explicitly provided. This needs a process very similar to the one we used in the first pass of the training algorithm (algorithm~\ref{alg:train1st}) to insert and/or retrieve the address of the test dataset.
	
	The testing process launches by the SYS holon passing the testing information and criteria to the ALG holon. As we would like to process and perform the operations in a batch, we allow the use of the parametric general symbol (defined in \ref{def:pg}) in the query. Receiving the request, each holon compares the criteria, consisting of the names and configuration parameters of the data and algorithms, with its own capabilities and skills, based on the definition of parametric inequality (definition~\ref{def:pin}). Formally, let $(a_j,P_{a_j})$ and $(d_j,P_{d_j})$ be the specifications of the testing algorithms and data, respectively. As soon as holon ${}_{a}^{\{u\}}H_i^l$ receives a testing request from its super-holon, it checks if the following statement is true: 
	\begin{equation}\label{eq:testing-check}
		\left((name,a_j)\accentset{\star}{\le} (name, name_{{}_{a}^{\{u\}}H_i^l})\right) \wedge \left(P_{a_j} \accentset{\star}{\le} C_{{}_{a}^{\{u\}}H_i^l}\right) \wedge \left(\exists s \in S_{{}_{a}^{\{u\}}H_i^l}: s = d_j\right)
	\end{equation}
	where the first part of the statement ensures that the process is at the correct sub-holarchy; the second term checks whether the available capabilities can cover the requested testing parameters, and finally, the third statement assures that the dataset of the same family has been introduced to the holarchy in the training phase. If this statement yields false, the holon informs its super-holon; otherwise, it sends the requests to its AlgH sub-holons and collects their answers to report to the super-holon. Algorithm~\ref{alg:test} provides the details of the testing mechanism. It is worth mentioning that line~\ref{alg:7:ask} does not imply a blocking process in collecting the results. In practice, the requests are sent and are collected later based on the identity of the testing query.
	
	\begin{algorithm}
		\caption{The testing process.}
		\label{alg:test}
		\SetAlgoLined
		\SetKwIF{If}{ElseIf}{Else}{if~(\endgraf}{\endgraf)~then}{else if}{else}{end if}%
		\KwIn{ 
			$q_j\left<id, (a_j,P_{a_j}), (d_j,P_{d_j}), O\right>$
		}
		\SetKwFunction{FMain}{Test}
		\SetKwProg{Fn}{Function}{}{}
		\Fn{\FMain{\textsf{$q_j\left<id, (a_j,P_{a_j}), (d_j,P_{d_j}), O\right>$}}}{
			\tcp{myself refers to the holon that calls this function}
			$\textsf{results} \gets \emptyset$\;
			\eIf{\textsf{AmIModel()}}{
				$\textsf{results} \gets \textsf{Perform(``TEST'', (}$ $d_j,P_{d_j} \textsf{), O)}$ \;
			}{
				\If{\begin{tabular}{@{\hspace*{1.5em}}l@{}}
						$ \left((name,a_j)\accentset{\star}{\le} (name, name_{{}_{a}^{\{u\}}H_i^l})\right) \wedge \left(P_{a_j} \accentset{\star}{\le} C_{{}_{a}^{\{u\}}H_i^l}\right) \wedge$\\\hfill $\left(\exists s \in S_{{}_{a}^{\{u\}}H_i^l}: s = d_j\right) \vee \left(\textsf{myself.LEVEL =} 1\right)$\\
					\end{tabular}
				}{
					\ForEach{${}_{\{a,m\}}^{\{i\}}H_{k}^{l+1} \in {}_{a}^{\{u\}}H_{i}^{l}$}{
						$\textsf{results} \gets \textsf{results}\; \bigcup\; \textsf{Ask}\left({}_{a}^{\{i\}}H_k^{l+1}, \textsf{``TEST''}, q_i\left<id, (a_j,P_{a_j}), (d_j,P_{d_j}), O_i\right>\right)$\;\label{alg:7:ask}
					}
				}
			}
			\Return \textsf{Inform(myself.SUPER, ``RESULTS'', results)}\;
		}
		\textbf{End Function}
	\end{algorithm}
	
	A parametric general symbol is not limited to be used only in the algorithm specification of the query. To support the symbol for data specification, a process very similar to algorithm~\ref{alg:test} should be utilized to collect all the data first and pass them to the testing procedure. In other words, the data holons will collect the information of the datasets that match the query, instead of the testing results and send them back to their super-holons recursively. 
	
	\section {Theoretical Analysis}
	Previous sections presented the details of the proposed distributed platform, without providing a further evaluation of its correctness and efficiency. This section delves into the theoretical analysis of platform in terms of computational complexities and the correctness of the presented algorithms 
	\subsection{Correctness}
	To make sure that the proposed platform is working correctly, we need to prove that all of its algorithms, i.e. training and testing, produce correct answers. We use soundness and completeness as the measures of correctness and, assuming the use of correctly implemented machine learning algorithms and flawless datasets, we prove them for both of the proposed training and testing procedures. This is done by a set of lemmas and theorems proven in the remainder of this section. Figure~\ref{fig:proof-conf} depicts the relationship between the parameters that the following lemmas and corollaries assume.
	
	\begin{figure}
		\centering
		\begin{subfigure}[b]{0.3\textwidth}
			\centering
			\includegraphics[height=.15\textheight]{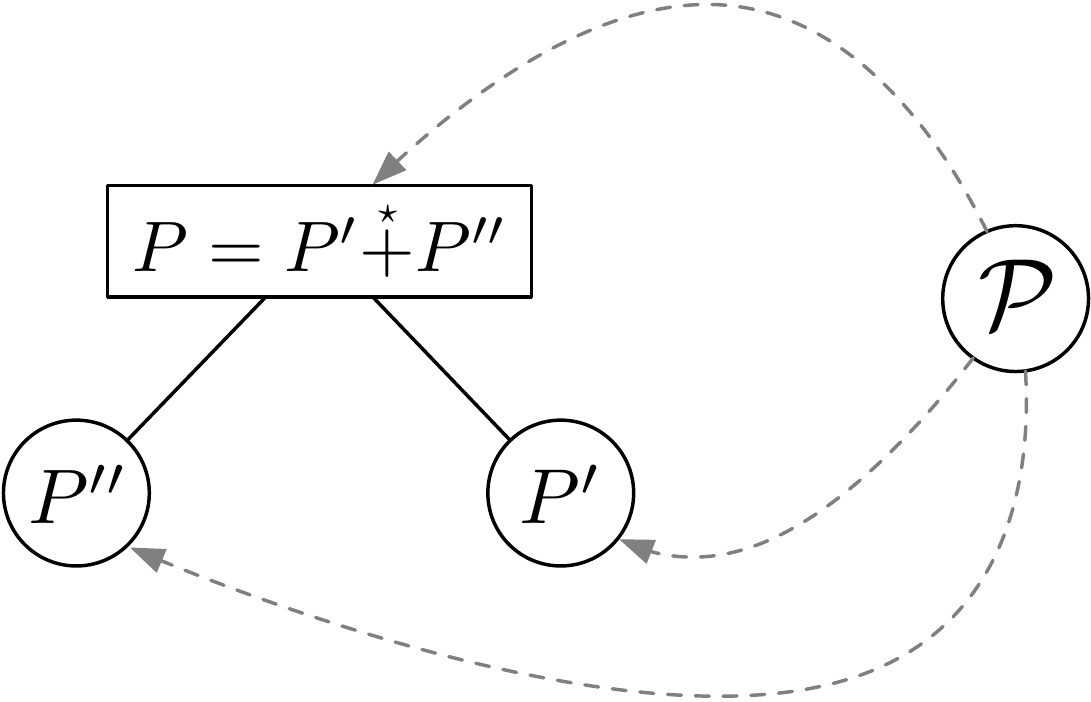}
			\caption{~}
			\label{fig:proof-conf-a}
		\end{subfigure}\hspace{2em}
		\begin{subfigure}[b]{0.35\textwidth}
			\centering
			\includegraphics[height=.35\textheight]{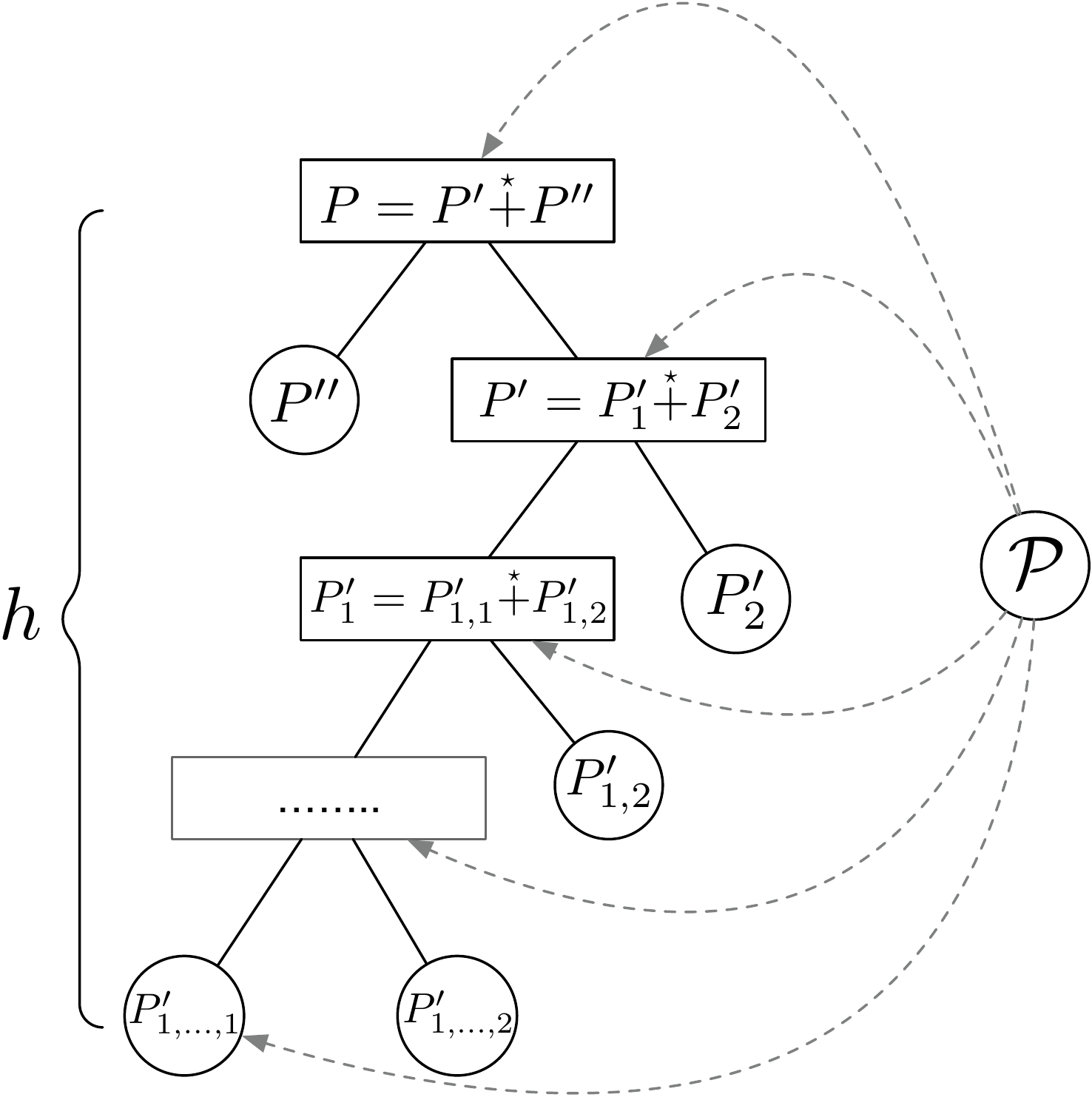}
			\caption{~}
			\label{fig:proof-conf-b}
		\end{subfigure}
		\caption{The relationships between the parameter sets used in the lemmas and corollaries. The dashed lines represent the multiple ways that a query with configuration parameter set $\mathcal{P}$ is processed by each agent in the hierarchy.}
		\label{fig:proof-conf}
	\end{figure}
		
	\begin{lemma}\label{lem:2}
		If $P, P', P''\in\mathscr{P}$ such that $P'\ne P''\text{ and } P=P'\accentset{\star}{+}P''$, where $\mathscr{P}$ is the set of all possible congruent sets of the same size, then for any non-general parametric set $\mathcal{P}\in\mathscr{P}$ (see figure~\ref{fig:proof-conf-a}), we have $\accentset{\star}{\sim}(\mathcal{P}, P)<\accentset{\star}{\sim}(\mathcal{P}, P')$ if and only if $|\mathcal{P}\cap P|<|\mathcal{P}\cap P'|$. In other words, this lemma ensures that the similarity ratio, $\accentset{\star}{\sim}$, correctly assigns a greater value for the capability of an agent that is more specialized in handling an incoming query.
	\end{lemma}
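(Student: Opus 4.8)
My plan is to reduce the comparison of the two similarity ratios to a comparison of two integers --- the number of coordinates on which $\mathcal{P}$ agrees with $P$ versus with $P'$ --- and then read off both directions of the equivalence almost mechanically. Write $n=|\mathcal{P}|=|P|=|P'|=|P''|$ (all congruent of the same size), let $Q(p)$ denote the value a congruent set $Q$ assigns to parameter $p$, and write $\mathcal{P}\cap Q$ for the set of parameters on which $\mathcal{P}$ and $Q$ carry the same value. Since $\mathcal{P}$ is non-general, Definition~\ref{def:pd} simplifies to
\[
\accentset{\star}{\sim}(\mathcal{P},Q)=\frac{1}{n}\Bigl(|\mathcal{P}\cap Q|+\prod_{p:\,\mathcal{P}(p)\ne Q(p)}\accentset{\star}{\sim}\bigl((p,\mathcal{P}(p)),(p,Q(p))\bigr)\Bigr),
\]
where each factor of the product is $\alpha$ if $Q(p)=\star$ and $\beta$ otherwise, hence lies in $(0,1)$, and where, when $\mathcal{P}=Q$, the product is empty and the whole expression equals $1$.

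The crux is a structural identity I would establish first: $\mathcal{P}\cap P=(\mathcal{P}\cap P')\cap(\mathcal{P}\cap P'')$. This is immediate from Definition~\ref{def:ps}: a coordinate $p$ has $P(p)\ne\star$ only when $P'(p)=P''(p)$, and then $P(p)$ equals that shared value; since also $\mathcal{P}(p)\ne\star$, the condition $\mathcal{P}(p)=P(p)$ is equivalent to $\mathcal{P}(p)=P'(p)=P''(p)$. From this I would extract two consequences. First, $|\mathcal{P}\cap P|\le|\mathcal{P}\cap P'|$ always, with equality precisely when $\mathcal{P}\cap P=\mathcal{P}\cap P'$; hence the right-hand side of the lemma is just the statement that this inclusion is strict. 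Second, because $P'\ne P''$ there is a coordinate with $P(p)=\star\ne\mathcal{P}(p)$, so $|\mathcal{P}\cap P|<n$: the product term for $P$ is genuinely nonempty, giving $\accentset{\star}{\sim}(\mathcal{P},P)<1$.

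With this in hand the two directions are short arithmetic. For ``if'', put $k=|\mathcal{P}\cap P|$, $k'=|\mathcal{P}\cap P'|$ and assume $k<k'$, i.e. $k+1\le k'$; then $\accentset{\star}{\sim}(\mathcal{P},P)=\tfrac{1}{n}(k+\pi)$ with $0<\pi<1$, so
\[
\accentset{\star}{\sim}(\mathcal{P},P)<\frac{k+1}{n}\le\frac{k'}{n}\le\accentset{\star}{\sim}(\mathcal{P},P'),
\]
the last step because the product term of $P'$ is nonnegative (and if $k'=n$ then $\accentset{\star}{\sim}(\mathcal{P},P')=1=k'/n$). For ``only if'' I would prove the contrapositive: if $k\ge k'$ then by the structural identity $k=k'$ and $\mathcal{P}\cap P=\mathcal{P}\cap P'=:S$, so the two ratios differ only through their products over $p\notin S$. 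Fix such a $p$. If $P(p)=\star$ the $P$-factor is $\alpha$, at least as large as the $P'$-factor (which is $\alpha$ or $\beta$). If $P(p)\ne\star$ then $P'(p)=P''(p)=P(p)\ne\mathcal{P}(p)$ with $P'(p)\ne\star$, so both factors equal $\beta$. In every case the $P$-factor dominates, so the product for $P$ dominates that for $P'$, yielding $\accentset{\star}{\sim}(\mathcal{P},P)\ge\accentset{\star}{\sim}(\mathcal{P},P')$ as required.

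The one genuinely substantive step is the structural identity together with the observation embedded in the ``only if'' argument: the coordinates on which $P$ loses an agreement relative to $P'$ are exactly the coordinates on which $P$ carries the general symbol $\star$ --- and those are precisely the coordinates the weighting $0<\beta<\alpha<1$ is designed to treat more favorably, so the loss of one unit of integer count is never fully compensated by the product term. Everything after that is bookkeeping. The only other point requiring care is the empty-product corner case of Definition~\ref{def:pd}, which must be read so that $\accentset{\star}{\sim}(\mathcal{P},\mathcal{P})=1$, consistent with the stated range $(0,1]$; this is used tacitly in the ``if'' direction when $k'=n$ and in the degenerate subcase where $S$ is the whole parameter set.
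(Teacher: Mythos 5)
Your proof is correct and follows essentially the same route as the paper's: expand $\accentset{\star}{\sim}(\mathcal{P},\cdot)$ into the integer agreement count plus the product of $\alpha$/$\beta$ factors, observe that the parametric sum forces $P$ to carry $\star$ wherever $P'$ and $P''$ disagree (so $\mathcal{P}\cap P\subseteq\mathcal{P}\cap P'$, and when the counts tie the product term for $P$ dominates that for $P'$), and conclude both directions from this dominance. Your write-up is somewhat more careful than the paper's at the edges (the explicit structural identity, the strictness $\pi<1$ coming from $P'\ne P''$, and the empty-product corner case), but the underlying argument is the same.
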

	\begin{proof}
		Assuming $\mathcal{P}=\{(\rho_i,\nu_i\}$, let's define $L=\{(\rho_i,v_i)\in P : v_i\ne \nu_i \wedge (v_i=\ast \veebar\: \nu_i=\ast)\}$,  $M=\{(\rho,v_i) : v_i\ne \nu_i \wedge (v_i\ne\ast \wedge \nu_i\ne\ast)\}$, and $L'$ and $M'$ sets similarly. Since $\forall P,P'\in\mathscr{P},\quad P\cap P' = \{(\rho,\nu): (\rho,\nu)\in P,P'\}$ and as all the sets are congruent, $\accentset{\star}{\sim}(\mathcal{P}, P)<\accentset{\star}{\sim}(\mathcal{P}, P')$ means:
		\begin{equation}
			\begin{aligned}
				\displaystyle \sum_{\substack{\nu_i=v_i^\prime\\i}}\accentset{\star}{\sim}((\rho_i,\nu_i), &(\rho_i,v_i^\prime)) + \prod_{\substack{\nu_i\ne v_i^\prime\\i}}\accentset{\star}{\sim}((\rho_i,\nu_i), (\rho_i,v_i^\prime)) <\\ & \displaystyle \sum_{\substack{\nu_i=v_i''\\i}}\accentset{\star}{\sim}((\rho_i,\nu_i), (\rho_i,v_i'')) + \prod_{\substack{\nu_i\ne v_i''\\i}}\accentset{\star}{\sim}((\rho_i,\nu_i), (\rho_i,v_i''))
			\end{aligned}
		\end{equation}
		\begin{align}
			\Longrightarrow
			|\mathcal{P}\cap P| + \alpha^{|L|}\beta^{|M|}<|\mathcal{P}\cap P'| + \alpha^{|L'|}\beta^{|M|'}
		\end{align}
		This needs that either $|\mathcal{P}\cap P|<|\mathcal{P}\cap P'|$, which proves the claim, or :
		\begin{equation}
			\begin{aligned}
				|\mathcal{P}\cap P|=|\mathcal{P}\cap P'|\quad &\Longrightarrow\quad \alpha^{|L|}\beta^{|M|}<\alpha^{|L'|}\beta^{|M'|}\\
				&\Longrightarrow\alpha^{|L|-|L'|}<\beta^{|M'|-|M|}
			\end{aligned}
		\end{equation}
		
		On the other hand, due to definitions~\ref{def:pe} and \ref{def:pd}, we must have $|L|+|M|=|L'|+|M'|$, $|L|<|L'|$, and $|M|>|M'|$. This requires that there are more general pairs in $P$ than in $P'$, which is not possible according to the definition~\ref{def:ps}.
		
		The second part of the proof is trivial, because of the fact that $|\mathcal{P}\cap P|>1$ and $|\mathcal{P}\cap P'|>1$ and as they are the dominant factors in the calculation of the parametric similarity ratio, we can write:
		\begin{equation}
			\begin{aligned}
				|\mathcal{P}\cap P|<|\mathcal{P}\cap P'|&\Longrightarrow|\mathcal{P}\cap P| + \alpha^{|L|}\beta^{|M|}<|\mathcal{P}\cap P'| + \alpha^{|L'|}\beta^{|M|'}\\&\Longrightarrow \accentset{\star}{\sim}(\mathcal{P}, P)<\accentset{\star}{\sim}(\mathcal{P}, P')
			\end{aligned}
		\end{equation}
	\end{proof}
	
	\begin{corollary}\label{cor:21}
		If $P, P', P''\in\mathscr{P}$ such that $P'\ne P''\text{ and } P=P'\accentset{\star}{+}P''$, where $\mathscr{P}$ is the set of all possible congruent sets of the same size, then for any non-general parametric set $\mathcal{P}\in\mathscr{P}$ (see figure~\ref{fig:proof-conf-a}), only one of the statements $\accentset{\star}{\sim}(\mathcal{P}, P)<\accentset{\star}{\sim}(\mathcal{P}, P')$ or $\accentset{\star}{\sim}(\mathcal{P}, P)<\accentset{\star}{\sim}(\mathcal{P}, P'')$ will be true. To put it another way, according to this corollary, only one of the children of an agent node can have a similarity ratio larger than its parent. That is a parent agent can safely stop processing the proposals from its subordinates as soon as it receives one that has a value, i.e. similarity ratio, greater than its own. 
	\end{corollary}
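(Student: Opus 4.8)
The plan is to turn both similarity comparisons into combinatorial statements via Lemma~\ref{lem:2} and then show those statements exclude each other. First I would apply Lemma~\ref{lem:2} twice — once to the pair $(P,P')$, which is legitimate because $P=P'\accentset{\star}{+}P''$, and once to the pair $(P,P'')$, rewriting $P=P''\accentset{\star}{+}P'$ by the commutativity of $\accentset{\star}{+}$ — to obtain that $\accentset{\star}{\sim}(\mathcal{P},P)<\accentset{\star}{\sim}(\mathcal{P},P')$ is equivalent to $|\mathcal{P}\cap P|<|\mathcal{P}\cap P'|$, and $\accentset{\star}{\sim}(\mathcal{P},P)<\accentset{\star}{\sim}(\mathcal{P},P'')$ is equivalent to $|\mathcal{P}\cap P|<|\mathcal{P}\cap P''|$. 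So the corollary reduces to the claim that these two cardinality inequalities cannot both hold. The workhorse here is the identity $\mathcal{P}\cap P=(\mathcal{P}\cap P')\cap(\mathcal{P}\cap P'')$, which comes straight out of Definition~\ref{def:ps}: since $\mathcal{P}$ is non-general, a pair $(\rho,\nu)\in\mathcal{P}$ lies in $P$ iff $P$ carries the concrete value $\nu$ at $\rho$, and the parametric sum yields a concrete value at $\rho$ exactly when $P'$ and $P''$ carry the same value there — so that value must be $\nu$ and $(\rho,\nu)$ belongs to both $\mathcal{P}\cap P'$ and $\mathcal{P}\cap P''$; the converse is immediate. In particular $\mathcal{P}\cap P\subseteq\mathcal{P}\cap P'$ and $\mathcal{P}\cap P\subseteq\mathcal{P}\cap P''$, so each candidate inequality is really a \emph{strict} inclusion.

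With the identity in hand, the finishing move is to show the two query-traces $\mathcal{P}\cap P'$ and $\mathcal{P}\cap P''$ are nested, i.e. one contains the other. Granting that, if say $\mathcal{P}\cap P'\subseteq\mathcal{P}\cap P''$ then the identity collapses $\mathcal{P}\cap P$ to $\mathcal{P}\cap P'$, so $|\mathcal{P}\cap P|=|\mathcal{P}\cap P'|$ and the first inequality fails; the symmetric inclusion kills the second; and since the traces are comparable, at least one of the two strict inclusions never occurs, which is exactly the statement. I would then read off the algorithmic consequence of interest: in the proposal loops of algorithms~\ref{alg:add} and \ref{alg:train1st}, once a child returns a similarity score above the parent's own score the parent may stop, since no further child can also exceed it.

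The hard part — and the step I expect to absorb most of the effort — is the nesting of $\mathcal{P}\cap P'$ and $\mathcal{P}\cap P''$. This does \emph{not} follow from $P=P'\accentset{\star}{+}P''$ alone: in the abstract, two congruent sets of the same size can perfectly well agree with $\mathcal{P}$ on two different, disjoint coordinate sets, in which case neither trace contains the other. So the nesting has to be extracted from the way the holarchy is actually built (and, presumably, from the configuration pictured in figure~\ref{fig:proof-conf-a}) — an invariant to the effect that a composite holon and its sub-holons diverge only along a chain-ordered family of coordinate sets, maintained by \textsf{CreateNewHolon}, \textsf{JoinHolon} and \textsf{UpdateCapability} as Algorithm~\ref{alg:add} grows the structure. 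Formulating that invariant so that it actually delivers the nesting of the \emph{query-traces} (rather than merely of the underlying coordinate sets, which is weaker), and verifying its preservation under every structural operation, is where the proof stands or falls; the cardinality bookkeeping around it is routine.
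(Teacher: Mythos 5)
Your reduction via Lemma~\ref{lem:2} to the two cardinality statements is sound and coincides with the opening of the paper's own proof, and your identity $\mathcal{P}\cap P=(\mathcal{P}\cap P')\cap(\mathcal{P}\cap P'')$ (for non-general $\mathcal{P}$) is a correct and nice consequence of Definition~\ref{def:ps} that the paper does not state. But the proposal does not prove the corollary: the nesting of $\mathcal{P}\cap P'$ and $\mathcal{P}\cap P''$, which you yourself identify as the load-bearing step, is never established. You gesture at an invariant of the holarchy construction that would have to be formulated and shown to be preserved by \textsf{CreateNewHolon}, \textsf{JoinHolon} and \textsf{UpdateCapability}, but no such invariant is stated, let alone verified, so the argument stops exactly where the statement still needs proving. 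As submitted this is a genuine gap, not a routine detail: the corollary is invoked for arbitrary capability sets satisfying $P=P'\accentset{\star}{+}P''$, so a completed proof cannot simply defer to ``how the structure is built'' without making that dependence explicit.

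For comparison, the paper closes the argument differently: from $|\mathcal{P}\cap P|<|\mathcal{P}\cap P'|$ and $|\mathcal{P}\cap P|<|\mathcal{P}\cap P''|$ it takes a witness $(p_i,v_i')\in\mathcal{P}\cap P'$ with $(p_i,v_i')\notin P$ and a witness $(p_i,v_i'')\in\mathcal{P}\cap P''$ with $(p_i,v_i'')\notin P$ \emph{at the same parameter} $p_i$, and then rules out both cases: $v_i'=v_i''$ would force the pair into $P$ by Definition~\ref{def:ps}, while $v_i'\ne v_i''$ would put two values of one parameter into $\mathcal{P}$, contradicting Definition~\ref{def:pset}. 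Note, however, that the same-parameter assumption in that step is precisely the point you are worried about: the cardinality inequalities only guarantee witnesses at \emph{some} parameters, not at a common one, and when the witnesses live at different parameters — your ``disjoint coordinate sets'' scenario, e.g.\ $\mathcal{P}=\{(p,1),(q,2)\}$, $P'=\{(p,1),(q,3)\}$, $P''=\{(p,4),(q,2)\}$, so $P=\{(p,\star),(q,\star)\}$ — both children's similarity ratios exceed the parent's. So your diagnosis of where the difficulty sits is accurate and your skepticism that it follows from $P=P'\accentset{\star}{+}P''$ alone is well founded; but diagnosing the obstacle is not the same as overcoming it, and without the nesting lemma (or an added hypothesis forcing the two witnesses onto a common parameter) your proof does not reach the stated conclusion.
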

	\begin{proof}
		Let's assume that both $\accentset{\star}{\sim}(\mathcal{P}, P)<\accentset{\star}{\sim}(\mathcal{P}, P')$ and $\accentset{\star}{\sim}(\mathcal{P}, P)<\accentset{\star}{\sim}(\mathcal{P}, P'')$ are true. According to lemma~\ref{lem:2}, this means that $|\mathcal{P}\cap P|<|\mathcal{P}\cap P'|$ and $|\mathcal{P}\cap P|<|\mathcal{P}\cap P''|$. This means both of the following statements are true:
		\begin{align}
			\exists (p_i,v'_i\ne *)\in P' \quad &:\quad (p_i,v'_i)\in \mathcal{P} \wedge (p_i,v'_i)\not\in P\\
			\exists (p_i,v''_i\ne *)\in P'' \quad &:\quad (p_i,v''_i)\in \mathcal{P} \wedge (p_i,v''_i)\not\in P
		\end{align}
		There are two possible cases: (i) $v'_i=v''_i$ or (ii) $v'_i\ne v''_i$. According to definition~\ref{def:ps}, the first case is not possible, because that will cause the corresponding pair to appear in set $P$ as well. On the other hand, the second case means that we have two different values for the the same parameter in set $\mathcal{P}$, which contradicts the definition of parametric sets (definition~\ref{def:pset}). Therefore, we conclude that the claim of the corollary is correct.  
	\end{proof}
	\begin{lemma}\label{lem:4}
		If $P, P', P''\in\mathscr{P}$ such that $P'\ne P''\text{ and } P=P'\accentset{\star}{+}P''$, where $\mathscr{P}$ is the set of all possible congruent sets of the same size, then for any parametric set $\mathcal{P}\in\mathscr{P}$ (figure~\ref{fig:proof-conf-a}), if $\mathcal{P}\accentset{\star}{\le}P'$ and/or $\mathcal{P}\accentset{\star}{\le}P''$ then $\mathcal{P}\accentset{\star}{\le}P$. 
	\end{lemma}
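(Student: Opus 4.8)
The plan is to exploit the fact that all four sets lie in $\mathscr{P}$, so they are pairwise parametric congruent: for every parameter $\rho$ there is exactly one pair with first coordinate $\rho$ in each of $\mathcal{P}$, $P'$, $P''$ and $P$. Write $\nu$, $v'$, $v''$ and $w$ for the corresponding values at a fixed parameter $\rho$. By the definition of parametric sum (Definition~\ref{def:ps}), $w = v'$ when $v' = v''$ and $w = \star$ when $v' \ne v''$. Since $\accentset{\star}{\le}$ on congruent sets is verified coordinate-wise (Definition~\ref{def:pin}), the whole statement reduces to the single-coordinate implication: if $(\rho,\nu)\accentset{\star}{\le}(\rho,v')$ then $(\rho,\nu)\accentset{\star}{\le}(\rho,w)$ — and symmetrically with $v''$ in place of $v'$, which requires no separate argument by the commutativity of $\accentset{\star}{+}$ (noted after Definition~\ref{def:pd}). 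If both hypotheses of the lemma hold at once, we simply invoke either one.

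The core of the proof is then a short case split on the hypothesis $(\rho,\nu)\accentset{\star}{\le}(\rho,v')$, which by Definition~\ref{def:pin} means $\nu = v'$ or $\nu = \star$ or $v' = \star$. If $\nu = \star$, then $(\rho,\nu)\accentset{\star}{\le}(\rho,w)$ holds trivially. If $v' = \star$, then $w = \star$ regardless of $v''$ (if $v'' = \star$ then $v' = v''$ and $w = v' = \star$; if $v'' \ne \star$ then $v' \ne v''$ and $w = \star$ by definition), so again $(\rho,\nu)\accentset{\star}{\le}(\rho,w)$. Finally, if $\nu = v'$ with neither equal to $\star$, then either $v' = v''$, giving $w = v' = \nu$, or $v' \ne v''$, giving $w = \star$; in both subcases $(\rho,\nu)\accentset{\star}{\le}(\rho,w)$. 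Since this holds at every parameter, collecting the coordinates yields $\mathcal{P}\accentset{\star}{\le}P$.

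I do not expect a genuine obstacle here: the statement is essentially a monotonicity property of $\accentset{\star}{+}$ with respect to $\accentset{\star}{\le}$, and once reduced to a single coordinate it is a finite check. The only point that needs a little care is justifying the reduction to coordinates itself — this uses that $\mathcal{P}$, $P'$, $P''$, $P$ are all congruent and of equal size (which is exactly what membership in $\mathscr{P}$ gives), so that the ``$\forall$ pair $\exists$ pair'' quantifier structure of Definition~\ref{def:pin} collapses to an unambiguous coordinate-wise comparison. A secondary bookkeeping point is to state explicitly, as above, why the ``$P''$ case'' needs no independent treatment, namely the commutativity of the parametric sum.
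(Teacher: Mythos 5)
Your proposal is correct and follows essentially the same route as the paper: both reduce $\mathcal{P}\accentset{\star}{\le}P$ to a pair-by-pair case analysis driven directly by Definitions~\ref{def:pin} and \ref{def:ps}, with the paper phrasing the cases via membership ($(\rho,\nu)\in P'$ or $(\rho,\ast)\in P'$ against the four possibilities for $P''$) where you phrase them coordinate-wise in terms of the values $\nu, v', v'', w$. Your explicit treatment of the $\nu=\star$ subcase and the appeal to commutativity of $\accentset{\star}{+}$ for the $P''$ hypothesis are minor tidy-ups of the same argument, not a different approach.
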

	\begin{proof}
		According to definition~\ref{def:pin}, we have:
		\begin{align}
			\mathcal{P}\accentset{\star}{\le}P'\Longrightarrow \forall (\rho,\nu)\in\mathcal{P} : (\rho,\nu)\in P' or (\rho,\ast)\in P'
		\end{align}
		On the other hand, based on the membership of $(\rho,\nu)$ in $P''$ and definition~\ref{def:ps}, only one of the following cases will happen.
		\begin{align}
			(\rho,\nu)\in P'\wedge (\rho,\nu)\in P'' &\Longrightarrow (\rho,\nu)\in P\Longrightarrow\mathcal{P}\accentset{\star}{\le}P\\
			(\rho,\nu)\in P'\wedge (\rho,\nu)\not\in P'' &\Longrightarrow (\rho,\ast)\in P\Longrightarrow\mathcal{P}\accentset{\star}{\le}P\\
			(\rho,\ast)\in P'\wedge (\rho,\nu)\in P'' &\Longrightarrow (\rho,\ast)\in P\Longrightarrow\mathcal{P}\accentset{\star}{\le}P\\
			(\rho,\ast)\in P'\wedge (\rho,\nu)\not\in P'' &\Longrightarrow (\rho,\ast)\in P\Longrightarrow\mathcal{P}\accentset{\star}{\le}P
		\end{align}
		It can be seen that regardless of the membership of $(\rho,\nu)$ in $P''$ the claim is proven. We can use a similar process for $\mathcal{P}\accentset{\star}{\le}P''$ and finally show the correctness of this lemma.
	\end{proof}
	
	\begin{corollary}\label{cor:41}
		If in $P=P'\accentset{\star}{+}P''$, any of the sets on right hand side are the result of recursively applying operator $\accentset{\star}{+}$ on two or more other parametric sets (figure~\ref{fig:proof-conf-b}), i.e. 
		\begin{equation}\label{eq:cor41:1}
			\begin{aligned}
				P=\left(P'_1\accentset{\star}{+}P'_2 \right)\accentset{\star}{+}P'' &= \left(\left(P'_{1,1}\accentset{\star}{+}P'_{1,2}\right)\accentset{\star}{+}P'_2\right)\accentset{\star}{+}P'' =\dots\\&= \left(\left(\dots\left(P'_{\underbrace{1,...,1}_h}\accentset{\star}{+}P'_{\underbrace{1,...,1,2}_h}\right)\accentset{\star}{+}\dots\right)\accentset{\star}{+}P'_2\right)\accentset{\star}{+}P''
			\end{aligned}
		\end{equation}
		then
		\begin{equation}
			\mathcal{P}\accentset{\star}{\le}P'_{\underbrace{1,...,1}_h}\Longrightarrow \mathcal{P}\accentset{\star}{\le}P
		\end{equation}
		That is, if there is at least one atomic/leaf agent at the bottom of the hierarchy that can fulfill a received query, the request will be properly directed to that agent through the internal agent nodes.
	\end{corollary}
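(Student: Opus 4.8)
The plan is a straightforward induction that uses Lemma~\ref{lem:4} as the single inductive step. First I would make the recursion in \eqref{eq:cor41:1} precise: reading that expression from the inside out, it encodes a finite chain of parametric sets $P_0,P_1,\dots,P_n=P$ with $P_0=P'_{\underbrace{1,\dots,1}_h}$ and $P_k=P_{k-1}\accentset{\star}{+}Q_k$ for suitable operands $Q_k$ (the successive ``sibling'' sets, and at the last two steps $P'_2$ and $P''$). Before starting the induction I would record the structural invariant that $\accentset{\star}{+}$ is a total operation on parametric congruent sets of a fixed size that returns a congruent set of the same size (immediate from Definition~\ref{def:ps}), so every $P_k$ lies in the same class $\mathscr{P}$ as $\mathcal{P}$ and $P'_{\underbrace{1,\dots,1}_h}$; hence the hypotheses of Lemma~\ref{lem:4} are met at each step. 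If at some step $P_{k-1}=Q_k$ the sum is trivial and the step may be skipped, so the standing assumption that the sibling sets are distinct is harmless.

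The induction is on the length $n\ge 1$ of this chain. For $n=1$ we have $P=P'_{\underbrace{1,\dots,1}_h}\accentset{\star}{+}Q_1$, and Lemma~\ref{lem:4} directly gives that $\mathcal{P}\accentset{\star}{\le}P'_{\underbrace{1,\dots,1}_h}$ implies $\mathcal{P}\accentset{\star}{\le}P$. For the inductive step, assume the claim for chains of length $n-1$; then $P_{n-1}$ is reached from $P_0=P'_{\underbrace{1,\dots,1}_h}$ by a chain of length $n-1$, so by the induction hypothesis $\mathcal{P}\accentset{\star}{\le}P'_{\underbrace{1,\dots,1}_h}$ implies $\mathcal{P}\accentset{\star}{\le}P_{n-1}$, and since $P=P_{n-1}\accentset{\star}{+}Q_n$ a final application of Lemma~\ref{lem:4} yields $\mathcal{P}\accentset{\star}{\le}P$. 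Chaining the implications proves the corollary; interpreted on the holarchy it says exactly that if some atomic/leaf holon with capability set $P'_{\underbrace{1,\dots,1}_h}$ can satisfy a query $\mathcal{P}$, then every internal holon on the path from it up to the root also passes the parametric-inequality test of \eqref{eq:testing-check}, so the routing performed by the insertion and testing algorithms never prunes a branch that contains a fulfilling leaf.

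The only real difficulty here is bookkeeping rather than substance: \eqref{eq:cor41:1} is written with informal ``$1,\dots,1$'' subscripts, so the proof must first turn ``recursively applying $\accentset{\star}{+}$'' into a concrete finite chain on which the induction can be run, and must verify that membership in $\mathscr{P}$ (congruence together with common cardinality) is preserved along that chain so that Lemma~\ref{lem:4} keeps applying. Once this setup is in place each step is a verbatim invocation of Lemma~\ref{lem:4}, and nothing further is required.
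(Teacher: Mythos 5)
Your proof is correct and follows essentially the same route as the paper: the paper's argument is exactly the chain of implications $\mathcal{P}\accentset{\star}{\le}P'_{\underbrace{1,\dots,1}_h}\Longrightarrow\dots\Longrightarrow\mathcal{P}\accentset{\star}{\le}P'\Longrightarrow\mathcal{P}\accentset{\star}{\le}P$, each step being an application of Lemma~\ref{lem:4} along the path from the leaf to the root. You merely make the induction on the chain length explicit and add the (harmless, correct) bookkeeping check that congruence and cardinality are preserved under $\accentset{\star}{+}$, which the paper leaves implicit.
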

	\begin{proof}
		Based on the relationship between the sets, according to equation~\ref{eq:cor41:1}, we can write:
		\begin{equation}
			\begin{aligned}
				\mathcal{P}\accentset{\star}{\le}P'_{\underbrace{1,...,1}_h}\Longrightarrow \mathcal{P}\accentset{\star}{\le}P'_{\underbrace{1,...,1}_{h-1}}&\Longrightarrow\dots\Longrightarrow \mathcal{P}\accentset{\star}{\le}P'_{1,1}\\&\Longrightarrow \mathcal{P}\accentset{\star}{\le}P'_{1}\Longrightarrow \mathcal{P}\accentset{\star}{\le}P'\Longrightarrow \mathcal{P}\accentset{\star}{\le}P
			\end{aligned}
		\end{equation}
	\end{proof}
	\begin{corollary}\label{cor:42}
		If $P, P', P''\in\mathscr{P}$ such that $P'\ne P''\text{ and } P=P'\accentset{\star}{+}P''$, where $\mathscr{P}$ is the set of all possible congruent sets of the same size, then for any parametric set $\mathcal{P}\in\mathscr{P}$ (figure~\ref{fig:proof-conf-a}), if $\mathcal{P}\accentset{\star}{\not\le}P'$ and $\mathcal{P}\accentset{\star}{\not\le}P''$ then $\mathcal{P}\accentset{\star}{\not\le}P$. In other words, if there is no chance that a received query be fulfilled by an agent at the bottom of the hierarchy, none of the parent agents will be able to fulfill it either. Hence, the query will be blocked as early as possible by the parents.  
	\end{corollary}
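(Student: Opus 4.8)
The plan is to prove the statement in contrapositive form: assume $\mathcal{P}\accentset{\star}{\le}P$ and try to conclude that $\mathcal{P}\accentset{\star}{\le}P'$ or $\mathcal{P}\accentset{\star}{\le}P''$ (which, together with Lemma~\ref{lem:4}, would upgrade the two leaf-level fulfillment checks to an equivalence). Since $P,P',P'',\mathcal{P}$ all lie in $\mathscr{P}$ they are congruent, so the whole argument reduces to a coordinate-by-coordinate comparison: for each $(\rho,\nu)\in\mathcal{P}$ there are unique companions $(\rho,v)\in P$, $(\rho,v')\in P'$, $(\rho,v'')\in P''$, and by Definition~\ref{def:ps} we have $v=v'$ whenever $v'=v''$ and $v=\star$ otherwise. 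First I would clear the harmless coordinates: if $\nu=\star$ it dominates and obstructs nothing; and if $v'=v''$ then $v=v'=v''$, so $\mathcal{P}\accentset{\star}{\le}P$ already forces $(\rho,\nu)\accentset{\star}{\le}(\rho,v')$ and $(\rho,\nu)\accentset{\star}{\le}(\rho,v'')$. The only coordinates that matter are those in $D=\{\rho:\ v'\ne v''\}$, at each of which $P$ carries $\star$; hence $\mathcal{P}\accentset{\star}{\le}P$ is automatic on $D$ and gives no information there.

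The crux is therefore to show that $\mathcal{P}$, restricted to $D$, is still dominated entirely by $P'$ or entirely by $P''$. Here I would stop treating $P'$ and $P''$ as arbitrary parametric sums and use how they arise in the holarchy: by Algorithms~\ref{alg:start}--\ref{alg:add} a non-leaf capability is the parametric sum of the leaf capabilities beneath it, and a pair becomes general only after two genuinely different concrete values were merged along the way. Invoking Corollary~\ref{cor:41}, the hypothesis $\mathcal{P}\accentset{\star}{\le}P$ means the query is routed down to some atomic descendant with capability $P_\ell$; unwinding the chain of parametric sums from $P$ down to $P_\ell$ should pin down, for each $\rho\in D$, which of $P'$ or $P''$ lies above $P_\ell$, and that branch is the one I would show dominates $\mathcal{P}$.

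The step I expect to be the genuine obstacle is exactly this last one, because the parametric sum is lossy: from Definitions~\ref{def:pset}--\ref{def:ps} alone one can have $\mathcal{P}\accentset{\star}{\le}P$ while $\mathcal{P}\accentset{\star}{\not\le}P'$ and $\mathcal{P}\accentset{\star}{\not\le}P''$ — for instance $P'=\{(p_1,a),(p_2,d)\}$, $P''=\{(p_1,b),(p_2,c)\}$, $P=\{(p_1,\star),(p_2,\star)\}$, $\mathcal{P}=\{(p_1,x),(p_2,y)\}$ with $x,y$ occurring in neither. So the claim does not follow from the algebra of the parametric operators; it would need the structural invariant that the capability sets appearing in a HAMLET holarchy are exactly the ones built by the insertion algorithms, and that any query a parent admits agrees, on every coordinate the sum made general, with the concrete value stored in the sub-branch it is forwarded into (i.e. no ``spurious'' generalization is ever exercised by a live query). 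Establishing that invariant, checking it is preserved as the holarchy grows, and threading it through the recursion of Corollary~\ref{cor:41} is where the real work sits; with it in hand, the coordinate bookkeeping of the first paragraph closes the argument.
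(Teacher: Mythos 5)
Your proposal does not deliver a proof of the corollary: you reduce everything to an unproven structural invariant (that the capability sets occurring in a live holarchy, and the queries that reach them, never exercise a ``spurious'' generalization created by $\accentset{\star}{+}$) and you explicitly leave establishing that invariant as the remaining work. Measured against the paper, that is the gap, because the paper treats the corollary as a purely algebraic fact about Definitions~\ref{def:pin} and \ref{def:ps}, with no appeal to Algorithms~\ref{alg:start}--\ref{alg:train1st} at all: it picks a single witness pair $(\rho,\nu)\in\mathcal{P}$ that is dominated by neither $P'$ nor $P''$, and concludes from $(\rho,\nu)\notin P',P''$ and $(\rho,\ast)\notin P',P''$ that $(\rho,\nu)\notin P$ and $(\rho,\ast)\notin P$, hence $\mathcal{P}\accentset{\star}{\not\le}P$.

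That said, your diagnosis of where the difficulty sits is exactly right, and it points at the paper's argument rather than at a misunderstanding on your part. The step $(\rho,\ast)\notin P',P''\Rightarrow(\rho,\ast)\notin P$ is not valid under Definition~\ref{def:ps}: the parametric sum manufactures $(\rho,\ast)$ precisely at coordinates where $P'$ and $P''$ carry distinct concrete values, which is what your counterexample ($P'=\{(p_1,a)\}$, $P''=\{(p_1,b)\}$, $\mathcal{P}=\{(p_1,c)\}$ with $c\neq a,b$, so $\mathcal{P}\accentset{\star}{\le}P=\{(p_1,\star)\}$ while $\mathcal{P}\accentset{\star}{\not\le}P',P''$) exploits. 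The paper's proof also tacitly assumes the witness coordinate can be chosen common to $P'$ and $P''$; taking $\mathcal{P}=\{(p_1,x),(p_2,y)\}$, $P'=\{(p_1,a),(p_2,y)\}$, $P''=\{(p_1,x),(p_2,b)\}$ shows this matters even when every value of $\mathcal{P}$ occurs in some child, so an ``available values'' reading of $\star$ does not rescue the statement either. So your conclusion that the claim cannot be closed from the parametric algebra alone, but needs an additional invariant tying the $\star$ entries of a parent's capability to the concrete values stored below it and to the queries actually forwarded, is correct; what is missing from your proposal is that invariant itself, formulated and shown to be preserved by the construction and training algorithms, without which neither your sketch nor, as written, the paper's three-line argument establishes the early-blocking property.
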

	\begin{proof}
		According to definitions~\ref{def:pin} and \ref{def:ps}, we have:
		\begin{align}
			\mathcal{P}\accentset{\star}{\not\le}P',P''\Longrightarrow\exists (\rho,\nu)\in\mathcal{P} &: (\rho,\nu)\not\in P',P'' \wedge (\rho,\ast)\not\in P',P''\\
			(\rho,\nu)\not\in P',P''\Longrightarrow (\rho,\nu)\not\in P  &\textbf{~~~and~~~} (\rho,\ast)\not\in P',P''\Longrightarrow (\rho,\ast)\not\in P
		\end{align}
		Consequently, 
		\begin{equation}
			\exists (\rho,\nu)\in\mathcal{P} : (\rho,\nu)\not\in P \wedge (\rho,\ast)\not\in P \Longrightarrow\mathcal{P}\accentset{\star}{\not\le}P
		\end{equation}
	\end{proof}
	\begin{theorem}[Soundness and Completeness of the Training Algorithm]
		Giving the training algorithm all the information it needs to operate, it will provide a correct result whenever there exists one and a proper warning otherwise.    
	\end{theorem}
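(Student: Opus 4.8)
The plan is to split the claim into a \textbf{soundness} part — any output the training procedure returns is correct — and a \textbf{completeness} part — the procedure always halts and returns either a correctly trained model holon or the appropriate duplication warning. Both rest on a handful of structural invariants that I would first show are preserved by every operation in Algorithms~\ref{alg:start}--\ref{alg:train2nd}: (I1) within any set of siblings that can carry algorithm identity, names are pairwise distinct; (I2) for every composite non-model holon the capability equals the parametric sum (Definition~\ref{def:ps}) of the capabilities of its non-model sub-holons, which is exactly what \textsf{UpdateCapability} and \textsf{JoinHolon} in Algorithm~\ref{alg:join-update} maintain; (I3) for an atomic \textit{AlgH}/\textit{DatH} the capability equals the parameter set of the represented entity; and (I4) the holarchy is a finite rooted tree, except that each \textit{ModH} has exactly its creating \textit{AlgH} and its \textit{DatH} as super-holons. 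These follow by induction on the sequence of processed queries, inspecting each \textsf{CreateNewHolon}/\textsf{CreateModel}/\textsf{JoinHolon} call.

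For soundness of the \emph{first pass} (Algorithm~\ref{alg:train1st}, with Algorithms~\ref{alg:start}--\ref{alg:add}), I would argue that the name-based step routes the request to the unique name-matching child when one exists and otherwise creates it: since $\accentset{\star}{\sim}\bigl((name,a^q_{name}),(name,h_{name})\bigr)=1$ exactly when the names agree, (I1) gives at most one such child, so the \textsf{break} is safe. Inside that subtree the capability-based descent is correct by Lemma~\ref{lem:2}: a child's proposal exceeds the current holon's own $\accentset{\star}{\sim}(C_{myself},P_{a_i})$ iff that child's capability shares a strictly larger set of pairs with $P_{a_i}$, i.e.\ is strictly more specialized; Corollary~\ref{cor:21} shows at most one child can beat the parent, again justifying the early \textsf{break}. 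When \textsf{best} $=$ \textsf{myself} the three sub-cases each yield the structurally correct placement while re-establishing (I1)--(I3): an atomic exact match ($\accentset{\star}{\sim}(C_{myself},P_{a_i})=1$) means the target \textit{AlgH} already exists; an atomic non-match forces a fresh common super-holon; a composite with no better child gets a new atomic sub-holon. Hence after the first pass the model holon is spawned under the unique atomic \textit{AlgH} whose capability is exactly $P_{a_i}$, and symmetrically the data side ends at the correct atomic \textit{DatH}.

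For soundness of the \emph{second pass} (Algorithm~\ref{alg:train2nd}), the key point is that \textsf{InformAddress} in Algorithm~\ref{alg:spawn-address} installs, in every holon on the path from \textit{SysH} down to the newly spawned \textit{ModH} (resp.\ to the target \textit{DatH}), a memory entry keyed by the query id, and \textsf{TrainSecondPass} follows precisely this chain of pointers; by (I4) the chain is a simple path, so the \textsf{TRAIN} command provably reaches the \textit{ModH} and the \textit{DatH} and nobody else. The non-blocking concern — that a concurrently executing first pass of another query splices a new holon onto this path — is handled by the address-update mechanism (Figure~\ref{fig:address-update-exm}): since holon creation only ever inserts a node between an existing super-/sub-holon pair, with the inserted node inheriting and the parent retuning the relevant entry, every stored pointer still lies on a current \textit{SysH}-to-target path and routing still terminates at the correct \textit{ModH}. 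Once there, \textsf{StartTraining} has the \textit{ModH} obtain data access, \textsf{JoinHolon} the \textit{DatH}, \textsf{UpdateCapability} to $C=P_{a_i}$, and train the inherited algorithm on the (assumed flawless) data with an (assumed correct) implementation; on the return path the skill sets of the \textit{ModH} and of all its super-holons are updated per Equations~(\ref{eq:skill-model})--(\ref{eq:skill-abstract}). Thus whenever training proceeds the reported outcome is the correct one and all invariants, now including the skill equations, are restored.

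For completeness and the warning case: termination holds because each re-dispatched \textsf{TRAIN FIRST PASS} (line~\ref{alg:4:ask1}) strictly descends one level of a finite tree and every other branch performs only a bounded amount of holon creation, so the first pass halts; \textsf{InformAddress} then reaches \textit{SysH} in depth-many steps and the second pass halts likewise. Hence the procedure always produces an output. If a \textit{ModH} with capability exactly $P_{a_i}$ already hangs under the target atomic \textit{AlgH}, the exact-match branch — \textsf{SpawnModel}/\textsf{CreateModel} detecting the duplicate — emits the duplication warning, which is the ``proper warning otherwise'', the case in which no new correct result is to be produced; in every other case the model is created and trained, giving the correct result, and Lemma~\ref{lem:2} with Corollary~\ref{cor:21} guarantee the descent reaches precisely the right atomic holon (so it neither overshoots nor stalls), paralleling the roles of Corollaries~\ref{cor:41} and~\ref{cor:42} on the testing side. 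I expect the main obstacle to be the second-pass argument: rigorously proving that under arbitrary interleavings of first and second passes of distinct queries the statement ``every stored pointer lies on a current \textit{SysH}-to-target path'' really is an invariant — i.e.\ that holon creation only ever splices a new node into an existing parent--child link and that the local address retune at the splice point is atomic with respect to that link, so no pointer is ever left dangling or aimed into a detached subtree. The remaining pieces (the Lemma~\ref{lem:2}/Corollary~\ref{cor:21} bookkeeping and the invariant induction) are routine and would be relegated to an appendix-style verification.
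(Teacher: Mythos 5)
Your proposal is correct and follows essentially the same route as the paper: the first-pass correctness (no duplication, arrival at the unique right atomic holon) rests on Lemma~\ref{lem:2} and Corollary~\ref{cor:21} exactly as in the paper's argument, and the second-pass correctness rests on the address-propagation and vicinity-update mechanism of Algorithm~\ref{alg:spawn-address} and Figure~\ref{fig:address-update-exm}, which the paper simply declares ``guaranteed by code.'' Your added invariant induction, termination argument, and the explicit concern about interleaved queries are elaborations of points the paper asserts without proof rather than a different approach, so nothing essential is missing relative to the paper's own proof.
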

	\begin{proof}
		Taking another deep look at the training algorithm, we notice that it always ends with adding new holons (data/algorithm/model), when it is needed, and returning the training results to the SYS holon. Consequently, to prove the soundness and completeness of the proposed method, we just need to prove that firstly, the first pass of the method will only add the components if they are not already in the holarchy (no duplication); and secondly, the final training command is correctly directed to the model and data holons so that they start the fitting procedure. 
		
		To prove the first claim, let's assume the first pass will result in duplicate holons of the same settings at different parts of the holarchy. Based on the fact that the capability of a holon at any node is a parametric sum of all of its non-model sub-holons, duplication might occur if at any holon above the current holon in the holarchy, the training algorithm, starting at line~\ref{alg:4:cfp}, makes a wrong choice and directs the query to a wrong sub-holon. Since the proposals are made based on calculating the parametric similarity ratio, and also according to lemma~\ref{lem:2} and corollary~\ref{cor:21}, it is guaranteed that this will not happen. That is, the holon's choice to forward the training query will always be correct, leading to inserting any new holon at its best place and preventing duplication during the first pass of the training procedure.
		
		The second claim of this proof is guaranteed by code. As it was discussed in the details of the second pass of the training algorithm, any new structural changes in the holarchy, as the result of new training queries, is followed by updating all the references in the memory of the holons at the vicinity of the change (see figure~\ref{fig:address-update-exm}).
	\end{proof}
	
	\begin{theorem}[Soundness and Completeness of the Testing Algorithm]
		Giving the testing algorithm all the information it needs to operate, it will provide a correct result whenever there exists one and a proper warning otherwise.
	\end{theorem}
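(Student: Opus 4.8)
The plan is to mirror the structure used above for the training algorithm and split the claim into \emph{completeness} (every model holon that constitutes a legitimate answer to $q_j$ is reached and its result is carried back to \textit{SysH}) and \emph{soundness} (every result that reaches \textit{SysH} originates from a model holon that is genuinely a realization of algorithm $a_j$ on a dataset of family $d_j$ whose configuration is covered by the query, and an empty report together with the corresponding warning is produced exactly when no such model exists). Throughout I would rely on two structural facts visible in Algorithm~\ref{alg:test}: a result is emitted \emph{only} at the \textsf{AmIModel()} branch via \textsf{Perform}, and a holon forwards the request downward only when condition~\eqref{eq:testing-check} holds (or it is the \textit{ALG} root), otherwise returning $\emptyset$ to its super-holon.

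For completeness, first I would fix a model holon ${}_{m}^{\{u,u'\}}H^\ell_k$ that is a valid answer: its creator \textit{AlgH} carries the name $a_j$ (up to $\accentset{\star}{\le}$), its capability set covers $P_{a_j}$, and it was trained on a dataset of family $d_j$, so $d_j$ lies in the skill set of that creator \textit{AlgH} by \eqref{eq:skill-other}. I then walk the unique path from \textit{ALG} down to this \textit{AlgH}. The name clause of \eqref{eq:testing-check} holds along the whole path because, by Algorithm~\ref{alg:start} and the soundness of the construction/training procedures, all holons carrying the name $a_j$ form a single sub-holarchy. The capability clause $P_{a_j}\accentset{\star}{\le}C$ propagates \emph{upward} along the path, since an internal \textit{AlgH}'s capability is the parametric sum of its non-model sub-holons' capabilities, so Lemma~\ref{lem:4} iterated via Corollary~\ref{cor:41} yields $P_{a_j}\accentset{\star}{\le}C_{\mathrm{child}}\Rightarrow P_{a_j}\accentset{\star}{\le}C_{\mathrm{parent}}$. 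The skill clause $\exists s\in S:\,s=d_j$ propagates upward as well, because a super-holon's skill set contains those of its sub-holons by \eqref{eq:skill-other} and \eqref{eq:skill-abstract}. Hence condition~\eqref{eq:testing-check} is true at every node on the path, so the \textsf{ForEach} loop over the $\{a,m\}$-typed sub-holons (line~\ref{alg:7:ask}) descends into the correct child at each step and, crucially, does not skip the target model holon once it is reached; \textsf{Perform} then produces its score and the \textsf{results} unions relay it to \textit{SysH}. The batch case with the general symbol is the same argument applied simultaneously to every model holon whose configuration is $\accentset{\star}{\le}$-covered by $P_{a_j}$, and the data-side general symbol reduces to first collecting, by the \textit{DATA}-side analogue of Algorithm~\ref{alg:test}, all datasets matching $(d_j,P_{d_j})$ and then invoking the above for each.

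For soundness I would argue in the other direction: a nonempty result reaches \textit{SysH} only if some model holon executed \textsf{Perform}, hence only if the request reached it, which forces every ancestor on its path other than \textit{ALG} to have evaluated \eqref{eq:testing-check} to true. Because model holons are spawned exclusively under atomic \textit{AlgH} holons whose capability is the single, frozen configuration with which they were created (they are never \textsf{UpdateCapability}'d), the check at that atomic parent is \emph{exact}: it passes iff $a_j$ matches its name, $P_{a_j}$ is covered by its capability, and $d_j$ is in its skill set --- i.e. iff the reached model is a legitimate answer --- while Corollary~\ref{cor:42} guarantees that a branch containing no legitimate model is pruned at the highest possible ancestor, so no spurious \textsf{Perform} is triggered. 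That the reported value is a bona fide test metric rather than garbage follows from the assumed correctness of the underlying ML implementations and from the soundness of the training algorithm, which ensures every model holon in the holarchy was actually fitted before it can be tested. Finally, when no legitimate model exists, Corollary~\ref{cor:42} forces \eqref{eq:testing-check} to fail before any model is reached on every branch below \textit{ALG}, so \textit{SysH} collects $\emptyset$ and issues the warning, establishing soundness of the negative case.

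The step I expect to be the main obstacle is making the equivalence ``the request reaches model holon $H$'' $\iff$ ``$H$ is a legitimate answer to $q_j$'' airtight in the presence of the parametric general symbol, since it rests on the structural invariant that model holons never hang directly off a composite (parametric-sum) \textit{AlgH} but only off an atomic one whose frozen configuration does the only exact filtering, with ancestors merely routing. I would therefore first prove this invariant as a short lemma from Algorithm~\ref{alg:train1st} and the \textsf{SpawnModel} routine, and only then assemble the soundness/completeness argument above; the remaining monotonicity steps are then immediate from Lemma~\ref{lem:4}, Corollaries~\ref{cor:41}--\ref{cor:42}, and the skill-set identities \eqref{eq:skill-model}--\eqref{eq:skill-abstract}.
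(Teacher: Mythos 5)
Your proposal is correct and follows essentially the same route as the paper's proof: both reduce the theorem to the claim that condition~\eqref{eq:testing-check} decides correctly at every holon, using the name-sharing of sub-holarchies, the skill-union identities \eqref{eq:skill-model}--\eqref{eq:skill-abstract}, and the capability/parametric-sum monotonicity via Lemma~\ref{lem:4} and Corollaries~\ref{cor:41}--\ref{cor:42}. Your explicit completeness/soundness split and the flagged invariant that model holons hang only off atomic \textit{AlgH}s (with the exact filtering done there) merely make precise what the paper leaves implicit, so no substantive difference or gap.
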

	\begin{proof}
		As it can be found out in the testing algorithm, the key decision at any holon of the holarchy is made based on equation~\ref{eq:testing-check}. Therefore, to show that the testing algorithm is sound and complete, we must prove that equation~\ref{eq:testing-check} properly determines whether the holarchy is capable of answering the query. On account of the training steps, all of the holons with level $\ge 2$ in the holarchy share the same name. As a result, at any node, if the name of the query algorithm is not the same as the name of the holon (the first part of equation~\ref{eq:testing-check}), it means there will be no answer through that holon, thus the holon blocks the flow of the query to its sub-holons and returns an empty set as the response. Likewise, according to equations~\ref{eq:skill-model}, \ref{eq:skill-other}, and \ref{eq:skill-abstract}, the skills(the trained data) of any AlgH holon is the union of all of its sub-holons' skills. Consequently, if the skills of holon do not satisfy the requirements of the query, the third part of the equation makes the holon properly respond to the super-holon.
		
		In case that both name and skill checks pass, the final decision is made by the second part of equation~\ref{eq:testing-check}. According to the training algorithm, the capabilities of any holon at level $\ge 2$ of the holarchy is the parametric sum of the capabilities of its sub-holon. By lemma~\ref{lem:4} and corollary~\ref{cor:41}, on the other hand, the holon will forward the test query to its sub-holons if any of its accessible subordinate atomic holons is capable of performing the test. Hence, the algorithm will find the right destination to execute the testing operation and return the appropriate result. Furthermore, pursuant to corollary~\ref{cor:42}, the testing process will be stopped and returned suitably utilizing the aforementioned equation. Hence, it would be impossible, for the testing method, to return the wrong result.
	\end{proof}
	\subsection{Complexity}
	This section discusses the computational complexity of the proposed platform based on space and computational time criteria. This is done for both training and testing algorithms separately.
	\subsubsection{Training Algorithm} 
	The training algorithm has two vertical passes in the holarchy. Since the passes in each of the data and algorithm sub-holarchies are carried out in parallel, we first take one of them into consideration, and model it as a tree, in which, the sub-holons of a holon are forming the children of its corresponding node. Figure~\ref{fig:complexity-worst-tree} depicts two trees corresponding two example holarchies that we delve into in our analysis. Let's assume that the maximum number of children (sub-holons) that an algorithm/data node in such a tree has is denoted by $b_a$ and $b_d$ respectively. Similarly, assume that the current number of the leaf nodes (atomic holons) in each of the algorithm/data sub-trees is respectively shown by $n_a$ and $n_d$. In the worst-case scenario, during the first pass 
	, the holons need to wait for all of their sub-holons' proposals before they choose the best one. This scenario is similar to the one in which we check all of the children of a tree node before expanding the last one, as depicted by arrows in each picture of figure~\ref{fig:complexity-worst-tree}. For the sake of brevity, let's focus on the algorithm sub-tree first. In a complete tree (figure~\ref{fig:complexity-comp}), the height will be:
	\begin{equation}
		h_a=\log_{b_a}{n_a}
	\end{equation}
	since at each level we check $b_a$ nodes, the time complexity of the first pass of the training algorithm will be:
	\begin{equation}
		\mathcal{O}(b_a\cdot h_a)=\mathcal{O}(b_a\cdot \log_{b_a}{n_a})
	\end{equation}
	In an extreme case where $b_a-1$ nodes of each level are terminal, the height would be: 
	\begin{equation}
		h_a\frac{n_a-b_a}{b_a-1}+1=\frac{n_a-1}{b_a-1}    
	\end{equation} 
	and therefore, the complexity of the first pass becomes:
	\begin{equation}
		\mathcal{O}(b_a\cdot \frac{n_a-1}{b_a-1})=\mathcal{O}(n_a)
	\end{equation}
	Similarly, both of above mentioned tree layout on the data sub-tree will yield $\mathcal{O}(b_d\cdot \log_{b_d}{n_d})$ and $\mathcal{O}(n_d)$ respectively. Regarding the second pass of the algorithm, since it only needs to follow the addresses without further checking the children, the time complexity for the same tree layouts, would be $\mathcal{O}(\log_{b_a}{n_a})$ and $\mathcal{O}(n_a)$ for the algorithm sub-tree and $\mathcal{O}(\log_{b_d}{n_d})$ and $\mathcal{O}(n_d)$ for the data sub-tree, respectively. Considering the deepest tree layout (figure~\ref{fig:complexity-deep}) and the fact that the second pass starts after the first pass finishes, the worst-case time complexity of the training algorithm for the algorithm sub-holarchy, would be:
	\begin{equation}
		\mathcal{O}(n_a+n_a)=\mathcal{O}(n_a)
	\end{equation}
	Finally, taking both of DATA and ALG sub-holarchies into consideration, and recalling the fact that training at each sub-holarchy is run in parallel, the worst-case time complexity of the training algorithm would be:
	\begin{equation}
		\mathcal{O}(\max(n_a,n_d))
	\end{equation}
	
	\begin{figure}
		\centering
		\begin{subfigure}[b]{0.4\textwidth}
			\centering
			\includegraphics[height=.25\textheight]{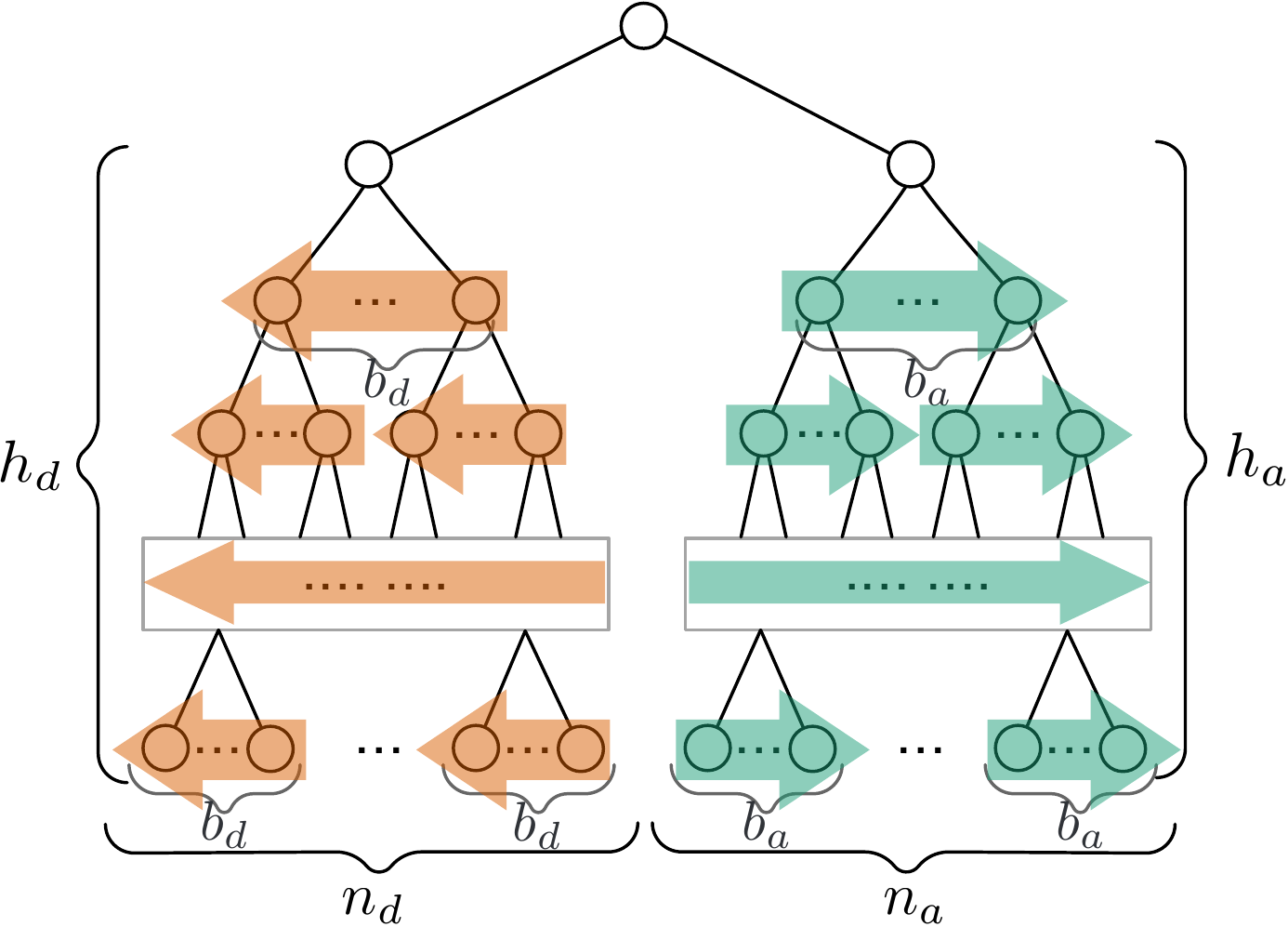}
			\caption{a complete tree}
			\label{fig:complexity-comp}
		\end{subfigure}\hspace{.1\textwidth}%
		\begin{subfigure}[b]{0.4\textwidth}
			\centering
			\includegraphics[height=.25\textheight]{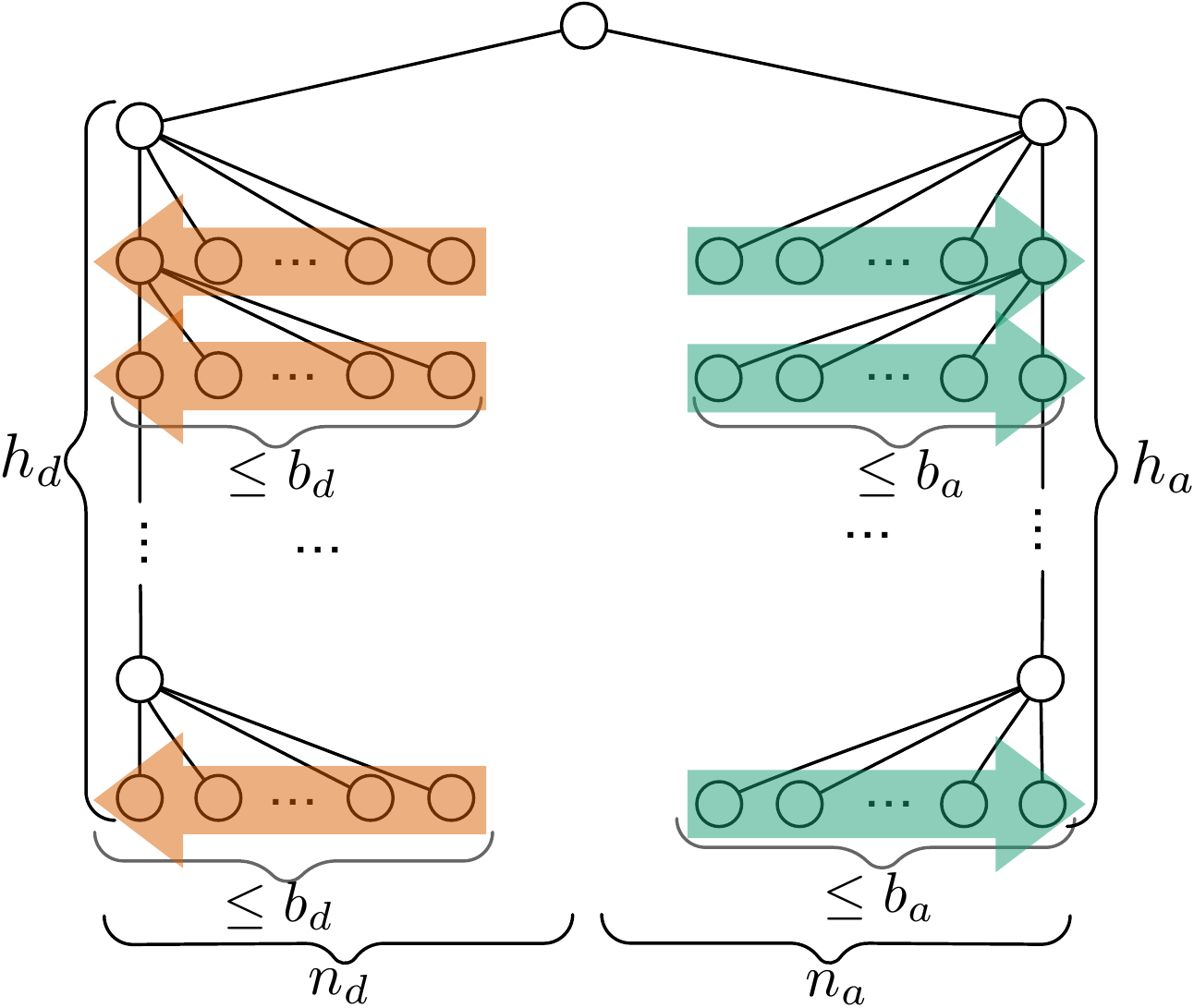}
			\caption{a deep tree}
			\label{fig:complexity-deep}
		\end{subfigure}
		\caption{Tree representations of two extreme hierarchical structures. (a) depicts the case in which each parent has the same number of subordinates, and (b) represents the a scenario in which only of the nodes at each level has a subordinate. In both of the figures, the arrows are assumed to be the order that the nodes are processed in each level.}
		\label{fig:complexity-worst-tree}
	\end{figure}
	
	Assuming a holarchy with $n_a$ and $n_d$ atomic algorithm and data holons respectively, at any time that we run the training query, the worst-case scenario for the space complexity occurs when there is no holon representing neither of the queried algorithm nor dataset. In this case, in addition to using a fixed amount of memory in each holon, 5 new holons are created: 2 super-holons, 2 new holons for the new algorithm and dataset, and 1 model holon. Therefore, the space complexity in terms of the number of new entities would be $\mathcal{O}(1)$. On the other hand, the amount of the memory that is used by the holons during the passes of the training process is $\mathcal{O}(\max(n_a,n_d))$, as it uses a fixed amount of memory for each step of the procedure until it creates the holon and begins the procedure. It is important to note that, we have ignored the complexities of the data mining algorithm and used datasets in our calculations.
	
	\subsubsection{Testing Algorithm} 
	The computational complexity analysis of the testing algorithm is very similar to that of the training process. Again, let's assume that $b_a, b_d, n_a$, and $n_d$ respectively denote the maximum branching factor of a node in algorithm and data sections, and the total number of leaf nodes representing the trained algorithms and the stored datasets in the corresponding tree model of the holarchy. Regarding the structure, we make the same hypothesis that we made before and use the layout depicted in figure~\ref{fig:complexity-worst-tree}. For a query with non-general parameters, the first step of the testing algorithm is to locate the testing dataset and get its address. This needs two travels in the depth of the DATA sub-holarchy to find the dataset and return its address, and two travels in the ALG portion and return the results. In case of a complete tree in each of DATA and ALG sections, these can be done in $\mathcal{O}(b_d\cdot\log_{b_n}{n_d})$ and $\mathcal{O}(b_a\cdot\log_{b_a}{n_a})$ time respectively. As we need to finish the data search process before we start testing and traveling in the ALG tree, the total complexity for such a tree would be 
	\begin{equation}
		\mathcal{O}(b_d\cdot\log_{b_d}{n_d} + b_a\cdot\log_{b_a}{n_a})
	\end{equation}
	On the other hand, if only one of the nodes in each level is expanded and holds children, the height of the each DATA and ALG sub-holarchies, will be $\frac{n_d-1}{b_d-1}$ and $\frac{n_a-1}{b_a-1}$ respectively, and the worst-case required time for the passes in each will be $\mathcal{O}(\frac{n_d-1}{b_d-1})$ and $\mathcal{O}(\frac{n_a-1}{b_a-1})$ accordingly. As a result, the overall testing complexity becomes 
	\begin{equation}
		\mathcal{O}(b_d\cdot\frac{n_d-1}{b_d-1} + b_a\cdot\frac{n_a-1}{b_a-1})=\mathcal{O}(n_d+n_a)
	\end{equation}
	
	During the testing procedure, no new holon is created, but the existing ones are used. As a result, the space complexity of the testing operation will be the amount of the memory that is used by each holon. This amount is fixed, i.e. $\mathcal{O}(1)$, and for each data or algorithm component. Consequently, the space for the entire testing process will be solely a function of the number of steps in the process. In other words, for the same structural settings that we discussed above, the worst-case space complexity of the entire procedure will be $\mathcal{O}(b_d\cdot\log_{b_d}{n_d} + b_a\cdot\log_{b_a}{n_a})$ and $\mathcal{O}(b_d\cdot\frac{n_d-1}{b_d-1} + b_a\cdot\frac{n_a-1}{b_a-1})=\mathcal{O}(n_d+n_a)$.
	
	As it was stated in the beginning, the above-mentioned analysis of the testing method is for the case that a non-general query is sent to the system. As the proposed testing operation is capable of carrying out multiple sub-tasks at once, thanks to the intrinsic distributed property of multi-agent systems, we expect to save time conducting general tests. For instance, assume that in the worst case the query is intended to test all the algorithms, i.e. $\Lambda=\{(*, \{*\})\}$, on all the available datasets, i.e. $\Delta=\{(*, \{*\})\}$. Furthermore, let's hypothesize that, being of the same type each, all the existing algorithms have been previously trained on all the available datasets. With the same notations as before, such a holarchy, will have $n_d\cdot n_a$ model holons in addition to $n_d$ atomic data and $n_a$ atomic algorithm holons. The above mentioned general test query on such a holarchy, can be decomposed to $n_d\cdot n_a$ non-general test queries, and if we feed them to the system separately, the time complexity for each of the mentioned complete and non-complete holarchical structures will be
	\begin{equation}
		\mathcal{O}(n_d\cdot n_a(b_d\cdot\log_{b_d}n_d + b_a\cdot\log_{b_a}n_a))
	\end{equation} and 
	\begin{equation}
		\mathcal{O}(n_d\cdot n_a(b_d\cdot\frac{n_d-1}{b_d-1} + b_a\frac{n_a-1}{b_a-1}))=\mathcal{O}({n_d}^2\cdot n+n_d\cdot {n_a}^2)
	\end{equation} in the given order. On the other hand, having a holarchy with all its algorithms trained on all its data, implies that $n_d\le n_a$. Therefore, the worst case complexities will become $\mathcal{O}({n_a}^2(b_a\cdot\log_{b_a}n_a))$ and $\mathcal{O}({n_a}^3)$ respectively. In contrast, being able to process all the queries at once, the proposed testing method will employ all of the holons in its structure to process such a general query in parallel. Let's consider the complete holarchy structure. The total number of holons in each algorithm and data sections will be the sum of the number of atomic and composite holons in each part. Taking the number of trained models into the account, the total number of holons in the holarchy, except the SYS, DATA, and ALG will be:
	\begin{equation}
		\label{eq:total-complete}
		\begin{split}
			n_d\cdot n_a + \sum_{i=0}^{\log_{b_d}n_d}\frac{n_d}{(b_d)^i} + & \sum_{i=0}^{\log_{b_a}n_a}\frac{n_a}{(b_a)^i} =\\ & n_d\cdot n_a + \frac{b_d\cdot n_d-1}{b_d-1} + \frac{b_a\cdot n_a-1}{b_a-1}
		\end{split}
	\end{equation}
	Consequently, based on the fact that in such a dense holarchy $n_d\le n_a$, the worst case time complexity to process the given general test query will be $\mathcal{O}({n_a}^2)$, which is less than the aforementioned $\mathcal{O}({n_a}^2(b_a\cdot\log_{b_a}n_a))$. Likewise, for the non-complete holarchical tree that we used in previous analyses, on account of the fact that the total number of holons is
	\begin{equation}
		\label{eq:total-noncomplete}
		n_d\cdot n_a + b_d\cdot\frac{n_d-1}{b_d-1} + b_a\cdot\frac{n_a-1}{b_a-1}\quad,
	\end{equation}
	the worst case time complexity remains the same, that is $\mathcal{O}({n_a}^2)<\mathcal{O}({n_a}^3)$.
	
	\section{Experimental Evaluation}
	Our presented distributed machine learning model is not bounded by a specific multi-agent development framework. In fact, each module and section of the entire system can be developed using different tools; and as long as they keep following the suggested protocols and training/testing procedures, the system would demonstrate the expected behavior. For the sake of experimentation, we used Smart Python Agent Development Environment (SPADE)\footnote{SPADE's source code is available at https://github.com/javipalanca/spade} \cite{gregori2006jabber} to implement our platform \footnote{The source code of our implementation of HAMLET and the experiments are available at https://github.com/aesmaeili/HAMLET}. This is mainly due to its simple yet flexible model and the abundance of machine learning and data mining libraries available in Python programming language. 
	
	SPADE is fully FIPA compliant and supports asynchronous running of agents together with the inter-agent communications based on the Extensible Messaging and Presence Protocol (XMPP). Moreover, it provides a set of helpful features that facilitate the deployment of our platform on a network of computers. Some of such characteristics are \cite{gregori2006jabber}:
	\begin{itemize}
		\item the flexibility in inter-operating with other agent development platforms, thanks to its FIPA compliance;
		\item multi-user Conference (MUC) that provides the capability to create forums of agents;
		\item providing multiple built-in behavior models, such as cyclic, recurring, one-shot, timeout, and event-based finite state machines; and 
		\item featuring customized agent presence notification and P2P agent communication capability.
	\end{itemize}
	
	SPADE's platform and agent models are depicted in figure~\ref{fig:spade}. As it can be seen, its main platform is outlined based on the multi-agent architecture standards recommended by FIPA \cite{fipa2002arch}. To put it concisely, the platform (figure~\ref{fig:spade-platform}) is composed of four components: the Agent Management System (AMS) to supervise SPADE, the Directory Facilitator (DF) to provide information about the agents and their services, the Agent Communication Channel (ACC) to manage the communications between the agents and system components, and the XML router as the Message Transport System (MTS). On the other hand, the agent model (figure~\ref{fig:spade-agent}) comprises tasks, as the executable processes of the agent; and the message dispatcher that collects the arrived messages and redirects them to the appropriate task queues.
	
	\begin{figure}[h]
		\centering
		\begin{subfigure}[b]{0.4\textwidth}
			\centering
			\includegraphics[height=.25\textheight]{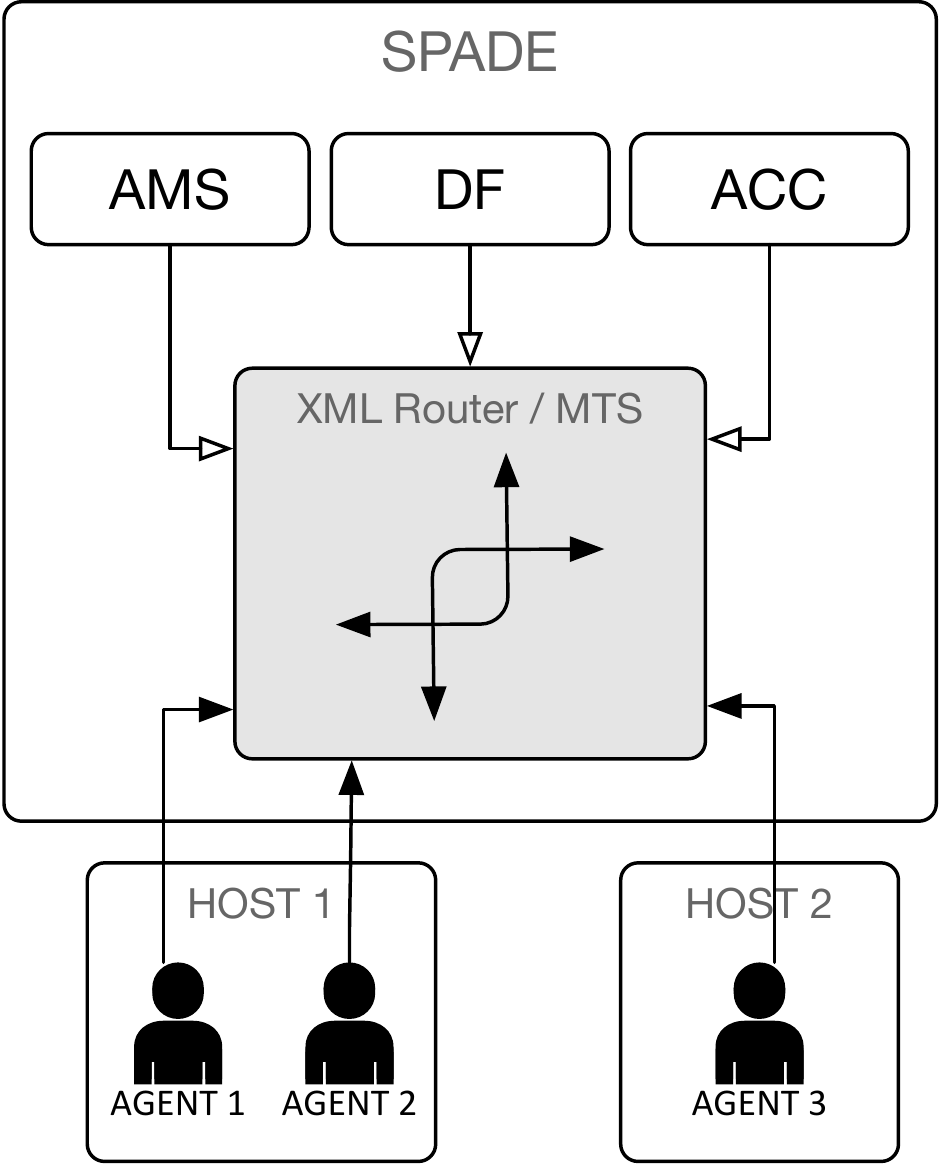}
			\caption{The platform model}
			\label{fig:spade-platform}
		\end{subfigure}\hspace{.1\textwidth}%
		\begin{subfigure}[b]{0.4\textwidth}
			\centering
			\includegraphics[height=.25\textheight]{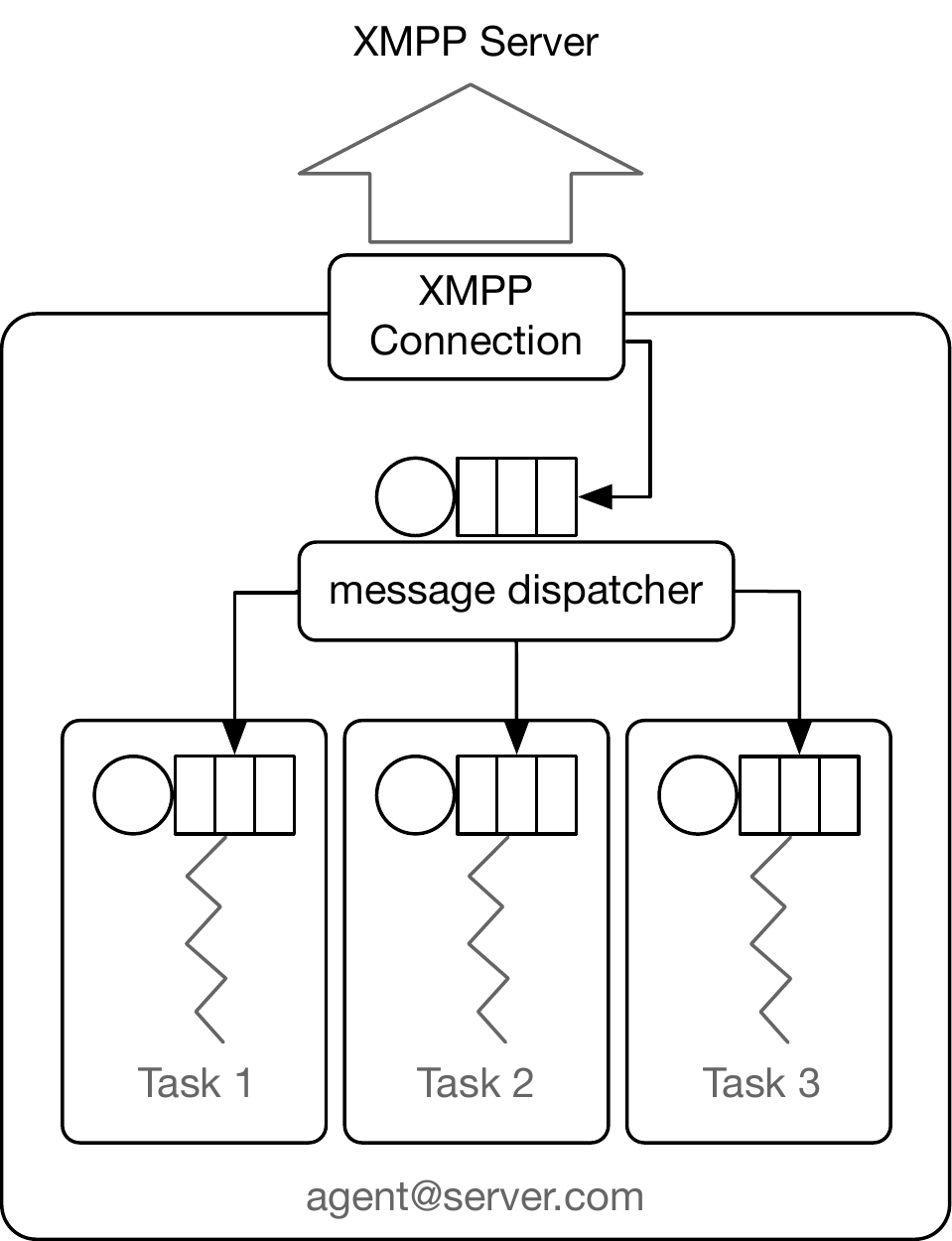}
			\caption{The agent model}
			\label{fig:spade-agent}
		\end{subfigure}
		\caption{The architecture of the SPADE agent development framework \cite{gregori2006jabber}}
		\label{fig:spade}
	\end{figure}
	
	The initial high-level outline of the the test environment, as depicted in figure~\ref{fig:experiment-setting}, is composed of the SYS, DATA, and ALG holons together with two additional PRS and VIZ agents that have the responsibilities of processing the received queries and generating plots for the results respectively. Additionally, the User agent depicted in this schema simulates the role of an external human agent and basically acts as an automatic query generator. The queries sent to the system are handed over the PRS agent to validate the basic query structures and inform about any potential errors before beginning the task. As soon as the result tensors are available after conducting a machine learning task, They are sent to the VIZ agent to plot and deliver them according to a pre-specified format.
	
	\begin{figure}[h]
		\centering
		\includegraphics[height=.2\textheight]{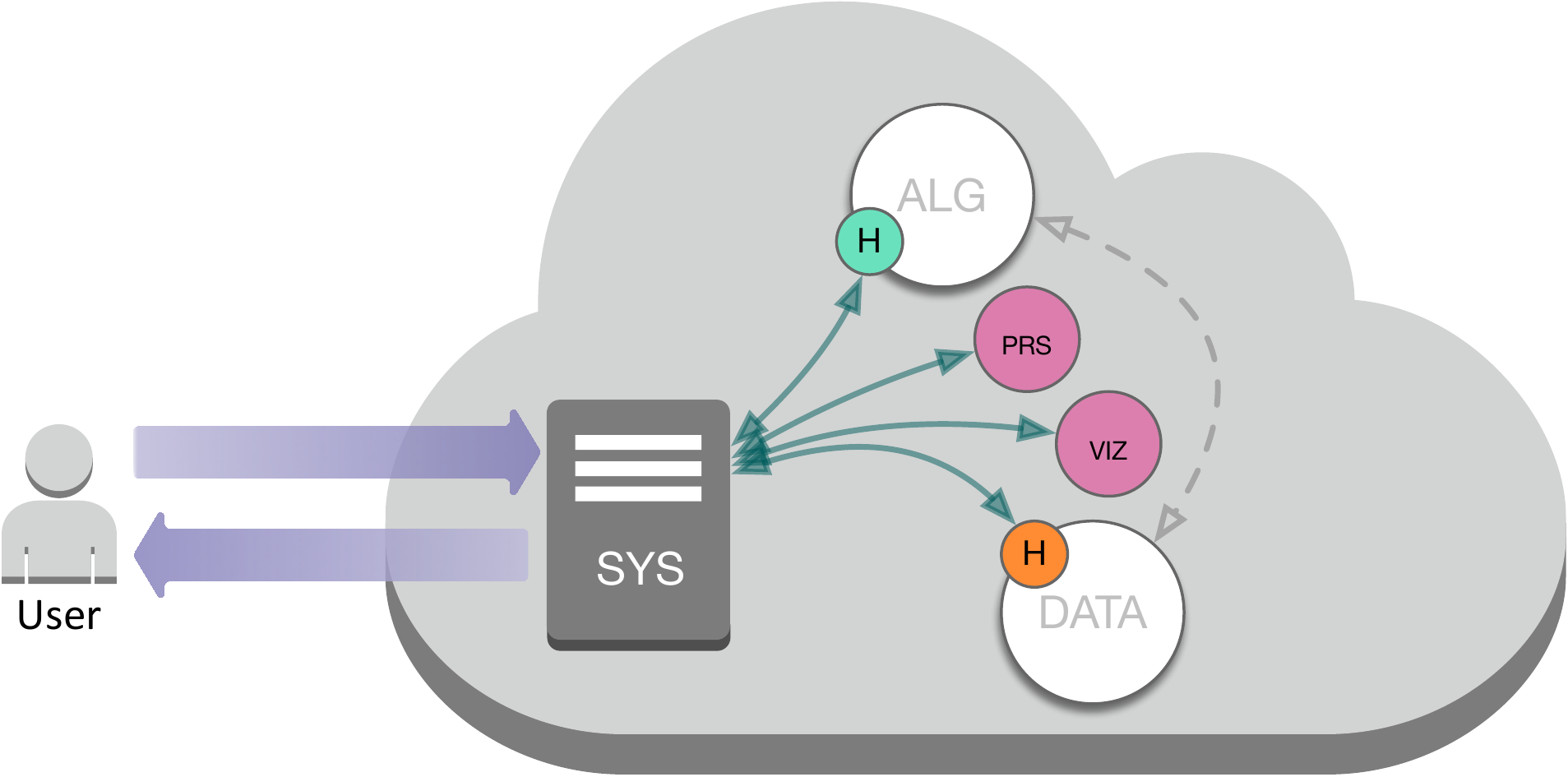}
		\caption{The initial components of the experiment.}
		\label{fig:experiment-setting}
	\end{figure}
	
	In order to develop the holonic structure, we specialized SPADE's agent model by adding the internal components, summarized in figure~\ref{fig:holon_arc} and table~\ref{tbl:parts}, to the built-in elements such as the message dispatcher. Figure~\ref{fig:hamlet-class} illustrates a high-level view of the implemented classes and their relationships with each other. The shaded area in this diagram particularly highlights the components that HAMLET implements based on its architecture. Class \emph{Holon}, as an abstract class, defines the basic data structures, properties, and behaviors common in all holon types. The children of this class try to customize the provided basic architecture and interfaces with more specific and task oriented components. Furthermore, separating different holon types in different classes helps us effectively enforce the restrictions, such the multiplicity and holarchical relationships, defined in HAMLET. The other classes presented in figure~\ref{fig:hamlet-class}, namely \emph{PRS}, \emph{VIZ}, and \emph{User}, are technically the SPADE agents with particular functionalities. It is worth noting that due to the flexible structure of HAMLET, the functionality and services of the test bed can be easily extended by adding more agents and properly connecting them to the other holons.
	
	\begin{figure}[h]
		\centering
		\includegraphics[height=.35\textheight]{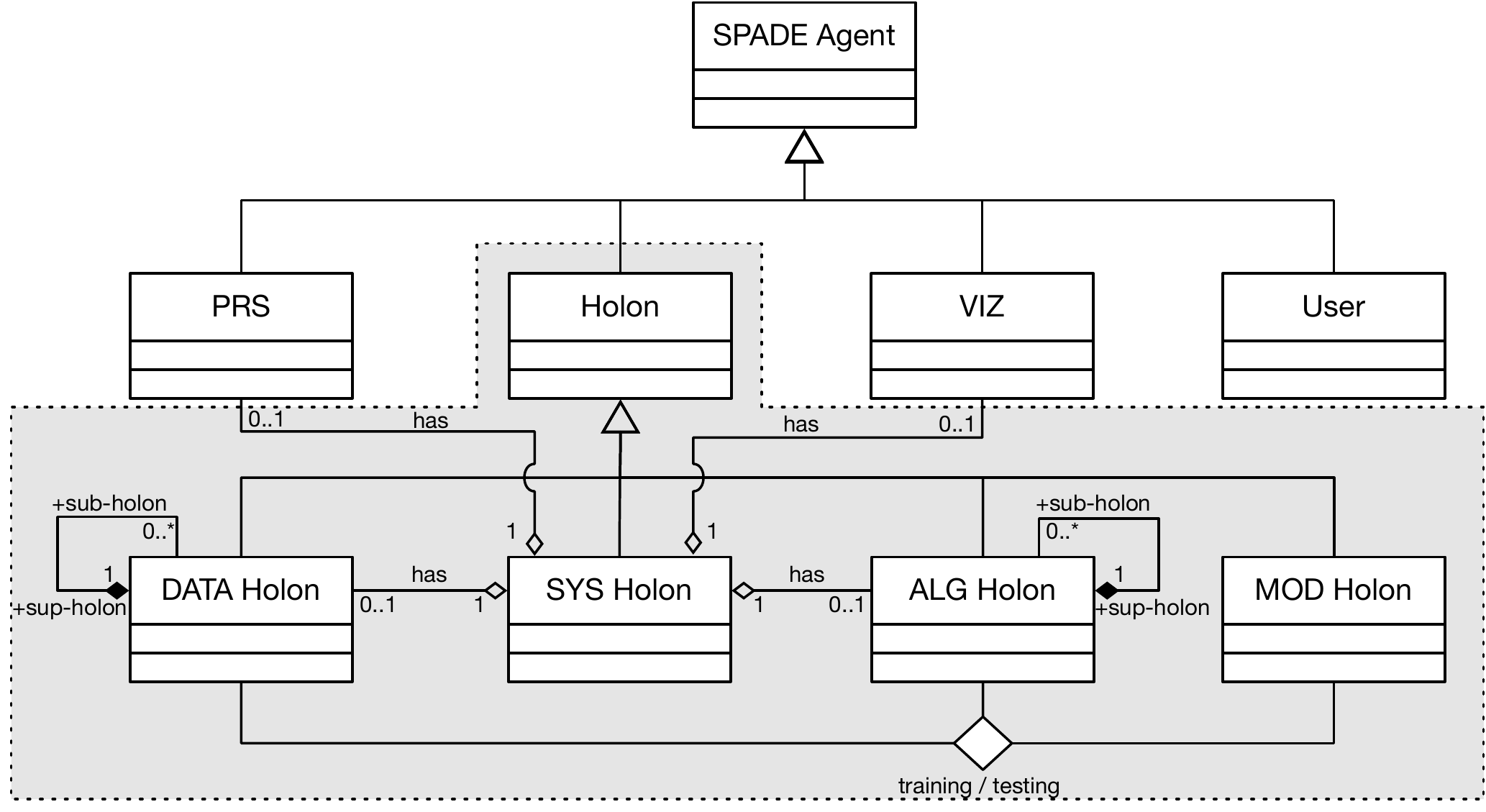}
		\caption{The classes diagram of the implemented HAMLET elements and their relationship.}
		\label{fig:hamlet-class}
	\end{figure}
	
	The experiment makes use of several classification, regression, and clustering algorithms being trained and tested on several standard datasets. All of the used algorithms are from the scikit-learn \cite{scikit-learn} library, and their details are listed in tables~\ref{tbl:alg-details} and ~\ref{tbl:data-details} respectively. In table~\ref{tbl:alg-details}, for each algorithm, the columns \textit{parameters} and \textit{id} respectively hold the list of the used parameters and the identifier that is used for the presentation purposes. Please note that, for the sake of space limitation and clarity, we have only listed the parameters that have different values from the default ones and we have listed the complete list of the parameters for each algorithm as the note in table~\ref{tbl:alg-details}. Additionally, table~\ref{tbl:data-details} lists the datasets that are used for each machine learning task. For classification and regression, we have divided each dataset into training and testing sets consisting of 60\% and 40\% of the original instances respectively. For the clustering task, however, we have utilized 100\% of the data. Last but not least, all the experiments have been carried out in a PC with Intel Core-i5 @1.6GHz CPU and 16GB RAM running Ubuntu OS and python 3.7.
	
	\newcommand{\ra}[1]{\renewcommand{\arraystretch}{#1}}
	\begin{table}[!htbp]\centering
		\caption{The details of the used algorithms.}
		\label{tbl:alg-details}
		\ra{1.3}
		\resizebox{\textwidth}{!}{
			\begin{threeparttable}
				\begin{tabular}{@{}rrrrcrrrcrrr@{}}\toprule
					& \multicolumn{3}{c}{\emph{classification}} & \phantom{abc}& \multicolumn{3}{c}{\emph{regression}} &
					\phantom{abc} & \multicolumn{3}{c}{\emph{clustering}}\\
					\cmidrule{2-4} \cmidrule{6-8} \cmidrule{10-12}
					& \emph{id} & \emph{name} & \emph{parameters} && \emph{id} & \emph{name} & \emph{parameters} && \emph{id} & \emph{name} & \emph{parameters}\\ \midrule
					& \textbf{A01} & SVC\tnote{1} & kernel=linear && \textbf{A09} & Linear\tnote{6} & defaults && \textbf{A17} & KMeans\tnote{12} & defaults\tnote{*}\\
					& \textbf{A02}& SVC& kernel=sigmoid&& \textbf{A10}& Ridge\tnote{7}& fit\_inercept=False&& \textbf{A18}& KMeans& algorithm=full\tnote{*}\\
					& \textbf{A03}& SVC& $\gamma=0.001$&& \textbf{A11}& Ridge& $\alpha=0.5$&& \textbf{A19}& MBKMeans\tnote{13}& defaults\tnote{*}\\
					& \textbf{A04}& SVC& $C=100,\gamma=0.001$&& \textbf{A12}& KRR\tnote{8}& defaults&& \textbf{A20}& DBSCAN\tnote{14}& defaults\\
					& \textbf{A05}& NuSVC\tnote{2}& defaults&& \textbf{A13}& Lasso\tnote{9}& $\alpha=0.1$&& \textbf{A21}& DBSCAN& metric=cityblock\\
					& \textbf{A06}& ComNB\tnote{3}& defaults&& \textbf{A14}& NuSVR\tnote{10}& defaults&& \textbf{A22}& DBSCAN& metric=cosine\\
					& \textbf{A07}& DTree\tnote{4}& defaults&& \textbf{A15}& NuSVR& $\nu=0.1$&& \textbf{A23}& Birch\tnote{15}& defaults\\
					& \textbf{A08}& NrCent\tnote{5}& defaults&& \textbf{A16}& ElasNet\tnote{11}& defaults&& \textbf{A24}& HAC\tnote{16}& defaults\tnote{*}\\
					\bottomrule
				\end{tabular}
				\begin{tablenotes}
					\small
					\item[1] \textbf{C-Support Vector Classification}\cite{chang2011libsvm}. Defaults:(C=1.0, kernel='rbf', degree=3, $\gamma$='scale', coef0=0.0, shrinking=True, probability=False, tol=0.001, cache\_size=200, class\_weight=None, verbose=False, max\_iter=-1, decision\_function\_shape='ovr', break\_ties=False)\cite{scikit-learn-web}
					\item[2] \textbf{Nu-Support Vector Classification}\cite{chang2011libsvm}. Defaults:(nu=0.5, kernel='rbf', degree=3, $\gamma$='scale', coef0=0.0, shrinking=True, probability=False, tol=0.001, cache\_size=200, class\_weight=None, verbose=False, max\_iter=-1, decision\_function\_shape='ovr', break\_ties=False)\cite{scikit-learn-web}
					\item[3] \textbf{Complement Naive Bayes classifier}\cite{rennie2003tackling}. Defaults:($\alpha$=1.0, fit\_prior=True, class\_prior=None, norm=False)\cite{scikit-learn-web}
					\item[4] \textbf{Decision Tree Classifier}\cite{hastie2009elements}. Defaults:(riterion='gini', splitter='best', max\_depth=None, min\_samples\_split=2, min\_samples\_leaf=1, min\_weight\_fraction\_leaf=0.0, max\_features=None, random\_state=None, max\_leaf\_nodes=None, min\_impurity\_decrease=0.0, min\_impurity\_split=None, class\_weight=None, presort='deprecated', ccp\_alpha=0.0)\cite{scikit-learn-web}
					\item[5] \textbf{Nearest Centroid Classifier}\cite{tibshirani2002diagnosis}. Defaults:(metric='euclidean', shrink\_threshold=None)\cite{scikit-learn-web}
					\item[6] \textbf{Ordinary Least Squares Linear Regression}. Defaults:(fit\_intercept=True, normalize=False, copy\_X=True, n\_jobs=None)\cite{scikit-learn-web}
					\item[7] \textbf{Ridge Regression}\cite{hoerl1970ridge}. Defaults:($\alpha=1.0$, fit\_intercept=True, normalize=False, copy\_X=True, max\_iter=None, tol=0.001, solver=auto)\cite{scikit-learn-web}
					\item[8] \textbf{Kernel Ridge Regression}\cite{murphy2012machine}. Defaults: ($\alpha$=1, kernel='linear', $\gamma$=None, degree=3, coef0=1, kernel\_params=None)\cite{scikit-learn-web}
					\item[9] \textbf{least Absolute Shrinkage and Selection Operator}\cite{tibshirani1996regression}. Defaults:($\alpha$=1.0, fit\_intercept=True, normalize=False, precompute=False, copy\_X=True, max\_iter=1000, tol=0.0001, warm\_start=False, positive=False, random\_state=None, selection='cyclic')\cite{scikit-learn-web}
					\item[10] \textbf{Nu Support Vector Regression}\cite{chang2011libsvm}. Defaults:($\nu$=0.5, C=1.0, kernel='rbf', degree=3, $\gamma$='scale', coef0=0.0, shrinking=True, tol=0.001, cache\_size=200, verbose=False, max\_iter=-1)\cite{scikit-learn-web}
					\item[11] \textbf{Elastic Net Regression}\cite{zou2005regularization}. Defaults:($\alpha$=1.0, l1\_ratio=0.5, fit\_intercept=True, normalize=False, precompute=False, max\_iter=1000, copy\_X=True, tol=0.0001, warm\_start=False, positive=False, random\_state=None, selection='cyclic')\cite{scikit-learn-web}
					\item[12] \textbf{K-Means Clustering}\cite{lloyd1982least}. Defaults:(n\_clusters=8, init='k-means++', n\_init=10, max\_iter=300, tol=0.0001, precompute\_distances='deprecated', verbose=0, random\_state=None, copy\_x=True, n\_jobs='deprecated', algorithm='auto')\cite{scikit-learn-web}
					\item[13] \textbf{Mini-Batch K-Means Clustering}\cite{sculley2010web}. Defaults:(n\_clusters=8, init='k-means++', max\_iter=100, batch\_size=100, verbose=0, compute\_labels=True, random\_state=None, tol=0.0, max\_no\_improvement=10, init\_size=None, n\_init=3, reassignment\_ratio=0.01)\cite{scikit-learn-web}
					\item[14] \textbf{Density-Based Spatial Clustering of Applications with Noise}\cite{ester1996density}. Defaults:($\epsilon$=0.5, min\_samples=5, metric='euclidean', metric\_params=None, algorithm='auto', leaf\_size=30, p=None, n\_jobs=None)\cite{scikit-learn-web}
					\item[15] \textbf{ Birch Clustering}\cite{zhang1996birch}. Defaults:(threshold=0.5, branching\_factor=50, n\_clusters=3, compute\_labels=True, copy=True)\cite{scikit-learn-web}
					\item[16] \textbf{Hierarchical Agglomerative Clustering}\cite{rokach2005clustering}. Defaults:(n\_clusters=2, affinity='euclidean', memory=None, connectivity=None, compute\_full\_tree='auto', linkage='ward', distance\_threshold=None)\cite{scikit-learn-web}
					\item[*] The number of the clusters is set equal to the number of true classes
				\end{tablenotes}
			\end{threeparttable}
		}
	\end{table}

	\begin{table}\centering
		\caption{The details of the used datasets.}
		\label{tbl:data-details}
		\ra{1.3}
		\resizebox{\textwidth}{!}{
			\begin{threeparttable}
				\begin{tabular}{@{}llcccccc@{}}\toprule
					&\emph{name} & \emph{classes/targets} & \emph{samples per class} & \emph{total samples} & \emph{dimensionality} & \emph{features}\\ \midrule
					\multicolumn{2}{l}{\textbf{classification:}}\\
					&\textit{Iris} \cite{fisher1936use}\tnote{*} & 3 & [50,50,50] & 150 & 4 & real, positive\\
					&\textit{Wine} \cite{lichman2013uci}\tnote{*}& 3& [59,71,48]& 178& 13& real, positive\\
					&\textit{Breast cancer} \cite{wolberg1994machine}& 2& [212, 358]& 569& 30& real, positive\\
					&\textit{Digits} \cite{alpaydin1998cascading}\tnote{*,$\dagger$}& 10& about 180& 1797& 64& integers [0, 16]\\
					&\textit{Art. Class.}\tnote{*,1}& 3& [300,300,300]& 900& 20& real (-7.3, 8.9)\tnote{**}\\
					&\textit{Art. Moon}\tnote{*,2}& 2&[250,250]& 500& 2& real (-1.2, 2.2)\tnote{**}\\
					\multicolumn{2}{l}{\textbf{regression:}}\\
					&\textit{Boston} \cite{harrison1978hedonic}& real [5, 50] & --& 506& 13& real, positive\\
					&\textit{Diabetes} \cite{efron2004least}& integer [25, 346]& --& 442& 10& real (-0.2, 0.2)\\
					&\textit{Art. Regr.}\tnote{3}& real (-488.1, 533.2)& --& 200& 20& real (-4,4)\\
					\bottomrule
				\end{tabular}
				\begin{tablenotes}
					\small
					
					\item[*] Also used in clustering experiments.
					\item[**] In order to prevent the problem that some algorithms have with negative features, the values are normalized into [0,1].
					\item[$\dagger$] This is a copy of the test set of the UCI ML hand-written digits datasets.
					\item[1] Artificially made using \texttt{make\_classification} function of scikit-learn library \cite{scikit-learn-web}.
					\item[2] Artificially made using \texttt{make\_moons} function of scikit-learn library \cite{scikit-learn-web}.
					\item[3] Artificially made using \texttt{make\_regression} function of scikit-learn library \cite{scikit-learn-web}.
				\end{tablenotes}
			\end{threeparttable}
		}
	\end{table}
	
	\subsection{Training}
	The first set of queries pertains to training the aforementioned classification, regression, and clustering algorithms on the available datasets, and finally adding test datasets to the system. For the sake of creating a high-loaded experiment, we trained all the algorithms on all of their corresponding datasets and depicted the structure of the resulted holarchy at the end of each task as in figure~\ref{fig:exp-holarchy-all}. The figures on the left hand side column represent the structure of the holarchy without considering the model holons, and the ones on the right hand side focus solely on the interconnection between the atomic algorithm/data holons and the created models at the lowest level of the holarchy. These two views are provided mainly for the sake of clarity. Furthermore, the sub-figures at each row pertain to the holarchy after a particular task, i.e., sub-figures~\ref{fig:exp-holarchy-classification-a} and \ref{fig:exp-holarchy-classification-b} are for after training the classification algorithms, sub-figures~\ref{fig:exp-holarchy-regression-a} and \ref{fig:exp-holarchy-regression-b} are for after training the regression algorithms, sub-figures~\ref{fig:exp-holarchy-clustering-a} and \ref{fig:exp-holarchy-clustering-b} are for after running the clustering algorithms, and finally, sub-figures~\ref{fig:exp-holarchy-data-add-a} and \ref{fig:exp-holarchy-data-add-b} are for after adding test data. There are a few additional points about the figures worth noting. Firstly, in figure~\ref{fig:exp-holarchy-regression-b}, the structure is composed of two separate communities, which is because of the fact that the classification and regression algorithms do not share any dataset during the training phase. Secondly, in figure~\ref{fig:exp-holarchy-data-add-b}, there are some DataH holons in the vicinity of communities that are not connected to any other holons. These are the test datasets that are added to the holarchy and because they are not employed in any training task, they do not hold any connections to the model holons. And finally, the nodes of the structures are positioned automatically by the visualization algorithm, therefore, the algorithm sub-structures in figures~\ref{fig:exp-holarchy-clustering-a} and \ref{fig:exp-holarchy-data-add-a} are exactly the same, though they are drawn differently.  
	\begin{figure}
		\begin{subfigure}[b]{0.49\textwidth}
			\centering
			\includegraphics[page=1,width=.9\textwidth]{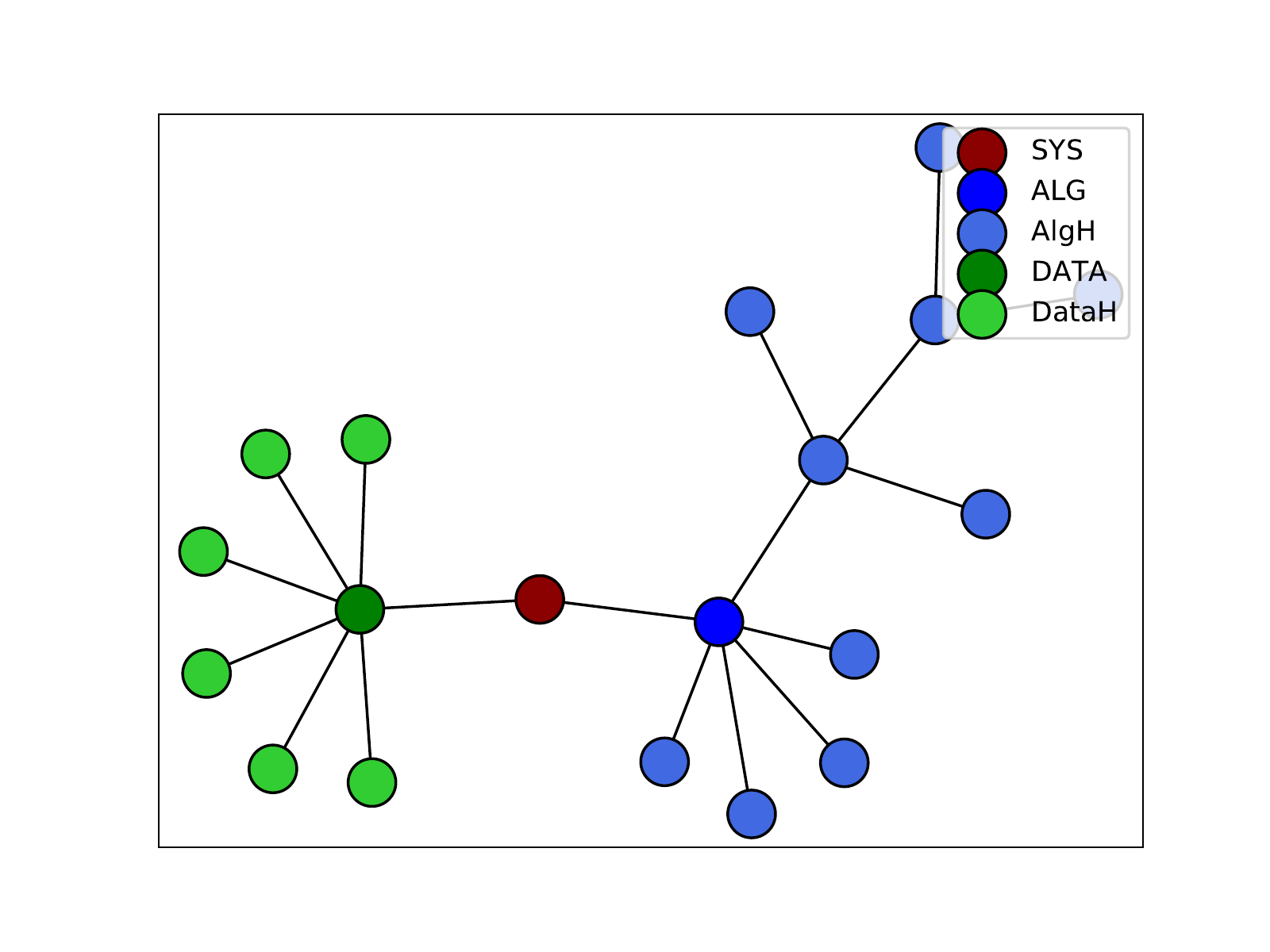}
			\vspace{-.6cm}\caption{~}
			\label{fig:exp-holarchy-classification-a}
		\end{subfigure}
		\begin{subfigure}[b]{0.49\textwidth}
			\centering
			\includegraphics[page=2,width=.9\textwidth]{figs/exps/Network-After-Classification.pdf}
			\vspace{-.6cm}\caption{~}
			\label{fig:exp-holarchy-classification-b}
		\end{subfigure}\\\vspace{-.13cm}
		\begin{subfigure}[b]{0.49\textwidth}
			\centering
			\includegraphics[page=1,width=.9\textwidth]{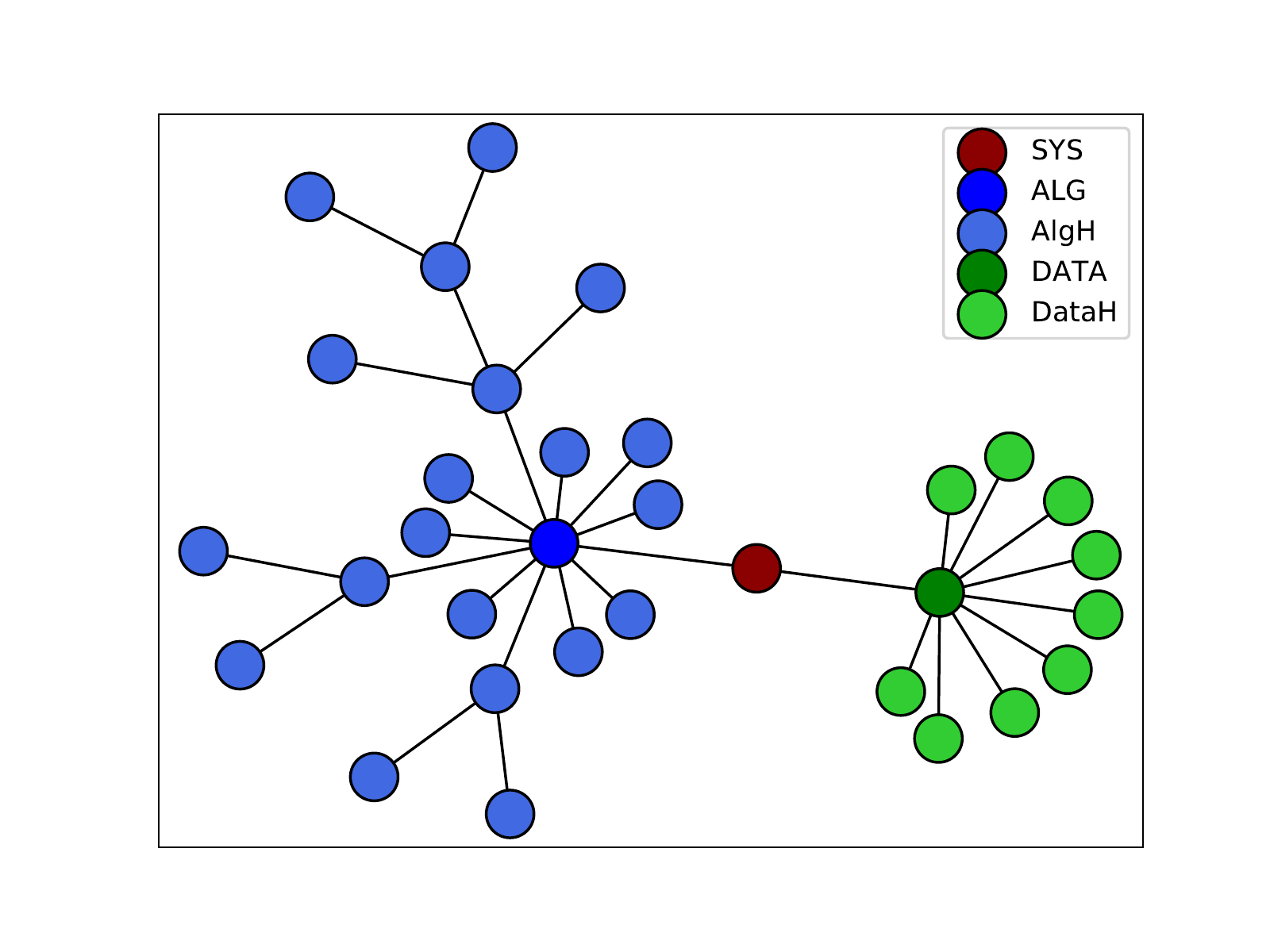}
			\vspace{-.6cm}\caption{~}
			\label{fig:exp-holarchy-regression-a}
		\end{subfigure}
		\begin{subfigure}[b]{0.49\textwidth}
			\centering
			\includegraphics[page=2,width=.9\textwidth]{figs/exps/Network-After-Regression.pdf}
			\vspace{-.6cm}\caption{~}
			\label{fig:exp-holarchy-regression-b}
		\end{subfigure}\\\vspace{-.13cm}
		\begin{subfigure}[b]{0.49\textwidth}
			\centering
			\includegraphics[page=1,width=.9\textwidth]{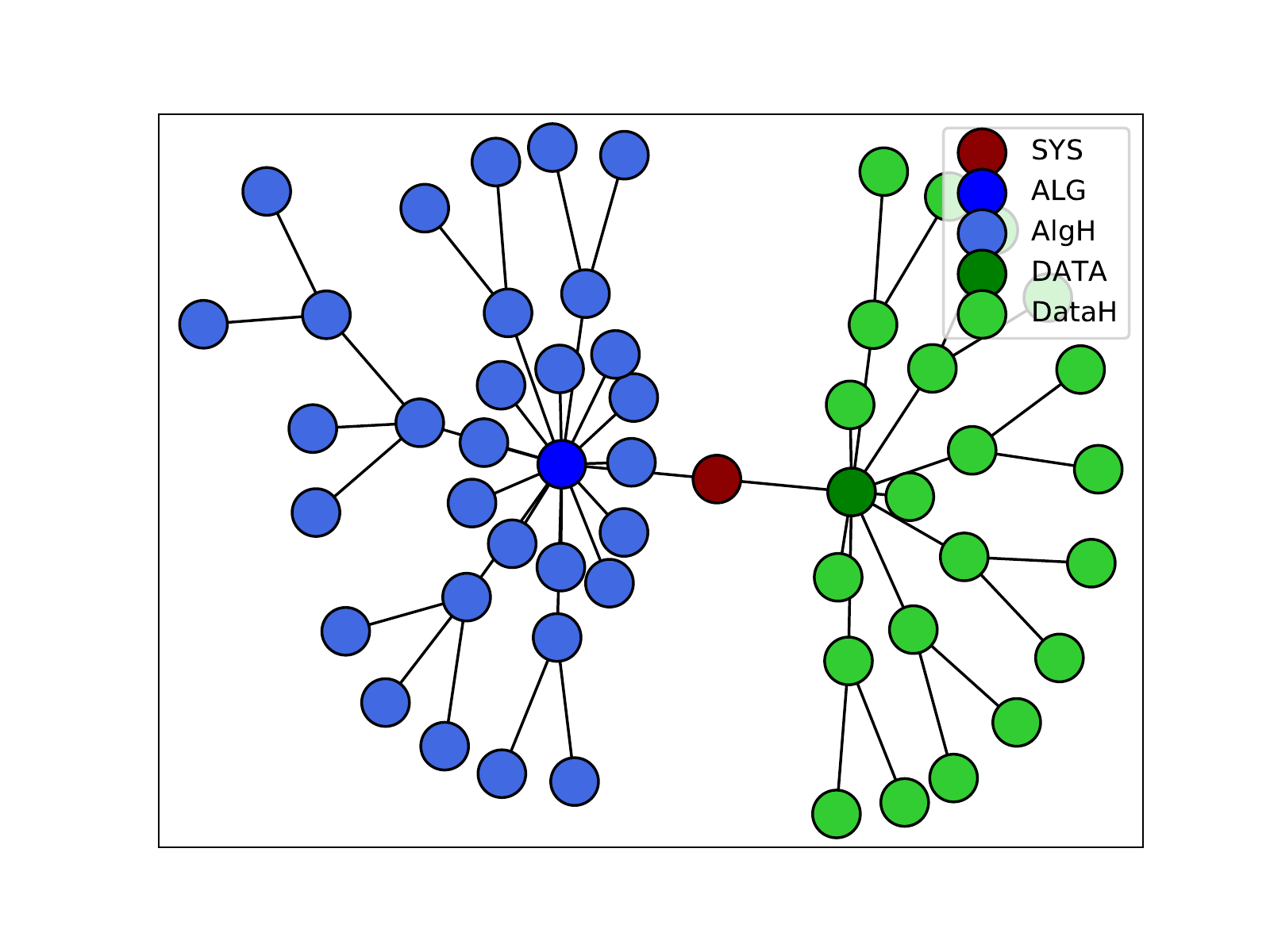}
			\vspace{-.6cm}\caption{~}
			\label{fig:exp-holarchy-clustering-a}
		\end{subfigure}
		\begin{subfigure}[b]{0.49\textwidth}
			\centering
			\includegraphics[page=2,width=.9\textwidth]{figs/exps/Network-After-Clustering.pdf}
			\vspace{-.6cm}\caption{~}
			\label{fig:exp-holarchy-clustering-b}
		\end{subfigure}\\\vspace{-.13cm}
		\begin{subfigure}[b]{0.49\textwidth}
			\centering
			\includegraphics[page=1,width=.9\textwidth]{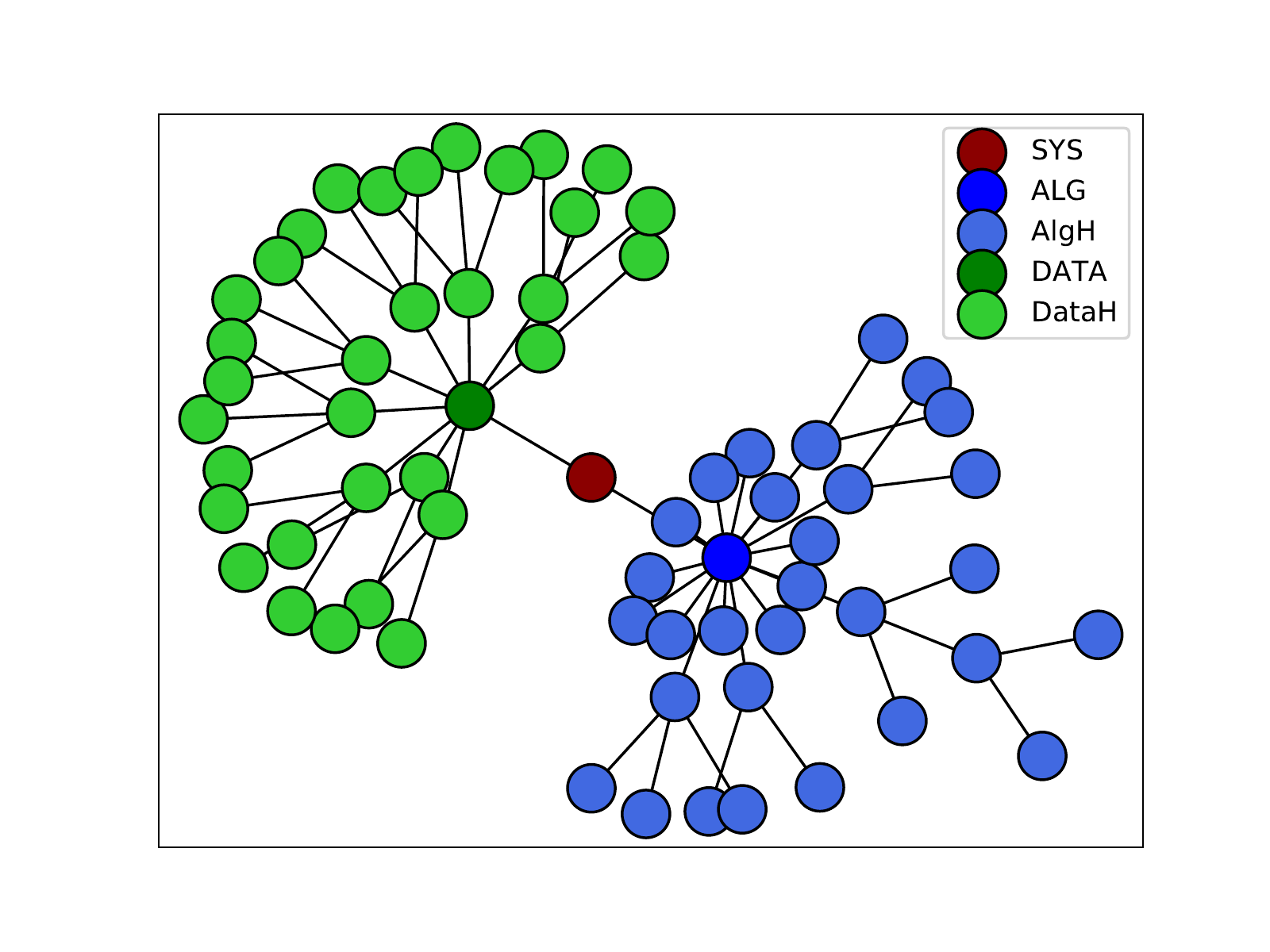}
			\vspace{-.6cm}\caption{~}
			\label{fig:exp-holarchy-data-add-a}
		\end{subfigure}
		\begin{subfigure}[b]{0.49\textwidth}
			\centering
			\includegraphics[page=2,width=.9\textwidth]{figs/exps/Network-After-Data-Add.pdf}
			\vspace{-.6cm}\caption{~}
			\label{fig:exp-holarchy-data-add-b}
		\end{subfigure}
		\caption{The holarchical (left) and the atomic level structures (right) after classification (\ref{fig:exp-holarchy-classification-a},\ref{fig:exp-holarchy-classification-b}), regression (\ref{fig:exp-holarchy-regression-a},\ref{fig:exp-holarchy-regression-b}), clustering (\ref{fig:exp-holarchy-clustering-a},\ref{fig:exp-holarchy-clustering-b}), and adding test datasets (\ref{fig:exp-holarchy-data-add-a},\ref{fig:exp-holarchy-data-add-b}).}
		\label{fig:exp-holarchy-all}
	\end{figure}
	
	The training results for the classification, regression, and clustering are plotted respectively in figures~\ref{fig:exp-train-classification}, \ref{fig:exp-train-regression}, and \ref{fig:exp-train-clustering}. Each plot illustrates the performance of training the algorithms on the available datasets according to tables~\ref{tbl:alg-details} and \ref{tbl:data-details}. In the training queries, we have used \textit{accuracy}, \textit{mean squared error}, and intrinsic \textit{fowlkes-mallows score}\cite{fowlkes1983method} as the measures of performance for classification, regression, and clustering tasks respectively. Each plot in each of the aforementioned figures pertain to the results reported for running multiple algorithms on a specific dataset, and reports both the performance measures(in red) and the amount of time (in blue) the training procedure for each algorithm has taken. Please note that, for the sake brevity, the horizontal axes of the plots are marked after the symbols used in table~\ref{tbl:alg-details}. The reported results provide various analytical and comparative insights about the training procedure. For instance, in figure~\ref{fig:exp-train-classification}, the classification algorithms \emph{A04} and \emph{A07} are among the most efficient and effective algorithms trained on the \emph{iris} dataset. This is because of the achieving the highest accuracy in the lowest amount of time. Likewise, algorithm \emph{A03} is the least efficient and effective algorithm trained on the \emph{artificial moon} dataset due to its relatively high training time and lowest accuracy score. Similar information can also be obtained from studying the reported the results for the other types of conducted machine learning tasks. For example, algorithms \emph{A11} and \emph{A13} are among the best regression algorithms that are included in the platform because of their relatively lower error and training time on all of the available datasets (see figure~\ref{fig:exp-train-regression}). On the other hand, the clustering algorithm \emph{A24} is one of the fast algorithms that yields a relatively higher performance measure on most datasets. Please, note that all the parameters during training, evaluating their results, etc. are chosen based on no specific reason but as an example to demonstrate the capability of the proposed platform.

	\begin{figure}[!htbp]
		\foreach \x in {1,...,6}{
			\begin{subfigure}[b]{0.49\textwidth}
				\centering
				\includegraphics[page=\x,width=.9\textwidth]{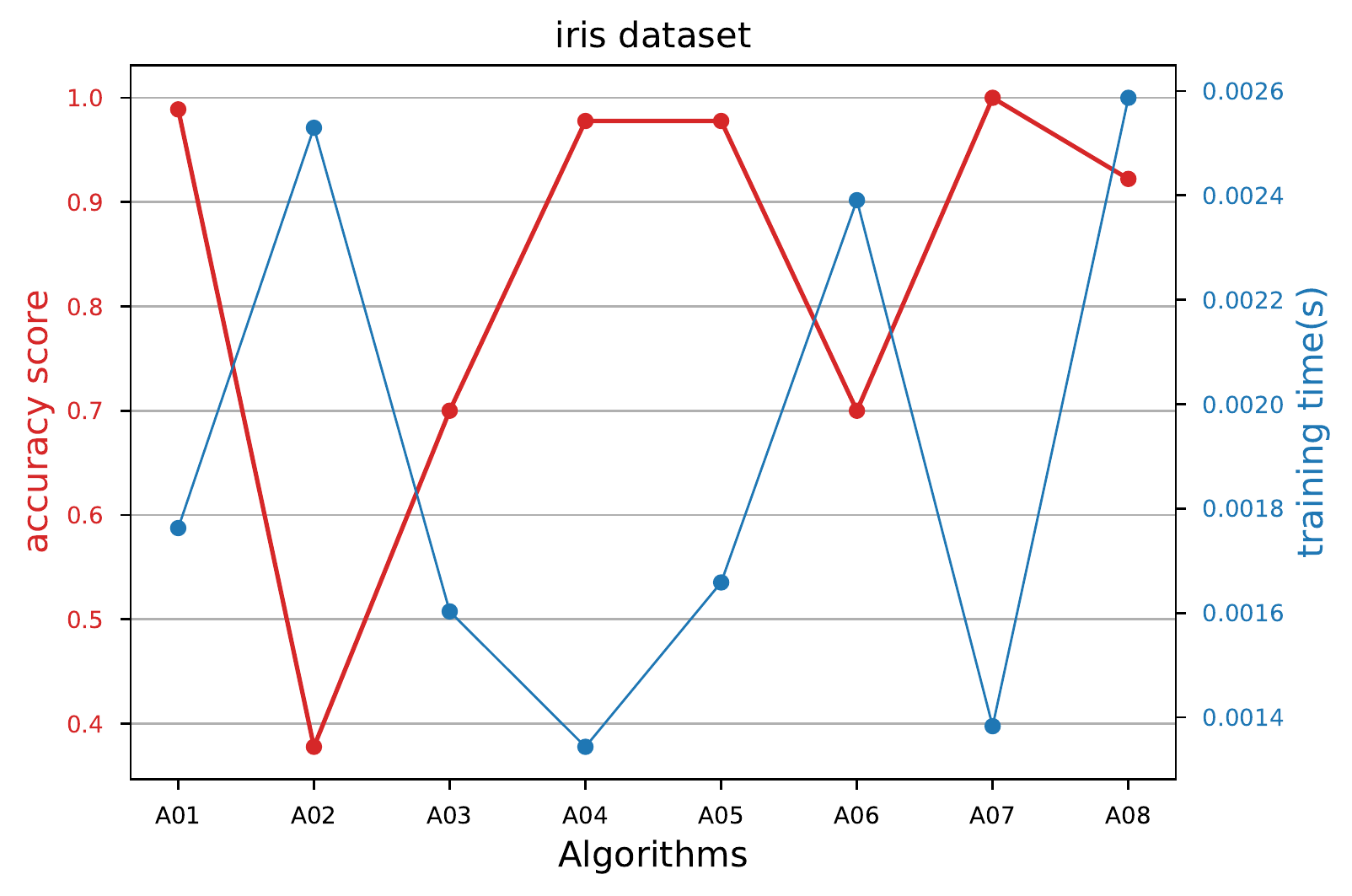}
				\label{fig:exp-train-classification-\x}
			\end{subfigure}    
		}
		\caption{Accuracy and time of training each classification algorithm on a specific dataset.}
		\label{fig:exp-train-classification}
	\end{figure}
	
	\begin{figure}[!htbp]
		\foreach \x in {1,...,3}{
			\begin{subfigure}[b]{0.49\textwidth}
				\centering
				\includegraphics[page=\x,width=.9\textwidth]{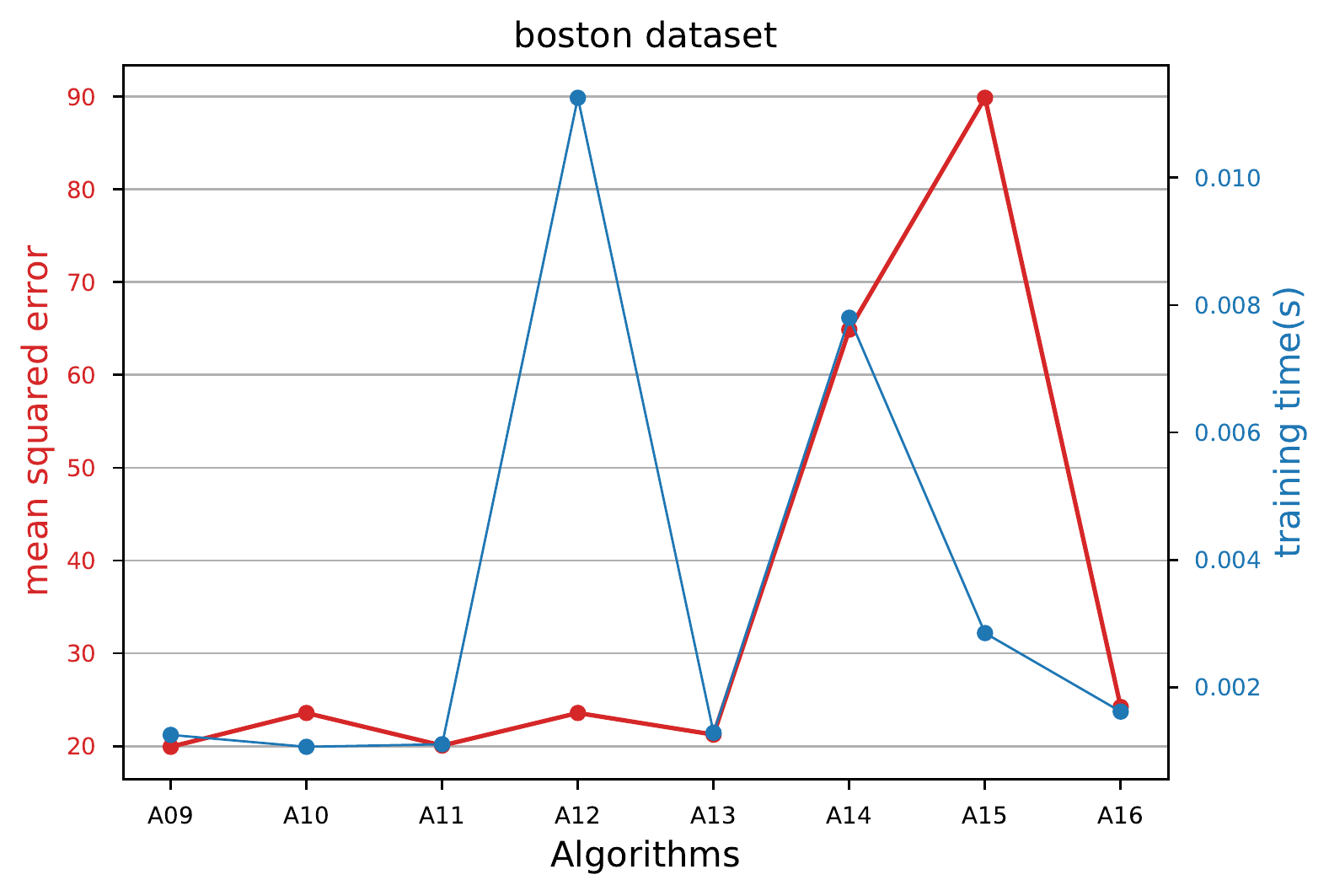}
				\label{fig:exp-train-regression-\x}
			\end{subfigure}    
		}
		\caption{Mean squared error and time of training each regression algorithm on a specific dataset.}
		\label{fig:exp-train-regression}
	\end{figure}
	
	\begin{figure}[!htbp]
		\foreach \x in {1,...,6}{
			\begin{subfigure}[b]{0.49\textwidth}
				\centering
				\includegraphics[page=\x,width=.9\textwidth]{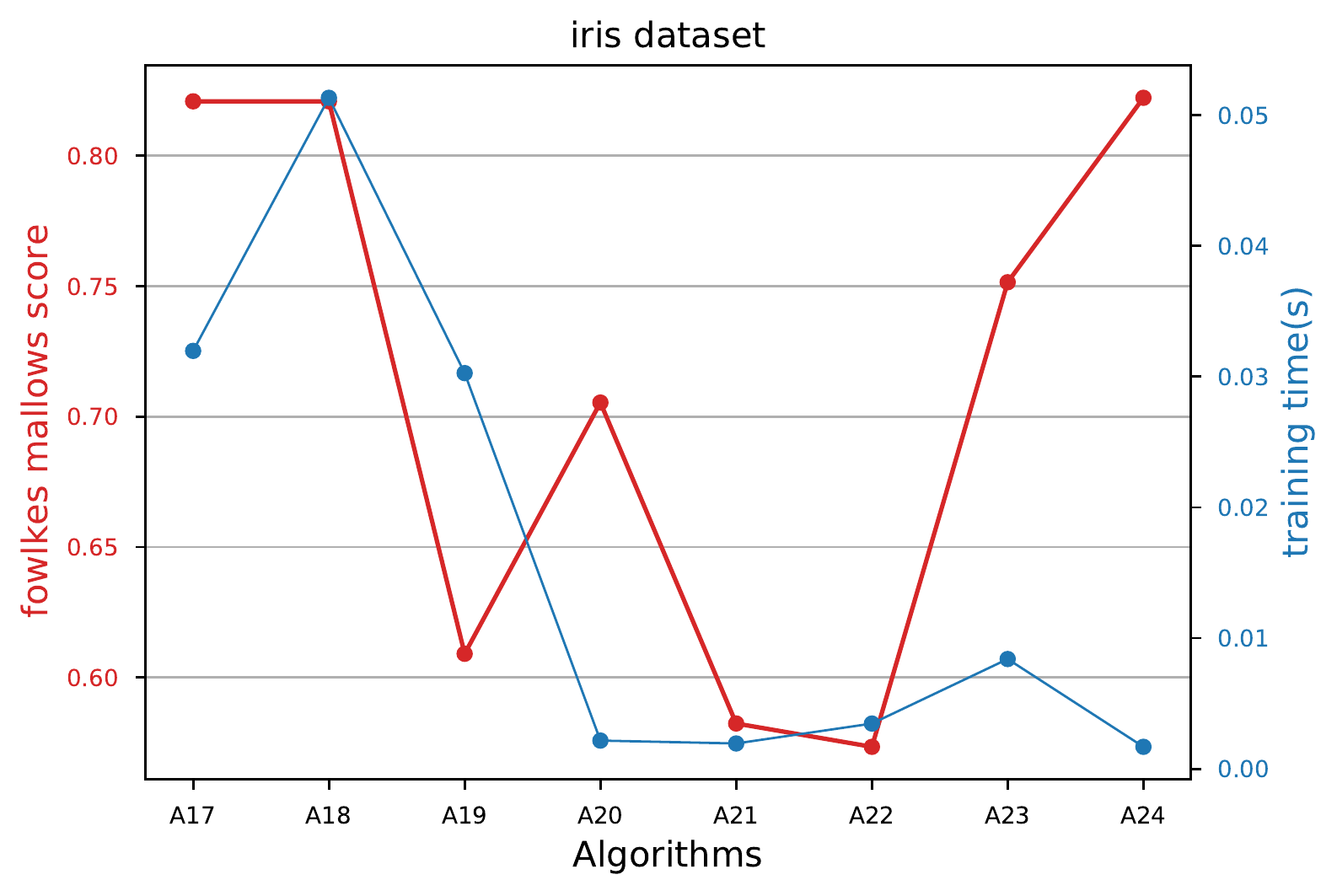}
				\label{fig:exp-train-clustering-\x}
			\end{subfigure}    
		}
		\caption{Fowlkes Mallows score and time of running each clustering algorithm on a specific dataset.}
		\label{fig:exp-train-clustering}
	\end{figure}
	
	\subsection{Testing}
	The algorithms and datasets have been selected is such a way that we could carry out complex testing queries on the platform. This section demonstrates some of the testing capabilities that the proposed platform provides though its flexibility is not limited to only those listed here. The demonstrations are on classification, regression, and clustering tasks and we still use plotting as a compact way of exhibiting the results. It is worth noting that the difference between testing and training a clustering algorithm in this paper is that training allows configuring the algorithm through its parameters, whereas testing merely looks for the algorithm based on the provided criteria and runs it on the specified data.
	
	In the first set of testing queries, the user agent requests the results of testing all the algorithms that have kernel parameter with value 'rbf' on all the inserted test datasets. Please note that based on the assumption that we have made earlier, the algorithms will be tested on the datasets that they have been trained before. Hence, we expect to see the results accordingly. In such a query, we set $\Lambda=\{(*,\{(\text{kernel, rbf})\}\}, \Delta=\{(*,\{(\text{type, test})\})\}$, and $O=$ \{format=plot, measures=[accuracy, mean square error]\}. The results are depicted in figure~\ref{fig:test1}.  As it can be seen, platform has correctly determined the proper algorithms with their corresponding measures for each dataset. For instance, among 8 regression algorithms that are defined in the system, HAMLET has correctly identified \emph{A14} and \emph{A15} as the ones that have ``rbf'' kernel. Similarly, algorithms \emph{A03}, \emph{A04}, and \emph{A05} are the only classification algorithms that match the criteria of the query. Additionally, the platform has successfully tested the algorithms on the datasets that they had been trained on despite the fact that we have not explicitly specified which datasets should be used for the tasks. Such capability of the platform in automatically determining how algorithms, datasets, and measures should be used side by side, together with the analytical and comparative insights that it provides is particularly noteworthy for the cases that users are unaware of the available resources and try to let the system make proper choices based on the most recent resources.
	
	\begin{figure}[!htbp]
		\foreach \x in {1,...,9}{
			\begin{subfigure}[b]{0.325\textwidth}
				\centering
				\includegraphics[page=\x,width=\textwidth]{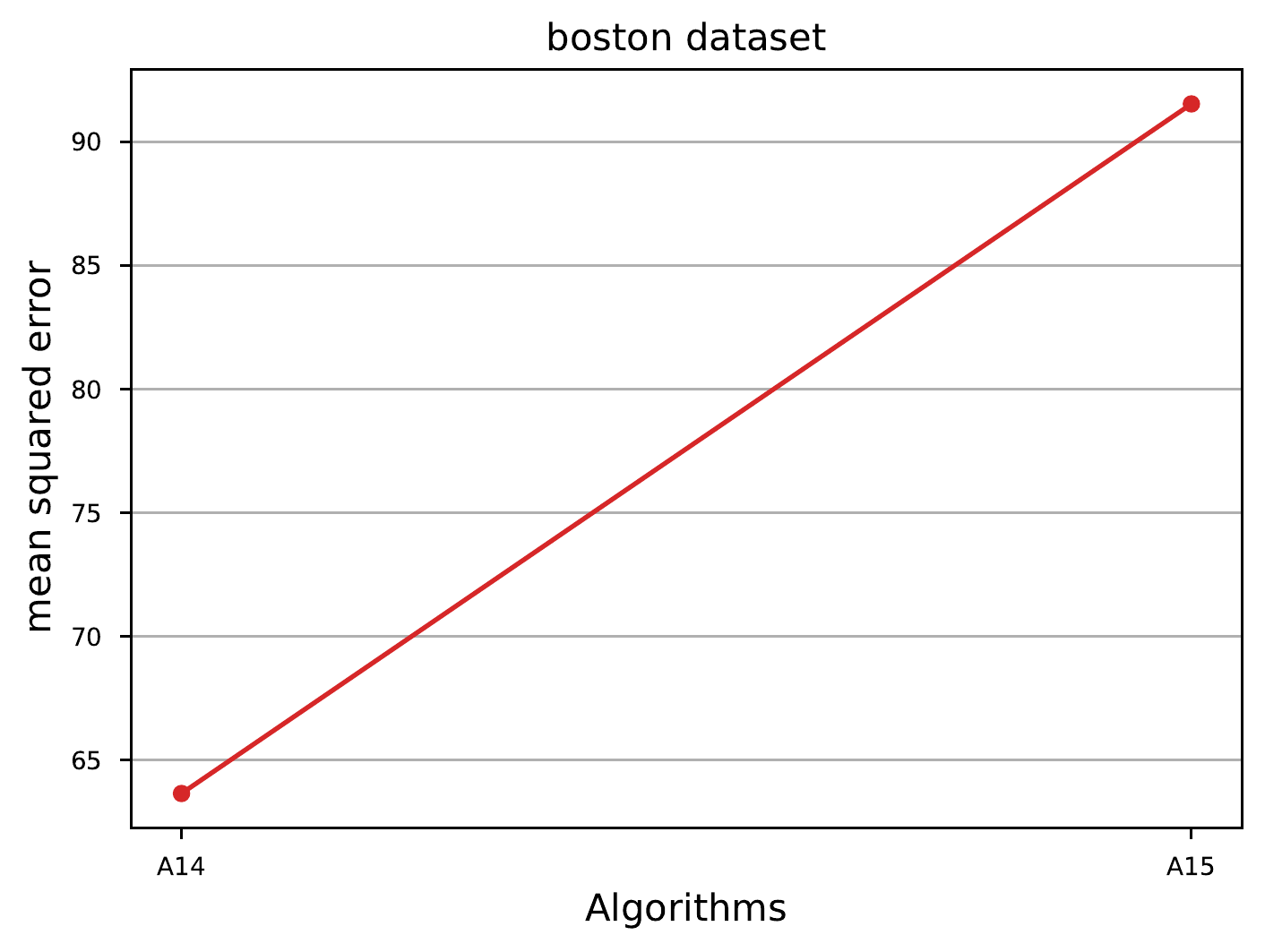}
				\label{fig:test1-\x}
			\end{subfigure}    
		}
		\caption{The results of testing all classification and regression algorithms that have parameter \textit{kernel} equal to \textit{rbf} on all proper datasets, in terms of accuracy and mean squared error.}
		\label{fig:test1}
	\end{figure}
	
	The second experimented query tries to narrow the testing procedure and determine the results of applying all the SVC algorithms on the breast cancer dataset, based on accuracy and area under the ROC curve \cite{fawcett2006introduction} scores. The query configuration for this test is as follows: $\Lambda=\{(\text{SVC},\{*\})\}, \Delta=\{(\text{breast cancer},\{(\text{type, test})\})\}$, and $O=$ \{format=plot, measures=[accuracy, roc-auc]\}; and the generated results are plotted in figure~\ref{fig:test2}. As it can be seen, the subordinate agents of HAMLET have determined algorithms \emph{A01}, \emph{A02}, \emph{A03}, and \emph{A04} as the ones that implement SVC, and reported their test performances using two measures. Please recall that the way the results are reported -- a separate plot for each performance measure in this example -- is decided by the auxiliary visualization agent, and is not enforced by HAMLET.
	
	\begin{figure}[!htbp]
		\foreach \x in {1,2}{
			\begin{subfigure}[b]{0.49\textwidth}
				\centering
				\includegraphics[page=\x,width=\textwidth]{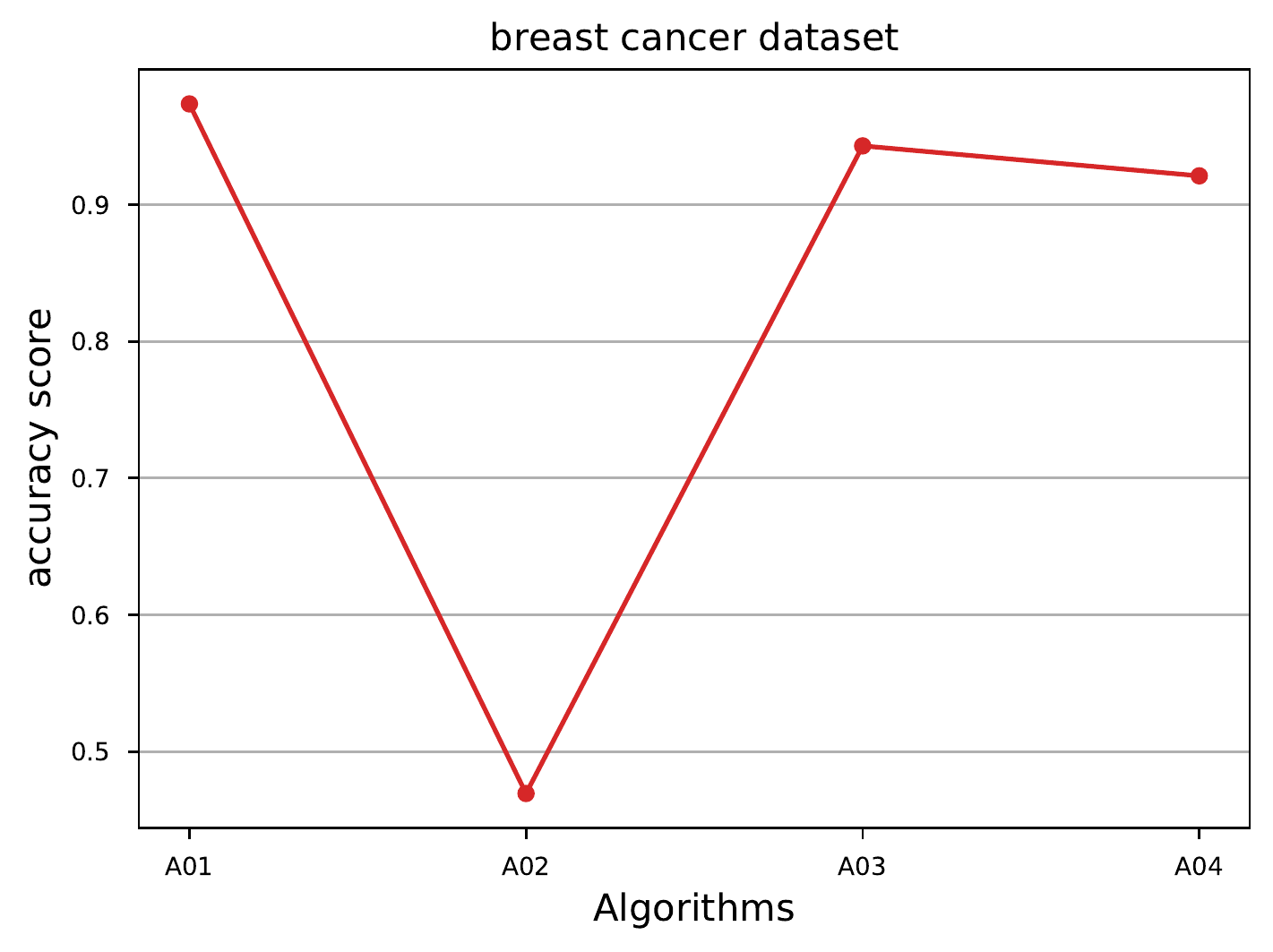}
				\label{fig:test2-\x}
			\end{subfigure}    
		}
		\caption{The results of testing all SVC algorithm with any parameter on the breast cancer dataset, in terms of accuracy and area under the ROC curve score.}
		\label{fig:test2}
	\end{figure}
	
	The third experimental testing query tries to evaluate all the trained algorithms on the artificial moon dataset based on three measures of accuracy, mean squared error, and homogeneity score \cite{rosenberg2007v}. This test is conducted based on the following settings: $\Lambda=\{(*,\{*\})\}, \Delta=\{(\text{moon},\{(\text{type, test})\})\}$, and $O=$ \{format=plot, measures=[accuracy, mean squared error, homogeneity]\}. Please note that we have hypothetically considered that we are not sure whether the moon datasets is of classification or regression type, so we have specified a measure from each machine learning task and let the platform to choose the proper one itself. The obtained results are depicted in figure~\ref{fig:test3}. As it can be seen in the generated plots, only accuracy and homogeneity scores are reported due to the fact that artificial moon is a nominal dataset suitable for classification/clustering tasks. Furthermore, each of the plots in this figure properly reports the testing results for a different set of algorithms using a performance measure that suits their type. Strictly speaking, the plot on the left hand side demonstrates the accuracy of the classification algorithms that HAMLET has examined for this query, and the diagram on the right hand side reports the homogeneity score of the clustering algorithms that match the criteria of the test query.
	
	\begin{figure}[!htbp]
		\foreach \x in {1,2}{
			\begin{subfigure}[b]{0.49\textwidth}
				\centering
				\includegraphics[page=\x,width=\textwidth]{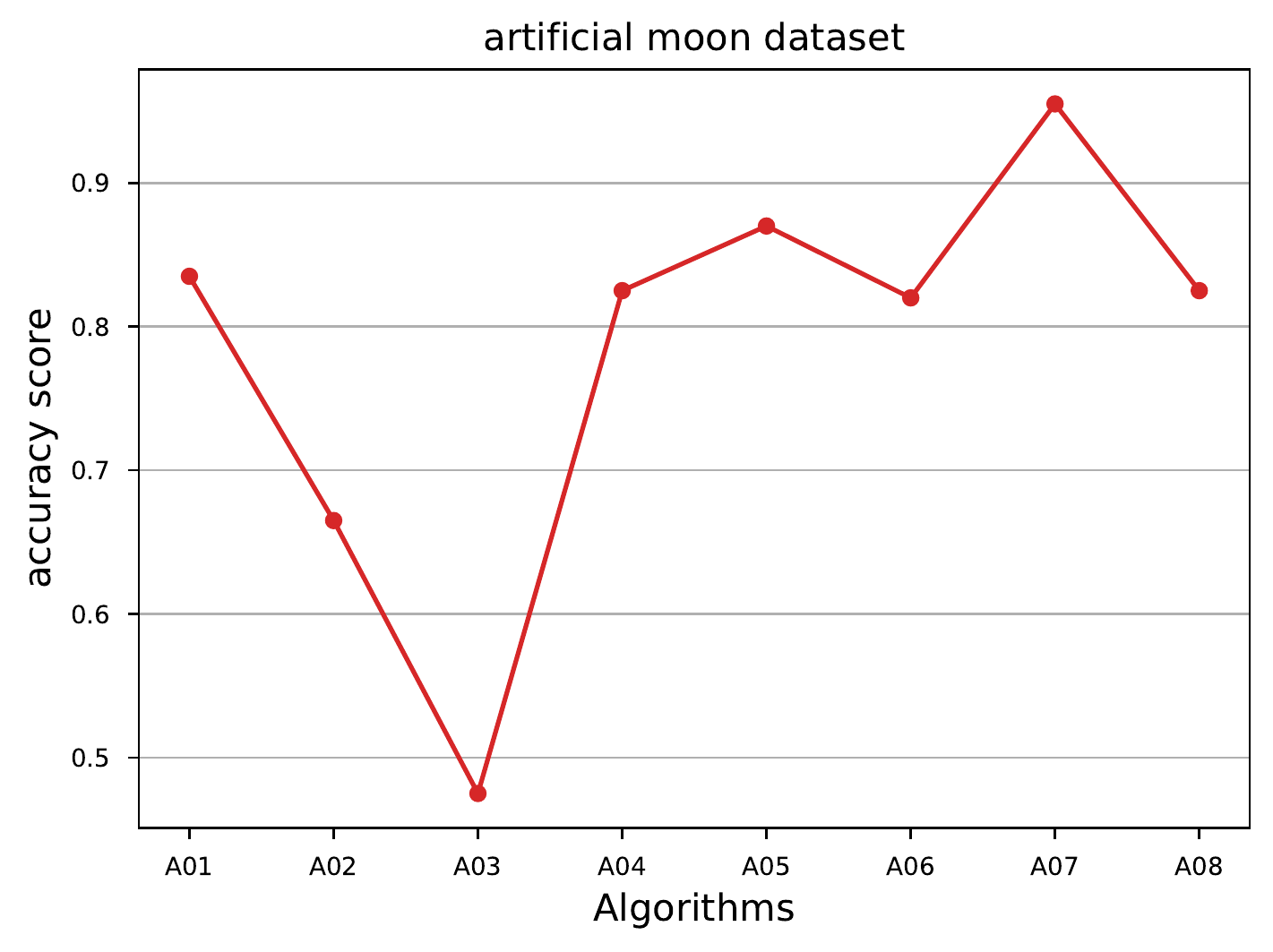}
				\label{fig:test3-\x}
			\end{subfigure}    
		}
		\caption{The results of testing all algorithms with any parameter on the artificial moon dataset, in terms of accuracy, mean square error and/or clustering homogeneity score.}
		\label{fig:test3}
	\end{figure}
	
	Finally, the fourth experiment pertains to testing all of the available algorithms on all of the available test datasets, in terms of the same three measures: accuracy, mean squared error, and homogeneity. This test employs all of the holons in the holarchy to process the query concurrently and based on their capabilities and skills report the results according to the configurations of the query. This query uses the following settings: $\Lambda=\{(*,\{*\})\}, \Delta=\{(*,\{(\text{type, test})\})\}$, and $O=$ \{format=plot, measures=[accuracy, mean squared error, homogeneity]\}; and figure~\ref{fig:test4} illustrates the outcomes. This test shows how HAMLET can be used to comprehensively explore all the available models using a single query. The plots are arranged based on the type of the learning/mining models that have responded to the query, with the regression results in the first row, the classification results in the second and third rows, and finally, the clustering outcomes in the last two rows of the figure. Recalling the number of resources we have inserted and trained, one can easily validate that the results demonstrated in this figure are complete. That is, there is no algorithm/dataset that had been trained before but missed here. This is worth noting that this query deliberately uses a clustering measure, i.e. homogeneity score, from the one we have used before, i.e. fowlked mallows (see figure~\ref{fig:exp-train-clustering}), to demonstrate how the agents representing the models appropriately respond to the query based on their capabilities.
	
	\begin{figure}[!htbp]
		\foreach \x in {1,...,15}{
			\begin{subfigure}[b]{0.325\textwidth}
				\centering
				\includegraphics[page=\x,width=\textwidth]{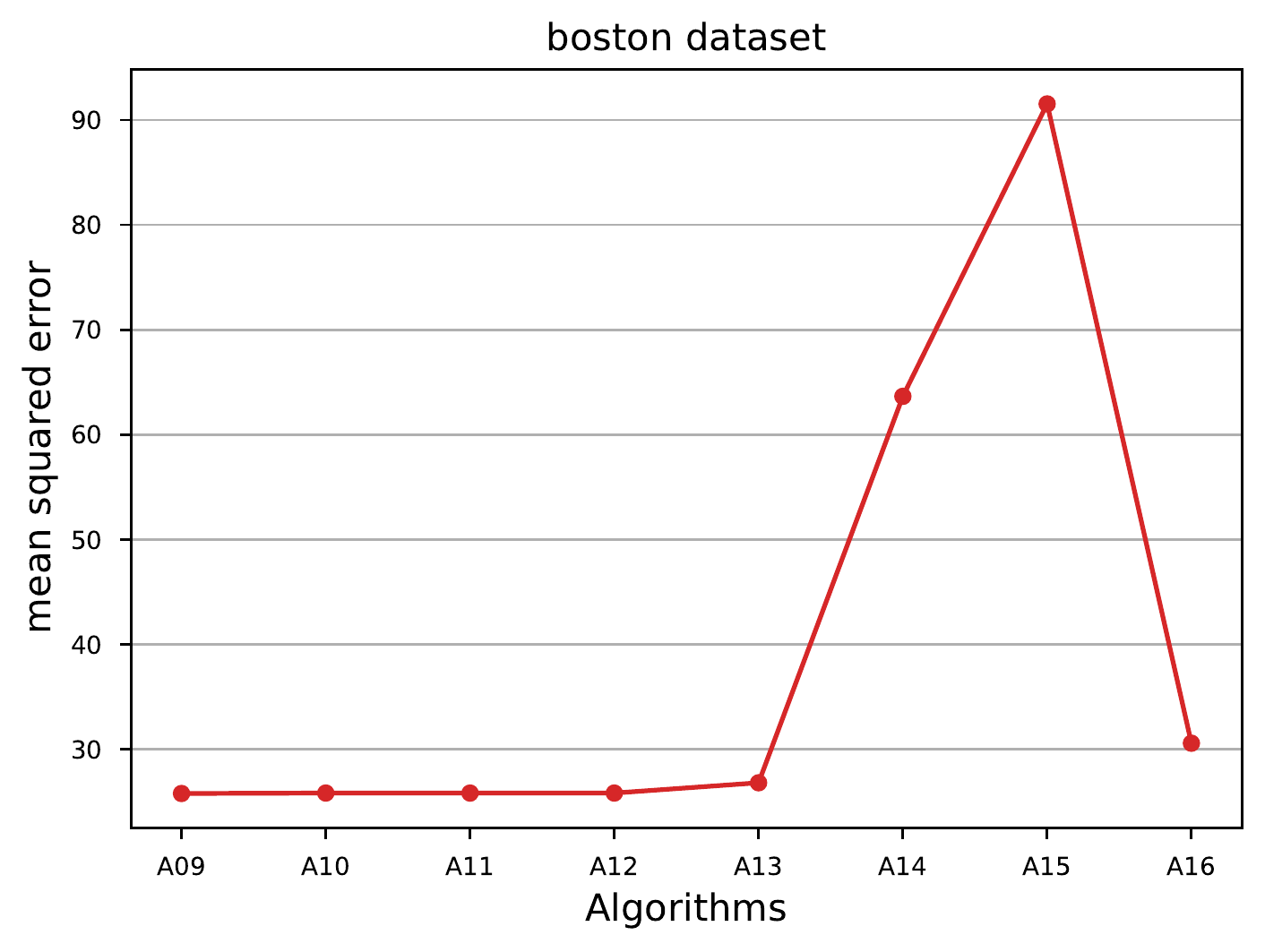}
				\label{fig:test4-\x}
			\end{subfigure}    
		}
		\caption{The results of testing all algorithms with any parameter on all the test dataset, in terms of accuracy, mean square error, and/or clustering homogeneity score.}
		\label{fig:test4}
	\end{figure}
	
	The presented experiments demonstrate the flexibility and capabilities of the proposed platform on performing machine learning tasks. Please note that the platform does not make any optimization on the parameters of the results of the tasks, except the ones that are already employed by each algorithm. Consequently, any poor performance is solely the result of poor underlying utilized algorithms and not because of the platform. Furthermore, as mentioned before, the platform does not enforce any particular reporting procedure. The visualization that we did through the provided queries is merely an example that we chose by adding an appropriate visualization agent to the platform to process the results provided to the system. 
	
	At the end, in order to provide a better understanding about the usefulness of the information that HAMLET can provide for a query, we sent a request to test all the available algorithms on all the available classification datasets and to report accuracy and homogeneity scores as the measures. This example endeavors to exhibit the analytical possibilities that our platform provides based on a selected set of performance measures. Table~\ref{tbl:all-algs-class} summarizes the obtained final results after applying some textual decorations to highlight the important aspects. Please note that, like before, we have used the algorithm and data names from tables~\ref{tbl:alg-details} and \ref{tbl:data-details} for the sake of consistency. The underlined boldface numbers specify the maximum performance that corresponding algorithms have achieved on all datasets. For instance, running the classification algorithm \emph{A01} on all the available datasets, its highest score (\textbf{0.975}) is achieved on the \emph{digit} dataset. The color highlights, on the other hand, accentuate the best algorithms, in terms of the specified performance measures, for conducting a classification or clustering task on a specific dataset. We have used different colors for classification and clustering tasks to help distinguish them from each other. The following are some of the example intuitions that can be made based on the results:
	\begin{itemize}
		\item Algorithm \emph{A01} is the best available classification algorithm for the \emph{breast cancer} dataset.
		\item The majority of the classification algorithms, i.e. A01-A05, have achieved their highest performances on the \emph{digit} dataset.
		\item The \emph{A19} clustering algorithm has performed better than the other algorithms, in terms of the number of the datasets that it resulted the highest homogeneity score.
		\item The \emph{make classification} dataset has been the most challenging dataset for both available classification and clustering algorithms. This because of the relatively low scores reported.
	\end{itemize}
	
	\begin{table}[h]
		\centering
		\caption{The results of testing all possible algorithms on the classification datasets.}
		\label{tbl:all-algs-class}
		\begin{tabular}{rcccccccccccccccc}
			\toprule
			{} & {} & \multicolumn{6}{c}{Dataset} \\\cmidrule(r){3-8}
			{Measure} & {Algorithm}  & {breast cancer} & {digits} & {iris} & {art. class.} & {art. moon} & {wine} \\
			\midrule
			\multirow[c]{8}{*}{\rotatebox[origin=c]{90}{\makecell{classification\\ accuracy}}} & A01 & \backgroundcolorlightgreen 0.974 & \bf \underline {0.975} & \backgroundcolorlightgreen 0.967 & 0.556 & 0.835 & \backgroundcolorlightgreen 0.958\ \\
			& A02 & 0.469 & \bf \underline {0.900} & 0.267 & 0.314 & 0.665 & 0.181 \\
			& A03 & 0.943 & \backgroundcolorlightgreen \bf \underline {0.993} & 0.600 & 0.314 & 0.475 & 0.708 \\
			& A04 & 0.921 & \bf \underline {0.987} & 0.933 & 0.517 & 0.825 & 0.750 \\
			& A05 & 0.886 & \bf \underline {0.961} & 0.917 & \backgroundcolorlightgreen 0.589 & 0.870 & 0.833 \\
			& A06 & \bf \underline {0.912} & 0.764 & 0.617 & 0.481 & 0.820 & 0.708 \\
			& A07 & 0.908 & 0.811 & 0.950 & 0.556 & \backgroundcolorlightgreen \bf \underline {0.955} & 0.931 \\
			& A08 & \bf \underline {0.904} & 0.897 & 0.867 & 0.478 & 0.825 & 0.736 \\\midrule\midrule
			\multirow[c]{8}{*}{\rotatebox[origin=c]{90}{\makecell{clustering\\homogeneity}}} & A17 & 0.476 & \bf \underline {0.695} & 0.686 & 0.066 & 0.310 & 0.435 \\
			& A18 & 0.476 & \bf \underline {0.721} & 0.686 & \backgroundcolorlightblue 0.071 & 0.310 & 0.435 \\
			& A19 & \backgroundcolorlightblue 0.727 & 0.573 & \backgroundcolorlightblue \bf \underline {0.902} & 0.071 & \backgroundcolorlightblue 0.823 & \backgroundcolorlightblue 0.536 \\
			& A20 & 0.000 & 0.000 & \bf \underline {0.536} & 0.009 & 0.000 & 0.000 \\
			& A21 & 0.000 & 0.000 & \bf \underline {0.111} & 0.000 & 0.000 & 0.000 \\
			& A22 & \bf \underline {0.000} & 0.000 & -0.000 & 0.000 & 0.000 & 0.000 \\
			& A23 & \bf \underline {0.696} & 0.323 & 0.652 & 0.043 & 0.000 & 0.432 \\
			& A24 & 0.464 & \backgroundcolorlightblue \bf \underline {0.763} & 0.610 & 0.013 & 0.420 & 0.432 \\
			\bottomrule
		\end{tabular}
	\end{table}

	\clearpage
	\section{Conclusion}
	In this paper, we presented a hierarchical multi-agent platform for the management and execution of data-mining tasks. The proposed solution models a machine learning problem as a hypergraph and employs autonomous agents to cooperatively process and answer training and testing queries based on their innate and learned capabilities. Using an agent-based approach for the problem, on one hand, facilitates the deployment of the system on distributed infrastructures and computer networks, and on the other, provides the researchers with the flexibility and freedom of adding their own machine learning algorithms or datasets with customized behaviors. The platform provides numerous potential benefits for both research and deployment purposes. It can be used by research communities to share, maintain, and have access to the most recent machine learning problems and solutions, such that they are able to analyze new data, compare the performance of new solutions with the state of the art. It can also be utilized in devising new solutions by letting the designers experiment with different versions of their methods distributedly and in parallel in order to understand the behavior of their algorithms under different configurations and select the most appropriate one accordingly.
	
	We have assessed the proposed platform both theoretically and empirically. By means of a set of theorems and lemmas, we proved that the agent-based solution is sound and complete. That is, given a machine learning query, it always returns the correct answer whenever one exists, and warns appropriately otherwise. We have also analyzed its performance in terms of time complexity and space requirements. According to the discussions, our proposed method requires polynomial time and memory to respond to training and testing queries in the worst case. Furthermore, we designed and carried out a set of experiments to show the flexibility and capabilities of the proposed agent-based machine learning solution. We used 24 classification, regression, and clustering algorithms and applied them to 9 real and artificial datasets. The results of both training and testing queries, plotted by the system, demonstrated its correctness together with how a user can perform single and batch queries to extract the existing knowledge.  
	
	This paper proposed the foundations of the suggested agent-based machine learning platform and can be extended in various ways to support new applications and scenarios. At its presented state, tasks such as visualization and preprocessing are managed and conducted by a separate unit and the data agents respectively, whereas they can be performed by separate hierarchies in order to make sophisticated results and data preparation services available to the system. In order to support sophisticated algorithms and analyses, the platform can also be expanded to support machine learning pipe-lined tasks through more horizontal cooperative interactions between the agents at different levels. In the presented version, the structure grows because of the training process and adding new algorithms/datasets. This dynamic growth property can be enhanced even more by letting already-built substructures merge together. Last but not least, the dynamic behavior of the platform can be improved by exclusively handling abnormal events such as changes to the structure because of agent permanent failures. We are currently working on these extensions and suggest them as future work.

\bibliographystyle{unsrt}
\bibliography{main}

\end{document}